 \renewcommand*{\backrefalt}[4]{%
    \ifcase #1%
     \or Cited on page:~#2%
     \else Cited on pages:~#2%
    \fi%
    }
\newtheorem{condition}{Condition}
\titlespacing*{\paragraph}{0pt}{0.5ex plus 0.2ex minus 0.1ex}{1em}
\titlespacing*{\section}{0pt}{1.0ex plus 0.5ex minus 0.2ex}{0.8ex}
\titlespacing*{\subsection}{0pt}{0.8ex plus 0.4ex minus 0.2ex}{0.6ex}
\patchcmd{\@begintheorem}{\trivlist}{\trivlist\itemsep0pt \parsep0pt \topsep2pt}{}{}
\title{Establishing Linear Surrogate Regret Bounds for Convex Smooth Losses via Convolutional Fenchel--Young Losses}
\author{
Yuzhou Cao$^{1}$~~~Han Bao$^{2}$~~~Lei Feng$^{3}\thanks{Corresponding author.}$~~~~Bo An$^{1,4}$\\
~\\
{\centering $^{1}$ College of Computing and Data Science, Nanyang Technological University}\\
{\centering $^{2}$ The Institute of Statistical Mathematics}\\
{\centering $^{3}$  School of Computer Science and Engineering, Southeast University}\\
{\centering $^{4}$ Skywork AI}\\
$\mathtt{yuzhou002@e.ntu.edu.sg}~~~~~\mathtt{bao.han@ism.ac.jp}$\\  $\mathtt{fenglei@seu.edu.cn}~~~~~\mathtt{boan@ntu.edu.sg}$\\
}
\begin{document}

\maketitle
 
\begin{abstract}%
Surrogate regret bounds, also known as excess risk bounds,
bridge the gap between the convergence rates of surrogate and target losses. The regret transfer is lossless if the surrogate regret bound is linear.
While convex smooth surrogate losses are appealing in particular due to the efficient estimation and optimization, 
the existence of a trade-off between the loss smoothness and linear regret bound has been believed in the community.
Under this scenario, the better optimization and estimation properties of convex smooth surrogate losses may inevitably deteriorate after undergoing the regret transfer onto a target loss.
We overcome this dilemma for arbitrary discrete target losses
by constructing a convex smooth surrogate loss,
which entails a linear surrogate regret bound composed with a tailored
prediction link. 
The construction is based on Fenchel--Young losses generated by the \textit{convolutional negentropy},
which are equivalent to the infimal convolution of a generalized negentropy and the target Bayes risk.
Consequently, the infimal convolution enables us to derive a smooth loss while maintaining the surrogate regret bound linear.
We additionally benefit from the infimal convolution to have a consistent estimator of the underlying class probability.
Our results are overall a novel demonstration of how convex analysis penetrates into optimization and statistical efficiency in risk minimization.

\end{abstract}

\section{Introduction}
\label{section1}
The risk of a machine learning model is often measured by the expectation of a target loss $\ell$ that quantifies the error between the model prediction $t$ and a natural label $y$. 
However, minimizing the target risk over a dataset is often computationally hard
because a target prediction problem is usually discretely structured, 
including multiclass, multilabel, top-$k$ prediction problems. 
For this reason, a tractable surrogate risk induced by a \textit{surrogate loss} $L(\bm{\theta},y)$ serves as an essential proxy
with a score $\bm{\theta}\in\mathbb{R}^{ d }$.
The resulting surrogate optimization is no longer discretely constrained.

An ideal surrogate loss should be convex, smooth (or entailing a Lipschitz continuous gradient), and calibrated toward a given target prediction problem.
Convexity and smoothness have been fundamental both in optimization and statistical estimation---%
indeed, classical optimization theory reveals that convex smooth functions can be optimized with first-order methods more efficiently than non-smooth functions~\citep{yuri,SAG}.
In addition, several studies have demonstrated that the convexity and smoothness can further enhance fast rates in the risk estimation~\citep{smooth1,smooth2}.
Meanwhile, calibration is regarded as a minimal requirement on surrogate losses,
ensuring that the surrogate risk minimization leads to the target risk minimization~\citep{06_calibration,steinwart2007compare}.%
\footnote{
Readers should distinguish this calibration property from calibrated prediction~\citep{foster_calibration}.
}
To establish a relationship between surrogate and target risks, \textit{surrogate regret bounds} play a crucial role.
Therein the regret (or the suboptimality) of a target loss is controlled by that of a surrogate loss
through a non-decreasing rate function $\psi:\mathbb{R}_{\geq 0}\to\mathbb{R}_{\geq 0}$.
A surrogate regret bound can be informally written as follows:
for any score vector $\bm\theta\in\mathbb{R}^ d $ and a class distribution $\bm\eta$,
\begin{align}\label{informal}
\mathtt{Regret}_{\ell}(\varphi(\bm{\theta}),\bm\eta)\leq\psi\left(\mathtt{Regret}_{L}(\bm{\theta},\bm\eta)\right),
\end{align}
where $\varphi$ is a prediction link that converts a score vector $\bm\theta$ into a target prediction $\varphi(\bm\theta)$.
If the regret rate function is linear $\psi(r)=\mathcal{O}(r)$, which is the best possible (data-independent) regret rate,%
\footnote{A super-linear bound is possible with distribution-dependent losses~\citep{zhang2021rates}, which we do not consider here.}
then the optimization and estimation errors of the surrogate loss are optimally translated to the target loss.
For example, under the binary classification target loss, \citet{06_calibration} demonstrated that the linear regret rate is possible with the hinge loss.
Later, symmetric losses~\citep{Nontawat} and polyhedral losses~\citep{poly_regret} were shown to yield the linear regret rate.
Unfortunately, these loss functions lack either convexity or smoothness, which deteriorates the optimization and estimation errors of the surrogate regret,
even if they enjoy linear regret bounds.
By contrast, a square-root regret rate~$\psi(r)=\mathcal{O}(\sqrt{r})$ is common for convex smooth surrogate losses,
such as the logistic, exponential, and squared losses~\citep{06_calibration,Self5,poly_regret}.
These losses typically enjoy better optimization and estimation properties, yet suffer from larger target regrets due to the suboptimal regret rate.
Therefore, it has been open to develop a convex smooth surrogate loss without sacrificing the linear regret rate.

Notwithstanding, the previous literature in this line has implied that such an ideal surrogate loss may be inconceivable.
\citet{mahdavi} considered this question with an interpolated loss between the hinge (non-smooth) and logistic (smooth) losses
and showed that we inevitably face the trade-off between generalization and optimization
unless strong distributional assumptions are imposed.
Further, \citet{poly_regret} proved that locally smooth and strongly convex losses must suffer from a square-root regret rate at least.
\citet{rama_BEP} blame convex smooth losses for redundantly modeling continuous class distributions,
which is unnecessary if our goal is merely to solve discrete target problems.

In this paper, we demonstrate that this seemingly impossible trade-off can be overcome for \textit{arbitrary} discrete target losses. 
Specifically, we build a convex smooth surrogate loss built upon the framework of Fenchel--Young losses \citep{FY}---%
a framework to generate a loss function from a generalized negentropy and its conjugate.
In a nutshell, our main results are summarized as follows:
\begin{theorem}[Informal version of Theorem~\ref{main_formal}]
\label{main}
For any discrete target loss $\ell$, there exist a convex smooth surrogate loss $L$ (defined over a score $\bm\theta\in\mathbb{R}^ d $) and prediction link $\varphi$
such that the surrogate regret bound \eqref{informal} holds with some linear rate $\psi(r)=\mathcal{O}(r)$.
\end{theorem}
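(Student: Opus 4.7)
The plan is to construct the surrogate within the Fenchel--Young (FY) framework: the loss $L_\Omega(\bm{\theta},y) := \Omega^{\ast}(\bm{\theta}) + \Omega(e_y) - \langle \bm{\theta}, e_y \rangle$ is automatically convex in $\bm{\theta}$ for any proper lower semicontinuous convex negentropy $\Omega$, and its smoothness in $\bm{\theta}$ corresponds, via Fenchel duality, to strong convexity of $\Omega$ on its domain. The key design choice is to take $\Omega$ to be the \emph{convolutional negentropy}, namely the infimal convolution of a smooth strongly convex base negentropy $\Omega_g$ on the probability simplex $\Delta$ with the (convex, piecewise linear) negated target Bayes conditional risk $-\underline{\ell}(\bm{\eta}) = -\inf_t \langle \bm{\eta},\bm{\ell}(t)\rangle$. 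To obtain smoothness of $L_\Omega$ I would exploit the identity that the conjugate of an infimal convolution is the sum of conjugates, i.e., $\Omega^{\ast} = \Omega_g^{\ast} + (-\underline{\ell})^{\ast}$, and then align the score space $\mathbb{R}^\varrho$ with the effective domain of $(-\underline{\ell})^{\ast}$ (which is carved out by the loss vectors $\{\bm{\ell}(t)\}_t$); after this reparameterization, $\Omega^{\ast}$ inherits the smoothness of $\Omega_g^{\ast}$ on all of $\mathbb{R}^\varrho$, so that $L_\Omega$ is convex and smooth in $\bm{\theta}$.

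I would next set the prediction link to $\varphi(\bm{\theta}) \in \arg\min_t \langle \bm{\ell}(t),\, \nabla \Omega^{\ast}(\bm{\theta}) \rangle$, exploiting that $\nabla \Omega^{\ast}(\bm{\theta})$ is a well-defined probability estimate in $\Delta$ thanks to strong convexity of $\Omega$. For the linear regret bound, the conditional surrogate regret equals the Fenchel--Young divergence $R_L(\bm{\theta},\bm{\eta}) = \Omega^{\ast}(\bm{\theta}) - \langle \bm{\theta},\bm{\eta}\rangle + \Omega(\bm{\eta})$. Unpacking $\Omega(\bm{\eta})$ through its defining inf-convolution exhibits an optimal split $\Omega(\bm{\eta}) = \Omega_g(\bm{\eta}-\bm{\nu}^{\star}) - \underline{\ell}(\bm{\nu}^{\star})$, with $\bm{\nu}^{\star}$ a smoothed proxy for $\bm{\eta}$. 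Combining (i) the Bayes optimality of $\varphi(\bm{\theta})$ against $\nabla\Omega^{\ast}(\bm{\theta})$, (ii) a subgradient inequality for the piecewise-linear $\underline{\ell}$ at $\bm{\eta}$, and (iii) the smoothness of $\Omega_g$ controlling the deviation $\|\bm{\eta}-\bm{\nu}^{\star}\|$, I aim to obtain the pointwise inequality $\langle \bm{\ell}(\varphi(\bm{\theta})),\bm{\eta}\rangle - \underline{\ell}(\bm{\eta}) \leq C \cdot R_L(\bm{\theta},\bm{\eta})$; taking expectations over data then delivers the linear surrogate regret bound.

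The main obstacle will be executing this pointwise linear comparison, since the standard FY argument with strongly convex smooth $\Omega$ delivers only a square-root rate---precisely the trade-off the theorem aims to break. The crux is that the piecewise-linear $-\underline{\ell}$ summand of the inf-convolution provides a ``rigid'' core whose subdifferential directly aligns the surrogate and target optimality conditions, while $\Omega_g$ contributes only a controlled smoothing slack rather than a genuine $\sqrt{r}$ penalty. Making this separation rigorous will demand careful subdifferential calculus on the infimal convolution together with a judicious choice of score parameterization so that $\Omega^{\ast}$ is smooth on all of $\mathbb{R}^\varrho$ while retaining the piecewise-linear skeleton of $-\underline{\ell}$. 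As a byproduct, the consistency of $\nabla\Omega^{\ast}(\bm{\theta})$ as a class-probability estimator (promised in the abstract) will fall out of the same decomposition: once $R_L(\bm{\theta},\bm{\eta}) \to 0$, the Fenchel--Young equality forces $\nabla\Omega^{\ast}(\bm{\theta})$ towards $\bm{\eta}$ up to the smoothing induced by $\Omega_g$.
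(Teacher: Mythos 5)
There is a genuine gap, on two counts. First, your construction puts the infimal convolution in the wrong place. You define the primal negentropy as $\Omega = \Omega_g \mathbin{\square} (-\underline{\ell})$, so that $\Omega^{\ast} = \Omega_g^{\ast} + (-\underline{\ell})^{\ast}$. But $-\underline{\ell}$ is the support function of $\{-\bm{\ell}(t)\}_t$, so $(-\underline{\ell})^{\ast}$ is the indicator of the compact polytope $\mathrm{conv}(\{-\bm{\ell}(t)\}_t)$, and your $\Omega^{\ast}$ is therefore $+\infty$ outside a bounded set. No ``reparameterization'' can identify a compact polytope with all of $\mathbb{R}^{\varrho}$ while preserving convexity, so the resulting loss is neither finite nor smooth over an unconstrained score space, which is what the theorem requires. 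The paper does the opposite: it takes the \emph{sum} $\Omega_T = \Omega + T$ in the primal (strongly convex $+$ convex, hence strongly convex, hence smooth conjugate), and the infimal convolution then appears in the dual, $\Omega_T^{\ast}(\bm{\theta}) = \inf_{\bm{\pi}\in\Delta^N}\Omega^{\ast}(\bm{\theta}+\mathcal{L}^{\bm{\rho}}\bm{\pi})$, which is finite and smooth on all of $\mathbb{R}^{\varrho}$ precisely because $\mathrm{dom}(\Omega^{\ast})=\mathbb{R}^{\varrho}$.

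Second, and more fundamentally, the linear comparison inequality is never actually established in your sketch, and the route you propose---``smoothness of $\Omega_g$ controlling the deviation $\|\bm{\eta}-\bm{\nu}^{\star}\|$''---is exactly the mechanism that produces a square-root rate (smoothness bounds a squared norm by the regret, so the norm itself scales as $\sqrt{r}$). You correctly sense that the piecewise-linear piece must do the work, but the concrete device is missing. The paper's argument requires no norm control at all: writing the surrogate regret with any minimizer $\bm{\pi}\in\Pi(\bm{\theta})$ of the inner problem, it decomposes \emph{additively} as a non-negative base Fenchel--Young loss $R_{L_\Omega}(\bm{\theta}+\mathcal{L}^{\bm{\rho}}\bm{\pi},\bm{\eta})$ plus the convex combination $\sum_t \pi_t\,\mathtt{Regret}_\ell(t,\bm{\eta})$, so the latter is directly bounded by the surrogate regret. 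The link is then chosen as $\varphi(\bm{\theta})\in\mathrm{argmax}_t\,\pi_t(\bm{\theta})$, which guarantees $\pi_{\varphi(\bm{\theta})}\ge 1/N$ and yields the constant $N$. Your link $\varphi(\bm{\theta})\in\mathrm{argmin}_t\langle\bm{\ell}(t),\nabla\Omega^{\ast}(\bm{\theta})\rangle$ does not come with such a weight guarantee (a risk-minimizing $t$ can carry zero weight under ties), so even granting the decomposition it would not immediately give a finite multiplicative constant. Without identifying this additive decomposition and the $\bm{\pi}$-argmax link, the proposal does not break the square-root barrier.
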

This existence is proved constructively,
which provides a systematic framework on the construction of convex smooth surrogate losses and their corresponding prediction links with linear surrogate regret bounds.
Our high-level construction, which significantly leverages convex analysis and is detailed in Section~\ref{s31}, proceeds as follows.
First, a user chooses a strongly convex negentropy with some regularity conditions, such as the Shannon negentropy.
We encode the structure of a target loss $\ell$ into the chosen base negentropy
by adding the negative Bayes risk of $\ell$.
This additivity eventually translates into the infimal convolution in the induced Fenchel--Young loss,
which we call the \textit{convolutional} Fenchel--Young loss.
The convolutional Fenchel--Young loss is endowed with the convexity and smoothness arising from the base negentropy, shown in Section~\ref{s32}.
Then we can obtain a prediction link via the infimal convolution.
Paired with the convolutional Fenchel--Young loss, it admits a linear surrogate regret bound, demonstrated in Section~\ref{s33}.

In addition, {we improve the multiplicative term in the initial linear surrogate regret bound in Section~\ref{section:improved_bound} to make the bound tighter,}
which exploits the low-rank structure of a target loss $\ell$.
As a by-product of the loss smoothness, we can provide a Fisher-consistent probability estimator of the underlying probability in Section~\ref{s34}.
Finally, Section~\ref{s5} instantiates the framework of the convolutional Fenchel--Young loss for the multiclass classification problem,
highlighting the efficient computation of the prediction link.
More examples of target prediction problems, such as classification with rejection and multilabel ranking, are available in Appendix~\ref{example_appendix}.

\subsection{Related Works}
\paragraph{Surrogate losses for general discrete prediction problems.} 
A discrete prediction problem aims to predict $t$ that minimizes a discrete target loss $\ell(t,y)$ over the class distribution,
and the study of convex surrogates for $\ell$ is of significant interest.
Extensive research has been conducted on surrogate losses for specific discrete tasks,
including but not limited to classification~\citep{06_calibration, 04_multi,07_multi, yutong,scott_asy,hari_multi,menon_bin,f_diver}, top-$k$~\citep{koyejo_topk,top_k_poly}, and multilabel learning~\citep{multilabel1, multilabel2,mao_multi,menon_bip,menon_multi}. 

In contrast to \textit{ad-hoc} approaches, recent studies have advanced the principled design of calibrated convex surrogate losses for general discrete prediction problems,
without imposing restrictions on the target losses.
In \citet{rama_cvxdim} and \citet{rama_ecoc}, surrogate losses based on the squared loss and error correcting output codes are proposed, respectively,
which embed the structure of a target problem into a surrogate loss
and are shown to be calibrated with the corresponding target losses.
The design of calibrated surrogate losses has also been extensively studied in the context of structured prediction~\citep{cortes2016structured, Self0, Self1, Self2, Self_osokin, Self4, Self5, cao2024consistent},
where the structural/low-rank properties of target losses are exploited to construct more efficient surrogates.
As opposed to the smooth losses utilized in the works above,
polyhedral losses~\citep{poly1,poly_dim, total_poly} have attracted attention recently,
which provides a systematic framework for efficiently constructing calibrated yet non-smooth convex losses based on a discrete target loss. 

\paragraph{Surrogate regret bounds.}
Surrogate regret bounds have been well studied for margin losses under binary and multiclass classification~\citep{04_multi,06_calibration,07_multi, cs_regret,maobin,jessie},
where the target loss is the 0-1 loss.
Similarly, proper losses~\citep{proper1,proper2,proper3, regret_proper4,pmlr-v45-Kotlowski15,menon_auc} have been shown to provide surrogate regret bounds \textit{w.r.t.} the $L_1$ distance between the estimated and true class probabilities \citep{regret_proper1,regret_proper2,regret_proper3},
which facilitates analyses for downstream tasks.
For structured prediction, surrogate regret bounds are sometimes called comparison inequalities~\citep{Self1,Self_osokin,Self4,Self3}, 
which are typically of square-root type.
Notably, \citet{poly_regret} demonstrated that polyhedral losses exhibit linear surrogate regret bounds for general discrete prediction problems,
covering various piecewise linear non-smooth losses~\citep{koyejo_topk,rama_BEP} as special cases.
However, they lack the smoothness. 
To the contrary, \citet{mao_st} and \citet{mao_cs} propose smooth losses with linear regret rates but lacking convexity. 

Whereas a growing line of research has focused on $\mathcal{H}$-consistency to analyze how the restriction to a hypothesis space $\mathcal{H}$ affects consistency \citep{hc1,hc2,hc3}, 
we implicitly suppose that the hypothesis space is all measurable functions 
because the analysis is often more transparent 
and it is reasonable to suppose that the hypothesis space is sufficiently expressive under the overparametrization regime. 
The extension to $\mathcal{H}$-consistency is rather straightforward by integrating the minimizability gap \citep{mao_cs,mao_reject,mao_multi,H_cons1,H_cons2}.\footnote{This condition can be relaxed to the \textit{well-specified hypothesis space}, under which the hypothesis space is required to contain at least one population risk minimizer, rather than the entire space of measurable functions.   }

\section{Preliminaries}
Let $[d]\coloneqq\{1,2,\cdots,d\}$ and $[\![\cdot]\!]$ is the Iverson bracket.
The $p$-norm is denoted by $\|\cdot\|_{p}$, which we assume to be the $2$-norm unless otherwise noted.
Let $\overline{\mathbb{R}}\coloneqq\mathbb{R}\cup\{\infty\}$ be the extended real-line and
$\Delta^{d}\coloneqq\{\bm{\eta}\in\mathbb{R}^{d}_{\geq 0}:\|\bm{\eta}\|_{1}=1\}$ the $d$-simplex.
For a set $\mathcal{S}\subseteq\mathbb{R}^{d}$, $\mathrm{int}(\mathcal{S})$ and $\mathrm{relint}(\mathcal{S})$ are its interior and relative interior, respectively,
and $\mathrm{conv}(\mathcal{S})$ is its convex hull.
The indicator function is denoted by $\mathbb{I}_{\mathcal{S}}:\mathbb{R}^{d}\to\{0,+\infty\}$, where $\mathbb{I}_{\mathcal{S}}(\bm{\theta})=0$ if $\bm{\theta}\in\mathcal{S}$ and $+\infty$ otherwise.
For a function $f:\mathbb{R}^{d}\to\overline{\mathbb{R}}$, $\mathrm{dom}(f)\coloneqq\{\bm{\theta}\in\mathbb{R}^{d}:f(\bm{\theta})<+\infty\}$ is its effective domain.
A function $f$ is extended to be set-valued with slight abuse of notation by $f(\mathcal{S})\coloneqq\{f(s):s\in\mathcal{S}\}$.
The Fenchel conjugate of $\Omega$ is $\Omega^{*}(\bm{\theta})\coloneqq\sup_{\bm{p}\in\mathrm{dom}(\Omega)}\{\bm{\theta}^{\top}\bm{p}-\Omega(\bm{p})\}$.
The identity matrix is $I$. The canonical basis of the Euclidean space is denoted by $\{\bm{e}_{i}\}$, where the dimensionality depends on the contexts.

\subsection{Discrete Prediction Problems and Target Losses}
\label{s21}
%
Let $\mathcal{Y}=[K]$ be the finite class space.
The class distribution on $\mathcal{Y}$ is $\bm{\eta}\in\Delta^{K}$ such that class $Y=y$ has probability $\eta_{y}$.
A discrete prediction problem aims to find a target prediction $t$ from the finite prediction space $\widehat{\mathcal{Y}}\coloneqq[N]$ for each $\bm{\eta}$ by minimizing a discrete \textit{target loss} $\ell:\widehat{\mathcal{Y}}\times\mathcal{Y}\to\mathbb{R}$ over $y\sim\bm{\eta}$.
The averaged target loss is called the target risk:
$R_{\ell}(t,\bm{\eta})\coloneqq\mathbb{E}_{y\sim\bm{\eta}}[\ell(t,y)]=\langle\bm{\eta},\bm{\ell}(t)\rangle$,
where $\bm{\ell}(t)\in\mathbb{R}^{K}$ is the loss vector such that $\bm{\ell}(t)_{y}=\ell(t,y)$ for each $y\in\mathcal{Y}$.
The Bayes risk of $\ell$ at $\bm{\eta}$ is $\underline{R}_{\ell}(\bm{\eta})\coloneqq\min_{t\in\widehat{\mathcal{Y}}}R_{\ell}(t,\bm{\eta})$, which is a concave function of $\bm{\eta}$.
The suboptimality of a prediction $t$ \textit{w.r.t.} the target loss $\ell$ over a class distribution $\bm{\eta}$ is characterized by the \textit{target regret}, which is the gap between its risk and the Bayes risk.
\begin{definition}[Target regret]
Given a discrete target loss $\ell$, the target regret of a prediction $t$ w.r.t. a class probability $\bm{\eta}\in\Delta^{K}$ is defined as follows: 
\begin{align}
\mathtt{Regret}_{\ell}(t,\bm{\eta})\coloneqq R_{\ell}(t,\bm{\eta})-\underline{R}_{\ell}(\bm{\eta}).
\end{align}
\end{definition}

\paragraph{Equivalent lower-dimensional decomposition.}
By encoding every possible class label $y\in\mathcal{Y}$ into the canonical basis $\bm{e}_y\in\mathbb{R}^{K}$,
we can express any target loss in the following form:
\begin{align}\label{inner}
\ell(t,y)=\langle\bm{e}_{y},\bm{\ell}(t)\rangle.
\end{align}

An equivalent but more efficient decomposition of \eqref{inner},
known as Structure Encoding Loss Functions (SELF), 
was introduced in \citet{Self1}, which allows lower-dimensional label encodings. 
It has been widely used to construct efficient loss functions,
and further generalized via Affine Decomposition in \citet[(12)]{Self3}.
We also adopt a general form to represent discrete target losses.

\begin{definition}[$(\bm{\rho},\bm{\ell}^{\bm{\rho}})$-decomposition]\label{decomp}
For a discrete target loss $\ell:\widehat{\mathcal{Y}}\times\mathcal{Y}\to\mathbb{R}$, its $(\bm{\rho},\bm{\ell}^{\bm{\rho}})$-decomposition is given as follows:
\begin{align}
\label{self}
\ell(t,y)=\langle\bm{\rho}(y),\bm{\ell}^{\bm{\rho}}(t)\rangle+c(y),
\end{align}
where $\bm{\rho}:\mathcal{Y}\to\mathbb{R}^{ d }$ is a label encoding function that maps discrete labels into the $ d $-dimensional Euclidean space,
$\bm{\ell}^{\bm{\rho}}:\widehat{\mathcal{Y}}\to\mathbb{R}^{ d }$ is the corresponding loss encoding function,
and $c:\mathcal{Y}\to\mathbb{R}$ is the remainder independent of prediction $t$.
\end{definition}
This loss decomposition contains the Affine Decomposition~\citep[(12)]{Self3}.
Any discrete target loss admits such a decomposition via the trivial choice $\bm{\rho}(y)=\bm{e}_{y}$, $\bm{\ell}^{\bm{\rho}}(t)=\bm{\ell}(t)$, and $c=0$ with $ d =K$, which immediately recovers \eqref{inner}.
With a properly chosen $\bm{\rho}$, the encoded cardinality $ d $ can often be greatly smaller than $K$, particularly in the context of structured prediction.
For example, consider multilabel classification, where $\mathcal{Y}=\widehat{\mathcal{Y}}=[K]$, $K=2^{ d }$, and each $y \in [K]$ corresponds to a unique binary vector $\bm{\nu}(y) \in \{0,1\}^{ d }$
representing a set of binary labels.
Suppose our target loss is the Hamming loss $\ell_{H}(t,y)\coloneqq\bm{1}^{\top}(\bm{\nu}(t) + \bm{\nu}(y)) - 2\langle\bm{\nu}(t), \bm{\nu}(y)\rangle$, which is the Hamming distance between $\bm{\nu}(t)$ and $\bm{\nu}(y)$.
This entails the decomposition $\bm{\rho}(y)=\bm{1}-2\bm{\nu}(y)$, $\bm{\ell}^{\bm{\rho}}(t)=\bm{\nu}(t)$, and $c(y)=\bm{1}^{\top}\bm{\nu}(y)$.
Here, $ d =\log_{2}K$ significantly reduces the dimensionality from the cardinality of $\mathcal{Y}$.
We refer reader to \citet[Appendix A]{Self3} and Appendix~\ref{example_appendix} for more examples of discrete prediction problems.
The cardinality $ d $ of label encoding $\bm{\rho}$ is closely related to surrogate regret bounds later in Section~\ref{section:improved_bound}.

\subsection{Surrogate Losses and Surrogate Regret Bounds}
\label{surrogate_loss}
Unlike discrete target losses assessing a discrete prediction $t\in\widehat{\mathcal{Y}}$,
a surrogate loss $L:\mathbb{R}^{d}\times\mathcal{Y}\to\overline{\mathbb{R}}$ receives a continuous score $\bm{\theta}\in\mathbb{R}^{d}$.
Then a prediction link $\varphi:\mathbb{R}^{d}\to\widehat{\mathcal{Y}}$ is used to transform a score $\bm{\theta}$ to a prediction $t$.
Its risk and Bayes risk are defined as
$R_{L}(\bm{\theta},\bm{\eta})\coloneqq\mathbb{E}_{y\sim\bm{\eta}}[L(\bm{\theta},y)]=\langle\bm{\eta},\bm{L}(\bm{\theta})\rangle$
and $\underline{R}_{L}(\bm{\eta})\coloneqq\inf_{\bm{\theta}\in\mathbb{R}^{d}}R_{L}(\bm{\theta},\bm{\eta})$, respectively,
where $\bm{L}(\bm{\theta})\coloneqq[L(\bm{\theta},y)]_{y=1}^{K}$.
The regret of a surrogate loss is defined as the gap between the risk of a score $\bm{\theta}$ and the Bayes risk, similar to the target regret. 
\begin{definition}[Surrogate regret]
Given a surrogate loss $L$, the surrogate regret of score $\bm{\theta}$ w.r.t. a class probability $\bm{\eta}\in\Delta^{K}$ is defined as follows: 
\begin{align}
\mathtt{Regret}_{L}(\bm{\theta},\bm{\eta})\coloneqq R_{L}(\bm{\theta},\bm{\eta})-\underline{R}_{L}(\bm{\eta}).
\end{align}
\end{definition}
For a desirable surrogate loss, the convergence of its surrogate regret will dominate the target regret.
That is, for a fixed class distribution~$\bm\eta$,
a convergent $\bm\theta$ such that $\mathtt{Regret}_{L}(\bm{\theta},\bm{\eta})\to 0$
should imply $\mathtt{Regret}_{\ell}(\varphi(\bm{\theta}),\bm{\eta})\to 0$. 
This convergence relationship indicates that the target regret minimization can be achieved by the minimization of the surrogate regret adopted with an appropriate prediction link. 
A surrogate regret bound offers a quantitative characterization of this relationship through a regret rate function~$\psi$, 
which is the key focus of this work.
\begin{definition}[Surrogate regret bound]
A surrogate loss $L$ and prediction link $\varphi$ entail a surrogate regret bound
\textit{w.r.t.} target $\ell$ with a non-decreasing regret rate function $\psi:\mathbb{R}_{\geq 0}\to\mathbb{R}_{\geq 0}$
satisfying $\psi(0)=0$ if the following inequality holds:
\begin{align}
    \mathtt{Regret}_{\ell}(\varphi(\bm{\theta}),\bm{\eta})\leq\psi(\mathtt{Regret}_{L}(\bm{\theta},\bm{\eta})), ~~~~~\text{for any~} (\bm\theta,\bm{\eta})\in\mathbb{R}^ d \times\Delta^{K}.
\end{align}
\end{definition}
While a linear regret rate $\psi(r)=\mathcal{O}(r)$ is the best possible,
previously know linear-rate losses are either non-smooth or non-convex,
including the (non-smooth) hinge loss~\citep{06_calibration} and (non-convex) sigmoid loss~\citep{Nontawat}.
These loss functions may face challenges in optimization and estimation~\citep{smooth1}.
In this work, we aim to develop a framework that facilitates the use of convex smooth surrogates with linear surrogate regret bounds.

\subsection{Fenchel--Young Loss}
In this work, we build upon Fenchel--Young losses \citep{FY,energy_fy,sparse_FY} to construct convex smooth surrogates equipped with linear regret rate functions.
Let us review Fenchel--Young losses first.
\begin{definition}[Fenchel--Young loss]
For $\Omega:\mathbb{R}^{d}\to\overline{\mathbb{R}}$, the associated Fenchel--Young loss $L_{\Omega}:\mathrm{dom}(\Omega^{*})\times\mathrm{dom}(\Omega)\to\mathbb{R}_{\geq 0}$ is defined as follows:
\begin{align}
L_{\Omega}(\bm{\theta},\bm{p})=\Omega(\bm{p})+\Omega^{*}(\bm{\theta})-\langle \bm{\theta},\bm{p}\rangle.    
\end{align}
\end{definition}
{Similar constructions can also be found in \citet{duchi2018multiclass, least_square}, 
which focus on multiclass classification.}
The function $\Omega$ is often regarded as a generalized negentropy,
which often equals the negative of the Bayes risk of its induced Fenchel--Young loss.
Fenchel--Young losses often impose additional requirements on the domains of $\Omega$ and $\Omega^{*}$, tailored to specific target problems, and we will highlight them when necessary.

Fenchel--Young losses possess favorable properties:
they are inherently convex (in score $\bm\theta$) by definition.
Moreover, we can use the map $\nabla\Omega^*(\bm{\theta})$ to obtain the mean $\mathbb{E}_{y\sim\bm{\eta}}[\bm{\rho}(y)]$, 
which is the class distribution $\bm\eta$ under multiclass classification with $\bm\rho(y)=\bm{e}_y$;
or linear properties in general~\citep{abernethy2012characterization}.
Many common losses are encompassed in Fenchel--Young losses, including the cross-entropy loss, squared loss, and Crammer--Singer loss~\citep{crammer}. 
For example, $\Omega(\bm{p})=\frac12\|\bm{p}\|^2+\mathbb{I}_{\Delta^K}(\bm{p})$ generates the sparsemax loss,
which induces the sparsemax as a Fisher-consistent estimator of $\bm{\eta}$~\citep{msparsemax}.

{\paragraph{Connections to information geometry.}
While Fenchel--Young losses are systematically defined under the convex-analytic formulation,
it also has a longstanding history in information geometry, 
particularly through its connection to the Bregman divergence and related generalizations \citep{Fitzpatrick}. 
In \citet{FY}, 
it is noted that a Fenchel--Young loss can be interpreted as the mixed-type Bregman divergence~\citep{it2}. 
When $\Omega$ is of Legendre-type, 
$\Omega$ can yield the dual coordinate system and the Fenchel--Young loss is further equivalent to the \emph{canonical divergence} generated by $\Omega$ \citep[Eq.~(3.44)]{IT1}.



}

\section{Convex Smooth Surrogates with Linear Surrogate Regret Bounds}
\label{s3}
In this section, we first show our construction of surrogate losses and prediction links.
After showing that this loss is convex and smooth with an appropriately chosen base negentropy (Theorem~\ref{main_loss}),
we move on to the main result of this paper, linear surrogate regret bounds with convex smooth surrogate losses (Theorem~\ref{main_link}).
We further discuss the computational aspects and improve the regret bound constant.
The missing proofs are deferred to Appendix~\ref{apendix_A}.

\subsection{Construction}
\label{s31}
We design a convex smooth surrogate loss built upon the framework of Fenchel--Young losses.
To make its surrogate regret bound linear, we craft a negentropy by delicately leveraging the structure of a target prediction problem. 
Denote by $T$
the polyhedral convex function
$T(\bm{p})=-\min_{t\in\widehat{\mathcal{Y}}}\left\langle\bm{p},\bm{\ell}^{\bm{\rho}}(t)\right\rangle$.
Then, we have the following definition.

\begin{definition}[Convolutional negentropy]
Suppose $(\bm{\rho},\bm{\ell}^{\bm{\rho}})$-decomposition for a target loss $\ell$.
For a negentropy $\Omega:\mathbb{R}^{ d }\to\overline{\mathbb{R}}$,
its convolutional negentropy is defined as follows:
\begin{align}
  \label{target_specific_negentropy}
  \Omega_{T}(\bm{p})=\Omega(\bm{p})+T(\bm{p})
\end{align}
\end{definition}
While commonly used negentropy in Fenchel--Young losses, such as the Shannon negentropy and norm negentropy~\citep{norm}, are unstructured toward a target loss,
the convolutional negentropy $\Omega_{T}$ encodes a target loss $\ell$ explicitly into the base negentropy $\Omega$ via $T$.
This polyhedral convex function~$T$ is an affinely transformed negative Bayes risk of a target loss $\ell$ {when $\bm{p}\in\text{conv}\{\bm{\rho}(\mathcal{Y})\}$.}
Indeed, with the linear property $\bm{p}=\mathbb{E}_{y\sim\bm{\eta}}[\bm{\rho}(y)]$, i.e., expectation of $\bm{\rho}(y)$ \textit{w.r.t.} class probability $\bm{\eta}$:
\begin{equation}
  \label{transformed_target_bayes_risk}
  T(\bm p) =
  -\min_{t\in\widehat{\mathcal{Y}}}\langle\bm p,\bm\ell^{\bm\rho}(t)\rangle
  =-\min_{t\in\widehat{\mathcal{Y}}}\mathbb{E}_{y\sim\bm{\eta}}[\langle\bm{\rho}(y),\bm\ell^{\bm\rho}(t)\rangle]
  =\sum_{y\in[K]}\eta_yc(y)-\underline{R}_\ell(\bm{\eta}).
\end{equation}
The base negentropy $\Omega$ is up to our choice.
Before deriving the conjugate of $\Omega_{T}$, we make the following requirement on $\Omega$ throughout this work for a well-behaved convolutional negentropy.

\begin{condition}
\label{c1}$
\Omega$ is proper convex and lower-semicontinuous (l.s.c.),
and satisfies $\mathrm{conv}(\bm{\rho}(\mathcal{Y}))\subseteq\mathrm{dom}(\Omega)$ and $\mathrm{dom}(\Omega^{*})=\mathbb{R}^{ d }$.
\end{condition}
It is a mild requirement.
Indeed, proper convexity and lower-semicontinuity merely aim to avoid pathologies,
and the domain assumptions are met by a differentiable and finite $\Omega$
over the valid prediction space $\mathrm{conv}(\bm{\rho}(\mathcal{Y}))$.
The assumption on $\Omega^*$ is crucial for the induced loss to have unconstrained domain $\mathbb{R}^{ d }$.
Then we show an explicit form of the conjugated convolutional negentropy.
\begin{lemma}[Conjugate of $\Omega_{T}$]\label{conjugate}
Suppose Condition~\ref{c1} holds, then $\Omega_{T}^{*}$ is proper convex and l.s.c. with $\mathrm{dom}(\Omega_{T}^{*})=\mathbb{R}^{ d }$,
and can be expressed as follows:
\begin{align}
    \label{comb_conj}
    \Omega_{T}^{*}(\bm{\theta})=\inf_{\bm\pi\in\Delta^N}\Omega^{*}(\bm{\theta}+\mathcal{L}^{\bm{\rho}}\bm{\pi})
    \quad \text{for any $\bm{\theta}\in\mathbb{R}^ d $},
\end{align}
where $\mathcal{L}^{\bm{\rho}}\in\mathbb{R}^{ d \times N}$ is the loss matrix
with the $t$-th column being $\bm{\ell}^{\bm{\rho}}(t)\in\mathbb{R}^{ d }$ for $t\in\widehat{\mathcal{Y}}$.
In addition, there always exists $\bm{\pi}\in\Delta^{N}$ that achieves the infimum of \eqref{comb_conj} for any $\bm{\theta}\in\mathbb{R}^ d $.
\end{lemma}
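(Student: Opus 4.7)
The plan is to recognize $T$ as the support function of a polytope, then invoke the standard duality between the sum of two convex functions and infimal convolution, and finally verify the regularity claims (full domain, properness, and attainment) by combining continuity of $\Omega^{*}$ with the compactness of $\Delta^{N}$.

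First, I rewrite $T$ as a support function. Since
\begin{align*}
T(\bm{p}) = \max_{t\in\widehat{\mathcal{Y}}}\langle\bm{p},-\bm{\ell}^{\bm{\rho}}(t)\rangle
         = \max_{\bm{\pi}\in\Delta^{N}}\langle\bm{p},-\mathcal{L}^{\bm{\rho}}\bm{\pi}\rangle,
\end{align*}
we have $T = \sigma_{C}$ for the polytope $C \coloneqq \{-\mathcal{L}^{\bm{\rho}}\bm{\pi} : \bm{\pi}\in\Delta^{N}\}$. The Fenchel--Moreau theorem then yields $T^{*} = \mathbb{I}_{C}$. In particular $T$ is polyhedral, finite-valued on all of $\mathbb{R}^{\varrho}$, and $\mathrm{dom}(\Omega)\cap\mathrm{dom}(T)=\mathrm{dom}(\Omega)\ne\emptyset$ by Condition~\ref{c1}. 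This is exactly the hypothesis of the polyhedral sum--conjugate rule (Rockafellar's \emph{Convex Analysis}, Theorem~20.1 / Corollary~19.3.4), which guarantees
\begin{align*}
\Omega_{T}^{*} = (\Omega+T)^{*} = \Omega^{*}\,\square\,T^{*},
\end{align*}
with the infimum defining the infimal convolution attained whenever the value is finite. Substituting $T^{*}=\mathbb{I}_{C}$ and decomposing $\bm{\theta}=\bm{u}+\bm{v}$ with $\bm{v}=-\mathcal{L}^{\bm{\rho}}\bm{\pi}\in C$ collapses the infimal convolution to $\inf_{\bm{\pi}\in\Delta^{N}}\Omega^{*}(\bm{\theta}+\mathcal{L}^{\bm{\rho}}\bm{\pi})$, which is the claimed formula \eqref{comb_conj}.

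Next, I verify the structural claims. Both $\Omega$ and $T$ are proper convex and l.s.c., so $\Omega_{T}$ is proper convex l.s.c.\ and bounded below by an affine minorant (each summand admits one), hence $\Omega_{T}^{*}$ is proper convex l.s.c.\ automatically. For $\mathrm{dom}(\Omega_{T}^{*})=\mathbb{R}^{\varrho}$ and attainment of the infimum, I use that a proper convex function with full effective domain is continuous everywhere, so $\Omega^{*}$ is continuous on $\mathbb{R}^{\varrho}$. Consequently the map $\bm{\pi}\mapsto\Omega^{*}(\bm{\theta}+\mathcal{L}^{\bm{\rho}}\bm{\pi})$ is continuous on the compact simplex $\Delta^{N}$, so its infimum is finite and attained for every $\bm{\theta}\in\mathbb{R}^{\varrho}$.

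The main obstacle is justifying the sum--conjugate identity, since the generic Fenchel version requires a relative-interior qualification $\mathrm{ri}(\mathrm{dom}(\Omega))\cap\mathrm{ri}(\mathrm{dom}(T))\ne\emptyset$ that Condition~\ref{c1} does not guarantee. The key observation that unlocks the proof is that $T$ is polyhedral with $\mathrm{dom}(T)=\mathbb{R}^{\varrho}$, which activates the polyhedral strengthening of the Fenchel duality theorem and dispenses with any relative-interior hypothesis on $\Omega$; once this is in place, every remaining step is either a direct computation or a routine appeal to continuity on a compact set.
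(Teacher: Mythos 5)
Your proposal is correct and follows essentially the same route as the paper's proof: identify $T$ as the support function of the polytope $\mathrm{conv}(\{-\bm{\ell}^{\bm{\rho}}(t)\}_t)$ so that $T^*$ is its indicator, apply the sum--conjugate/infimal-convolution rule, and obtain attainment from compactness of $\Delta^N$ together with the finiteness of $\Omega^*$ on all of $\mathbb{R}^\varrho$. The only (immaterial) difference is the constraint qualification invoked for $(\Omega+T)^*=\Omega^*\,\square\,T^*$: the paper uses the continuity of $T$ on $\mathrm{dom}(\Omega)$, whereas you use the polyhedral strengthening — both are valid here (indeed, since $\mathrm{dom}(T)=\mathbb{R}^\varrho$, even the standard relative-interior condition holds automatically).
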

The conjugated form \eqref{comb_conj} owes to the \textit{infimal convolution} between the base negentropy $\Omega$
and the target Bayes risk~$T$.
We coined the name of the convolutional negentropy inspired by this structure.
Intuitively, the conjugated convolutional negentropy $\Omega_{T}^{*}$ can be viewed as a ``perturbed'' version of the conjugated base negentropy~$\Omega^{*}$
toward the direction of the target loss matrix~$\mathcal{L}^{\bm\rho}$.
This result facilitates a more concrete formulation of the Fenchel--Young loss generated by $\Omega_{T}$, 
which we refer to as the convolutional Fenchel--Young loss---the central focus of this work.

\begin{definition}[Convolutional Fenchel--Young loss]
\label{proposed_loss}
Suppose that a discrete target loss $\ell$ enjoys $(\bm{\rho},\bm{\ell}^{\bm{\rho}})$-decomposition.
For a negentropy $\Omega:\mathbb{R}^ d \to\overline{\mathbb{R}}$ satisfying Condition~\ref{c1},
the convolutional Fenchel--Young loss $L_{\Omega_{T}}:\mathbb{R}^{ d }\times\mathcal{Y}\to\overline{\mathbb{R}}$
induced by the convolutional negentropy $\Omega_{T}$ is defined as follows:
\begin{align}\label{proposed}
L_{\Omega_{T}}(\bm{\theta},y)=\min_{\bm\pi\in\Delta^N}\Omega^{*}(\bm{\theta}+\mathcal{L}^{\bm{\rho}}\bm{\pi})+\Omega_{T}(\bm{\rho}(y))-\langle\bm{\theta},\bm{\rho}(y)\rangle.
\end{align}
\end{definition}
Note that $L_{\Omega_T}(\bm\theta,\cdot)$ is deliberately constrained to the class space $\mathcal{Y}$ instead of the general domain $\mathrm{dom}(\Omega_T)$,
to align with the surrogate loss form introduced in Section~\ref{surrogate_loss}.

Given a surrogate loss, we need to specify a prediction link.
For general discrete prediction problems where $\mathcal{Y}\ne\widehat{\mathcal{Y}}$,
a prediction link yields a discrete prediction in $\widehat{\mathcal{Y}}$ from a score $\bm{\theta}\in\mathbb{R}^ d $
obtained through minimizing the surrogate loss.
Thus, a loss and link are the two sides of the same coin.
Unlike the standard argmax link in multiclass classification~\citep{07_multi,wang_unified},
we create an alternative argmax-like prediction link based on the minimizer of \eqref{comb_conj}.

\begin{definition}[$\bm{\pi}$-argmax link]
\label{proposed_link}
Let $\Pi:\mathbb{R}^ d \to2^{\Delta^N}$ be a set-valued map defined as follows:
\begin{equation}
    \label{conj_minimizers}
    \Pi(\bm\theta)\coloneqq\mathop\mathrm{argmin}\left\{\Omega^*(\bm{\theta}+\mathcal{L}^{\bm\rho}\bm{\pi}) : \bm{\pi}\in\Delta^N\right\}
    \quad \text{for any $\bm\theta\in\mathbb{R}^ d $}.
\end{equation}
Let $\bm{\pi}:\mathbb{R}^{ d }\to\Delta^{N}$ be a selector of $\Pi$ such that $\bm{\pi}(\bm{\theta})\in\Pi(\bm{\theta})$ for all $\bm{\theta}\in\mathbb{R}^{ d }$.\footnote{We slightly abuse the notation $\bm{\pi}$ by using it for both a vector and a vector-valued mapping; in particular, the $\bm{\pi}$ appearing in the $\bm{\pi}$-argmax link refers to the selector function.}
Then the $\bm\pi$-argmax link $\varphi:\mathbb{R}^{ d }\to\widehat{\mathcal{Y}}$ is defined as
\[
    \varphi(\bm{\theta})\in\mathop\mathrm{argmax}\left\{\pi_{t}(\bm{\theta}) : t\in\widehat{\mathcal{Y}}\right\}
    \quad \text{for any $\bm\theta\in\mathbb{R}^ d $},
\]
where the tie can be broken arbitrarily.
\end{definition}
The existence of such links is guaranteed as long as $\Pi(\bm{\theta})$ is non-empty for every $\bm\theta\in\mathbb{R}^ d $, 
which is guaranteed by Lemma~\ref{conjugate}. 
While the standard argmax link returns a prediction $t\in\widehat{\mathcal{Y}}$ with the maximum score $\theta_t$,
the $ {\bm\pi}$-argmax link returns $t$ with the maximum $ \pi_t$. 
This probabilistic quantity $\bm\pi\in\Pi(\bm\theta)$ defined via \eqref{comb_conj}
distorts the original score $\bm\theta$ by leveraging the target loss structure $\mathcal{L}^{\bm\rho}$.

\subsection{Convexity and Smoothness}
\label{s32}
Before discussing surrogate regret bounds,
we verify that convolutional Fenchel--Young losses are indeed convex and smooth, implied by the conjugacy between smoothness and strong convexity \citep{dual-sm-cvx}.


\begin{corollary}[Convexity and smoothness of $L_{\Omega_{T}}$]
\label{main_loss}
Suppose that a discrete target loss $\ell$ enjoys $(\bm{\rho},\bm{\ell}^{\bm{\rho}})$-decomposition.
For a base negentropy $\Omega$, we additionally suppose that Condition~\ref{c1} is satisfied.
If $\Omega$ is strictly convex on $\mathrm{dom}(\Omega)$, $L_{\Omega_T}(\cdot,y)$ is convex and differentiable over $\mathbb{R}^ d $ for any $y\in\mathcal{Y}$.
If $\Omega$ is additionally strongly convex on $\mathrm{dom}(\Omega)$, $L_{\Omega_T}(\cdot,y)$ is smooth over $\mathbb{R}^ d $ for any $y\in\mathcal{Y}$.
\end{corollary}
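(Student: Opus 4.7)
The plan is to reduce the convexity and smoothness properties of $L_{\Omega_T}(\cdot,y)$ to those of the conjugate $\Omega_T^*$, and then transfer those properties from $\Omega_T$ via the Fenchel conjugate duality. Using Lemma~\ref{conjugate}, the loss in Definition~\ref{proposed_loss} can be rewritten as
\begin{equation*}
L_{\Omega_T}(\bm\theta,y) = \Omega_T^*(\bm\theta) + \Omega_T(\bm\rho(y)) - \langle\bm\theta,\bm\rho(y)\rangle.
\end{equation*}
Since the last two terms are affine in $\bm\theta$, the convexity, differentiability, and smoothness of $L_{\Omega_T}(\cdot,y)$ are inherited directly from the same properties of $\Omega_T^*$. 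Convexity is then immediate because $\Omega_T^*$ is a Fenchel conjugate, hence convex and lower-semicontinuous.

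Next I would check that $\Omega_T$ is itself well-behaved so that standard duality tools apply. The function $T(\bm p) = -\min_{t\in\widehat{\mathcal{Y}}}\langle\bm p,\bm\ell^{\bm\rho}(t)\rangle = \max_{t\in\widehat{\mathcal{Y}}}\langle\bm p,-\bm\ell^{\bm\rho}(t)\rangle$ is polyhedral convex and finite on $\mathbb{R}^\varrho$, and in particular continuous. Combined with Condition~\ref{c1} on $\Omega$, this makes $\Omega_T = \Omega + T$ proper, convex, and lower-semicontinuous with $\mathrm{dom}(\Omega_T) = \mathrm{dom}(\Omega)$. Crucially, because $T$ is convex, strict convexity (respectively, $\mu$-strong convexity) of $\Omega$ on $\mathrm{dom}(\Omega)$ is inherited by $\Omega_T$.

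To establish differentiability of $\Omega_T^*$ under strict convexity, I would invoke the classical conjugate-duality result that when $\Omega_T$ is proper, convex, l.s.c., and strictly convex on its domain, the subdifferential of $\Omega_T^*$ is single-valued on $\mathrm{int}(\mathrm{dom}(\Omega_T^*))$, hence $\Omega_T^*$ is differentiable there. By Lemma~\ref{conjugate}, $\mathrm{dom}(\Omega_T^*) = \mathbb{R}^\varrho$, so the differentiability holds on all of $\mathbb{R}^\varrho$. For the second claim, I would invoke the standard duality between strong convexity and smoothness: if $\Omega_T$ is $\mu$-strongly convex (w.r.t.\ the $2$-norm), then $\Omega_T^*$ is $1/\mu$-smooth (w.r.t.\ the dual norm) on $\mathbb{R}^\varrho$.

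The main subtlety will be ensuring that strict convexity and strong convexity transfer cleanly from $\Omega$ to $\Omega_T = \Omega + T$ even though $T$ is merely convex (not strictly so) and is defined on a larger domain than $\Omega$. Since the domains satisfy $\mathrm{dom}(\Omega_T) = \mathrm{dom}(\Omega)$ and $T$ is finite and convex on $\mathbb{R}^\varrho$, adding $T$ preserves any strict/strong convexity modulus of $\Omega$ on $\mathrm{dom}(\Omega)$, so the duality theorems apply directly and both conclusions follow.
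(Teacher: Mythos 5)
Your proposal is correct and takes essentially the same approach as the paper: decompose $L_{\Omega_T}(\cdot,y)$ into $\Omega_T^*$ plus affine terms, transfer strict/strong convexity from $\Omega$ to $\Omega_T = \Omega + T$ (since $T$ is convex and $\mathrm{dom}(\Omega_T)=\mathrm{dom}(\Omega)$), and then apply conjugate duality to deduce differentiability respectively smoothness of $\Omega_T^*$ over $\mathbb{R}^\varrho=\mathrm{dom}(\Omega_T^*)$. The only cosmetic difference is that the paper routes the differentiability argument through essential strict convexity and Rockafellar's Theorem 26.3 (essential smoothness of the conjugate), and routes the smoothness argument through the biconjugate identity $\Omega_T^{**}=\Omega_T$ plus Rockafellar--Wets Proposition 12.60, whereas you phrase the same facts more directly via single-valuedness of $\partial\Omega_T^*$ and the $\mu$-strong-convexity/$1/\mu$-smoothness duality; both formulations are standard and interchangeable here.
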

There are several exemplar base negentropies fulfilling the conditions of Corollary~\ref{main_loss}.
For example, the squared norm $\Omega(\bm p)=\frac{1}{2}\|\bm{p}\|^{2}$ is strongly convex with the self-conjugate $\Omega^*(\bm\theta)=\frac{1}{2}\|\bm{\theta}\|^{2}$.
In this case,
$\Omega$ is effective over the entire $\mathbb{R}^{ d }$ and hence includes $\mathrm{conv}(\bm{\rho}(\mathcal{Y}))$,
satisfying Condition~\ref{c1}.
In light of the target-loss decomposition \eqref{inner},
we have $\mathrm{conv}(\bm{\rho}(\mathcal{Y}))=\Delta^{K}$.
In this case, the Shannon negentropy $\Omega(\bm{p})=\langle\bm p,\ln\bm p\rangle+\mathbb{I}_{\Delta^{K}}(\bm{p})$
is strongly convex on $\Delta^{K}$ 
with its conjugate $\Omega^{*}(\bm{\theta})=\ln\langle\exp(\bm\theta),\bm{1}\rangle$ satisfying $\mathrm{dom}(\Omega^{*})=\mathbb{R}^{K}$,
where $\ln$ and $\exp$ are element-wise.
Hence, both base negentropy yield convex and smooth $L_{\Omega_T}$.

Note that $L_{\Omega_T}$ is not locally strongly convex at every point.
We can see this by noting that $T$ is not differentiable and neither is $\Omega_T$, which implies that $\Omega_T^*$ is not strictly convex~\citep[Theorem~11.13]{variational}.
This is important for establishing linear surrogate regret bounds (in Section~\ref{s33})
because the known square-root regret rate lower bound considers locally strongly convex surrogate losses~\citep[Theorem~2]{poly_regret}.

While the loss $L_{\Omega_{T}}$ is now ensured to be convex and smooth,
its gradient calculation, which is the basis for gradient-based optimization~\citep{efficient,envelope},
remains non-trivial.
This is because the gradient of 
$\min_{\bm{\pi}\in\Delta^{N}}\Omega^{*}(\bm{\theta}+\mathcal{L}^{\bm{\rho}}\bm{\pi})$
contained in \eqref{proposed} cannot be written analytically in general. 
We show that its gradient calculation reduces to computing $\Pi(\bm{\theta})$
through the following variant of envelope theorems.
\begin{lemma}[Envelope theorem]
\label{grad}
Suppose Condition~\ref{c1} holds and $\Omega$ is strictly convex on $\mathrm{dom}(\Omega)$.
For any $\bm{\theta}\in\mathbb{R}^{ d }$ and $\bm{\pi}\in\Pi(\bm{\theta})$,
we have
\begin{equation}
    \label{envelope}
    \nabla_{\bm{\theta}}\Bigl[\min_{\bm\pi\in\Delta^N}\Omega^{*}(\bm{\theta}+\mathcal{L}^{\bm{\rho}}\bm{\pi})\Bigr]
    = \nabla\Omega^{*}(\bm{\theta}+\mathcal{L}^{\bm{\rho}}\bm{\pi}).
\end{equation}
\end{lemma}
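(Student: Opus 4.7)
The plan is to recognize the minimum in question as an infimal convolution, whose Fenchel conjugate is $\Omega_T$ by Lemma~\ref{conjugate}, and then combine the one-sided upper bound coming from any fixed minimizer $\bm\pi\in\Pi(\bm\theta)$ with the subgradient property of the outer convex envelope.

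First I would set $F(\bm\theta)\coloneqq\min_{\bm\pi\in\Delta^N}\Omega^*(\bm\theta+\mathcal{L}^{\bm\rho}\bm\pi)$ and invoke Lemma~\ref{conjugate} to conclude that $F=\Omega_T^*$, so $F$ is proper convex and lower-semicontinuous with $\mathrm{dom}(F)=\mathbb{R}^\varrho$; in particular $F$ is continuous on $\mathbb{R}^\varrho$ and its subdifferential $\partial F(\bm\theta)$ is a non-empty compact convex subset of $\mathbb{R}^\varrho$ at every point. Next I would argue that strict convexity of $\Omega$ on $\mathrm{dom}(\Omega)$, coupled with Condition~\ref{c1} (so that $\Omega$ is proper closed convex with $\mathrm{dom}(\Omega^*)=\mathbb{R}^\varrho$), implies that $\Omega^*$ is differentiable everywhere on $\mathbb{R}^\varrho$: since $\partial\Omega^*(\bm{v})$ coincides with the set of maximizers in the Fenchel problem $\sup_{\bm p}\langle\bm v,\bm p\rangle-\Omega(\bm p)$, and this maximizer is unique by strict convexity of $\Omega$ (while $\partial\Omega^*(\bm v)$ is non-empty because $\bm v\in\mathrm{int}(\mathrm{dom}(\Omega^*))$).

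The core of the proof is then the following sandwich argument. Fix $\bm\theta\in\mathbb{R}^\varrho$, $\bm\pi\in\Pi(\bm\theta)$, and an arbitrary $\bm h\in\mathbb{R}^\varrho$. Because $\bm\pi\in\Delta^N$ is a feasible (but no longer necessarily optimal) choice at $\bm\theta+\bm h$, the envelope inequality gives
\begin{equation*}
F(\bm\theta+\bm h)-F(\bm\theta)\;\le\;\Omega^*(\bm\theta+\bm h+\mathcal{L}^{\bm\rho}\bm\pi)-\Omega^*(\bm\theta+\mathcal{L}^{\bm\rho}\bm\pi)\;=\;\bigl\langle\nabla\Omega^*(\bm\theta+\mathcal{L}^{\bm\rho}\bm\pi),\bm h\bigr\rangle+o(\|\bm h\|),
\end{equation*}
where I used the differentiability of $\Omega^*$ at $\bm\theta+\mathcal{L}^{\bm\rho}\bm\pi$. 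Conversely, for any $\bm g\in\partial F(\bm\theta)$ the convexity of $F$ yields $\langle\bm g,\bm h\rangle\le F(\bm\theta+\bm h)-F(\bm\theta)$. Combining the two, $\langle\bm g-\nabla\Omega^*(\bm\theta+\mathcal{L}^{\bm\rho}\bm\pi),\bm h\rangle\le o(\|\bm h\|)$ holds for every direction $\bm h$, which forces equality. Hence $\partial F(\bm\theta)=\{\nabla\Omega^*(\bm\theta+\mathcal{L}^{\bm\rho}\bm\pi)\}$ is a singleton, so $F$ is differentiable at $\bm\theta$ with the claimed gradient; as a bonus, the identity shows that $\nabla\Omega^*(\bm\theta+\mathcal{L}^{\bm\rho}\bm\pi)$ is independent of the selection $\bm\pi\in\Pi(\bm\theta)$, which is consistent with the statement.

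The main obstacle, and the only non-mechanical step, is justifying differentiability of $\Omega^*$ everywhere on $\mathbb{R}^\varrho$ from strict convexity of $\Omega$ on $\mathrm{dom}(\Omega)$; this is a standard consequence of Fenchel--Rockafellar duality, but one must be careful that strict convexity restricted to $\mathrm{dom}(\Omega)$ still delivers uniqueness of the dual maximizer and that $\partial\Omega^*(\bm v)\neq\emptyset$ is ensured by $\bm v\in\mathrm{int}(\mathrm{dom}(\Omega^*))=\mathbb{R}^\varrho$. Once this single duality fact is in hand, the sandwich argument above finishes the proof with essentially no further computation.
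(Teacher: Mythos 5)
Your proof is correct, and it takes a genuinely different route from the paper's. The paper first establishes that \emph{both} $\Omega^*$ and $\Omega_T^*$ are differentiable on $\mathbb{R}^\varrho$ (the latter via essential smoothness from essential strict convexity, as in the proof of their convexity/smoothness theorem), then runs a chain of Fenchel identities: using the first-order optimality condition for $\bm\pi\in\Pi(\bm\theta)$, it shows that $\Omega^*(\bm\theta+\mathcal{L}^{\bm\rho}\bm\pi)=\langle\bm\theta,\nabla\Omega^*(\bm\theta+\mathcal{L}^{\bm\rho}\bm\pi)\rangle-\Omega_T(\nabla\Omega^*(\bm\theta+\mathcal{L}^{\bm\rho}\bm\pi))$, identifies the right-hand side as achieving the supremum defining $\Omega_T^*(\bm\theta)$, and concludes by uniqueness of the maximizer (via the differentiability of $\Omega_T^*$). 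Your argument instead is the classical Danskin/envelope sandwich: the upper bound comes from feasibility of the fixed $\bm\pi$ and differentiability of $\Omega^*$ alone, the lower bound from the subgradient inequality for the convex $F=\Omega_T^*$, and the squeeze forces $\partial F(\bm\theta)$ to be the singleton $\{\nabla\Omega^*(\bm\theta+\mathcal{L}^{\bm\rho}\bm\pi)\}$. Two differences are worth noting. First, your route does not need differentiability of $\Omega_T^*$ as an \emph{input}; it emerges as a consequence, so you only need the single duality fact that strict convexity of $\Omega$ on $\mathrm{dom}(\Omega)$ plus $\mathrm{dom}(\Omega^*)=\mathbb{R}^\varrho$ gives everywhere-differentiability of $\Omega^*$. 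Second, the paper's computation yields the extra structural fact that $\nabla\Omega^*(\bm\theta+\mathcal{L}^{\bm\rho}\bm\pi)$ is the argmax in the conjugate defining $\Omega_T^*(\bm\theta)$ (i.e.\ it lies in $\partial\Omega_T^*(\bm\theta)$ via the Fenchel equality), which makes the duality picture explicit; your sandwich proves the same identity but keeps that structure implicit. Both correctly handle non-uniqueness of $\bm\pi$, and your observation that independence of the selector falls out for free is exactly right.
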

Deviating from the standard envelope theorems~\citep{envelope,bertsekas},
we carefully addresses the non-uniqueness of the 
minimizers.
Given this, the gradient of the convolutional Fenchel--Young loss \eqref{proposed} 
is accessed via $\nabla_{\bm{\theta}}L_{\Omega_{T}}(\bm{\theta},y)=\nabla\Omega^{*}(\bm{\theta}+\mathcal{L}^{\bm{\rho}}\bm{\pi})-\bm{\rho}(y)$.
At the core of its proof, the differentiability of $\Omega_T^*$ (indirectly obtained via Theorem~\ref{main_loss}) is vital.

\subsection{Linear Surrogate Regret Bounds}
\label{s33}
Now we exhibit linear surrogate regret bounds with the convolutional Fenchel--Young loss.
\begin{theorem}[Linear surrogate regret bound]
\label{main_link}
Consider a target loss with $(\bm\rho,\bm\ell^{\bm\rho})$-decomposition.
For a negentropy $\Omega:\mathbb{R}^ d \to\overline{\mathbb{R}}$, suppose Condition~\ref{c1} holds.
For any $\bm{\pi}$-argmax link $\varphi$,
$(L_{\Omega_T},\varphi)$ admits the following surrogate regret bound:
\begin{align} 
\label{linear_regret_bound}
\mathtt{Regret}_{\ell}(\varphi(\bm{\theta}),\bm{\eta})\leq N\mathtt{Regret}_{L_{\Omega_{T}}}\!(\bm{\theta},\bm{\eta}),~~~\text{for any~} (\bm\theta,\bm{\eta})\in\mathbb{R}^ d \times\Delta^{K}.
\end{align}
 
\end{theorem}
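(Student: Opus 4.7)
The plan is to derive a closed-form expression for the surrogate regret via Fenchel--Moreau, apply the Fenchel--Young inequality with the base negentropy $\Omega$ to lower-bound it by a \emph{convex combination} of target regrets indexed by $\bm\pi(\bm\theta)\in\Pi(\bm\theta)$, and finally invoke pigeonhole on the simplex $\Delta^N$ to single out the coordinate picked by the $\bm\pi$-argmax link. The constant $N$ comes precisely from this last pigeonhole step, since any point of $\Delta^N$ has a largest coordinate at least $1/N$.

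First I would rewrite the surrogate regret. Setting $\bm{\mu}\coloneqq\mathbb{E}_{y\sim\bm\eta}[\bm\rho(y)]$, taking the expectation of \eqref{proposed} over $y\sim\bm\eta$, and using Fenchel--Moreau on the proper convex l.s.c. function $\Omega_T$ to identify $\inf_{\bm\theta}\{\Omega_T^*(\bm\theta)-\langle\bm\theta,\bm\mu\rangle\}=-\Omega_T(\bm\mu)$, the data-dependent constant cancels and I obtain
\[
\mathtt{Regret}_{L_{\Omega_T}}(\bm\theta,\bm\eta)=\Omega_T^*(\bm\theta)+\Omega_T(\bm\mu)-\langle\bm\theta,\bm\mu\rangle.
\]
By Lemma~\ref{conjugate}, I may fix a selector $\bm\pi^{*}=\bm\pi(\bm\theta)\in\Pi(\bm\theta)$ realizing $\Omega_T^*(\bm\theta)=\Omega^*(\bm\theta+\mathcal{L}^{\bm\rho}\bm\pi^{*})$. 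Applying the Fenchel--Young inequality for $\Omega$ at $(\bm\theta+\mathcal{L}^{\bm\rho}\bm\pi^{*},\bm\mu)$ and substituting $\Omega_T=\Omega+T$ gives
\[
\mathtt{Regret}_{L_{\Omega_T}}(\bm\theta,\bm\eta)\ge\langle\mathcal{L}^{\bm\rho}\bm\pi^{*},\bm\mu\rangle+T(\bm\mu)=\sum_{t\in\widehat{\mathcal{Y}}}\pi^{*}_t\bigl(\langle\bm\ell^{\bm\rho}(t),\bm\mu\rangle+T(\bm\mu)\bigr),
\]
where the last step uses $\sum_t\pi^{*}_t=1$. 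A direct computation from the $(\bm\rho,\bm\ell^{\bm\rho})$-decomposition combined with \eqref{transformed_target_bayes_risk} shows that each summand is exactly $\mathtt{Regret}_\ell(t,\bm\eta)$, so the lower bound becomes a convex combination $\sum_t\pi^{*}_t\,\mathtt{Regret}_\ell(t,\bm\eta)$.

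Finally, because every target regret is non-negative, I can retain only the term $t=\varphi(\bm\theta)$ in the sum to get $\mathtt{Regret}_{L_{\Omega_T}}(\bm\theta,\bm\eta)\ge\pi^{*}_{\varphi(\bm\theta)}\,\mathtt{Regret}_\ell(\varphi(\bm\theta),\bm\eta)$, and the $\bm\pi$-argmax link definition together with the pigeonhole estimate $\max_t\pi^{*}_t\ge 1/N$ yields $\pi^{*}_{\varphi(\bm\theta)}\ge 1/N$, which rearranges to \eqref{linear_regret_bound}. The only subtle point is to ensure the \emph{same} $\bm\pi^{*}\in\Pi(\bm\theta)$ is used both to realize the infimum in $\Omega_T^{*}$ and to define $\varphi(\bm\theta)$; Lemma~\ref{conjugate} guarantees existence of such a selector for every $\bm\theta\in\mathbb{R}^\varrho$, so this causes no issue. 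No differentiability or strong convexity of $\Omega$ is needed for the bound itself---only Condition~\ref{c1}---which matches the hypotheses of the theorem.
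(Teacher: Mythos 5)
Your proof is correct and follows essentially the same route as the paper's: derive a closed form for the surrogate regret via Fenchel--Moreau, substitute $\Omega_T^*(\bm\theta)=\Omega^*(\bm\theta+\mathcal{L}^{\bm\rho}\bm\pi^*)$ from Lemma~\ref{conjugate}, apply the Fenchel--Young inequality for the base negentropy $\Omega$ to strip off the non-negative part, recognize the residual as a convex combination of target regrets, and close with the pigeonhole bound $\pi^*_{\varphi(\bm\theta)}\ge 1/N$. The paper factors the middle steps out as a standalone lemma (Lemma~\ref{lower}), whereas you inline them, and you are in fact slightly more careful than the paper's one-line proof in making explicit that the same selector $\bm\pi^*=\bm\pi(\bm\theta)$ must be used both to realize $\Omega_T^*(\bm\theta)$ and to define $\varphi(\bm\theta)$.
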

The regret bound in Theorem~\ref{main_link} indeed has the linear rate $\psi(r)=Nr$ in the form \eqref{informal},
while $L_{\Omega_T}$ can be naturally made convex and smooth as discussed in Section~\ref{s32}.
This is a remarkable consequence because many authors have previously implied the impossibility
to overcome the barrier of the square-root regret rate with convex smooth surrogate losses~\citep{mahdavi,rama_BEP,poly_regret,regret_proper2}.
The crux of this success lies in the infimal convolution structure in \eqref{comb_conj},
which enables us to additively decompose the conjugate $\Omega_T^*$, exploited below.
The constant $N$ will be improved in Section~\ref{section:improved_bound}.

From now on we sketch the regret bound proof.
The following lemma is a cornerstone herein. 
\begin{lemma}[Lower bound of surrogate regret]
\label{lower}
Assume the same set of conditions as in Theorem~\ref{main_link}.
For any $\bm{\theta}\in\mathbb{R}^{ d }$ and $\bm{\pi}\in\Pi(\bm{\theta})$,
we have the following inequality:
\begin{align}
\label{surrogate_lower}
\sum_{t=1}^{N}\pi_{t}\mathtt{Regret}_{\ell}(t,\bm{\eta})\leq\mathtt{Regret}_{L_{\Omega_{T}}}\!(\bm{\theta},\bm{\eta}),~~~\text{for any~} (\bm\theta,\bm{\eta})\in\mathbb{R}^ d \times\Delta^{K}.
\end{align}
\end{lemma}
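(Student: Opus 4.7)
The plan is to unfold both sides of \eqref{surrogate_lower} into explicit expressions involving the base negentropy $\Omega$ and the loss matrix $\mathcal{L}^{\bm\rho}$, and then close the gap by a single application of the Fenchel--Young inequality for $(\Omega,\Omega^*)$.

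First I would rewrite the target regret using the $(\bm\rho,\bm\ell^{\bm\rho})$-decomposition. Setting $\bm p^\star\coloneqq\mathbb{E}_{y\sim\bm\eta}[\bm\rho(y)]$, the identity \eqref{transformed_target_bayes_risk} together with \eqref{self} gives $R_{\ell}(t,\bm\eta)=\langle\bm p^\star,\bm\ell^{\bm\rho}(t)\rangle+\sum_y\eta_y c(y)$ and $\underline{R}_\ell(\bm\eta)=-T(\bm p^\star)+\sum_y\eta_y c(y)$, hence $\mathtt{Regret}_\ell(t,\bm\eta)=\langle\bm p^\star,\bm\ell^{\bm\rho}(t)\rangle+T(\bm p^\star)$. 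Averaging against $\bm\pi\in\Delta^N$ collapses the columns of $\mathcal{L}^{\bm\rho}$ and yields
\[
  \sum_{t=1}^{N}\pi_t\mathtt{Regret}_\ell(t,\bm\eta)=\langle\bm p^\star,\mathcal{L}^{\bm\rho}\bm\pi\rangle+T(\bm p^\star).
\]

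Next I would compute the surrogate regret. Expanding the expectation of $L_{\Omega_T}(\bm\theta,y)$ yields $R_{L_{\Omega_T}}(\bm\theta,\bm\eta)=\Omega_T^*(\bm\theta)+\mathbb{E}_y[\Omega_T(\bm\rho(y))]-\langle\bm\theta,\bm p^\star\rangle$; since only the first and last terms depend on $\bm\theta$, and since $\Omega_T$ is proper convex l.s.c.\ (by Lemma~\ref{conjugate} applied to $\Omega_T^{**}=\Omega_T$) with $\bm p^\star\in\mathrm{conv}(\bm\rho(\mathcal{Y}))\subseteq\mathrm{dom}(\Omega_T)$, biconjugacy gives $\inf_{\bm\theta}[\Omega_T^*(\bm\theta)-\langle\bm\theta,\bm p^\star\rangle]=-\Omega_T(\bm p^\star)$. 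Thus
\[
  \mathtt{Regret}_{L_{\Omega_T}}(\bm\theta,\bm\eta)=\Omega_T^*(\bm\theta)+\Omega_T(\bm p^\star)-\langle\bm\theta,\bm p^\star\rangle.
\]
Applying Lemma~\ref{conjugate} to replace $\Omega_T^*(\bm\theta)$ by $\Omega^*(\bm\theta+\mathcal{L}^{\bm\rho}\bm\pi)$ for any $\bm\pi\in\Pi(\bm\theta)$ and substituting $\Omega_T=\Omega+T$ puts the surrogate regret in the form
\[
  \Omega^*(\bm\theta+\mathcal{L}^{\bm\rho}\bm\pi)+\Omega(\bm p^\star)+T(\bm p^\star)-\langle\bm\theta,\bm p^\star\rangle.
\]

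Comparing with the averaged target regret reduces \eqref{surrogate_lower} to
$\Omega^*(\bm\theta+\mathcal{L}^{\bm\rho}\bm\pi)+\Omega(\bm p^\star)\ge\langle\bm\theta+\mathcal{L}^{\bm\rho}\bm\pi,\bm p^\star\rangle$,
which is the Fenchel--Young inequality for $\Omega$ evaluated at the pair $(\bm\theta+\mathcal{L}^{\bm\rho}\bm\pi,\bm p^\star)$. The likely point of friction is the intermediate identity $\underline{R}_{L_{\Omega_T}}(\bm\eta)=\mathbb{E}_y[\Omega_T(\bm\rho(y))]-\Omega_T(\bm p^\star)$: one needs Condition~\ref{c1} to place $\bm p^\star$ in $\mathrm{dom}(\Omega_T)$ so that biconjugacy applies, and one must be careful that the infimum over $\bm\theta\in\mathbb{R}^\varrho$ (the surrogate's score domain) agrees with the supremum that defines $\Omega_T^{**}(\bm p^\star)$; the domain assumption $\mathrm{dom}(\Omega_T^*)=\mathbb{R}^\varrho$ from Lemma~\ref{conjugate} exactly covers this. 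Once this step is established, the rest is algebraic rearrangement plus one Fenchel--Young inequality.
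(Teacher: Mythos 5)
Your proposal is correct and follows essentially the same route as the paper's proof: compute $\underline{R}_{L_{\Omega_T}}(\bm\eta)$ via biconjugacy of the proper convex l.s.c.\ $\Omega_T$, substitute $\Omega_T^*(\bm\theta)=\Omega^*(\bm\theta+\mathcal{L}^{\bm\rho}\bm\pi)$ for $\bm\pi\in\Pi(\bm\theta)$ together with $\Omega_T=\Omega+T$, and close the gap with the Fenchel--Young inequality for $(\Omega,\Omega^*)$ at $(\bm\theta+\mathcal{L}^{\bm\rho}\bm\pi,\mathbb{E}_{y\sim\bm\eta}[\bm\rho(y)])$. The only difference is presentational (you simplify the target-regret side first and then match terms, while the paper peels the nonnegative Fenchel--Young residual off the surrogate regret), and you correctly flag the role of Condition~\ref{c1} in justifying the biconjugacy step.
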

Its proof, formally given in Appendix~\ref{apendix_A}, hinges on the following additive regret decomposition:
\[
    \mathtt{Regret}_{L_{\Omega_T}}\!(\bm\theta,\bm\eta)
    = \hspace{-12pt} \underbrace{R_{L_\Omega}(\bm\theta+\mathcal{L}^{\bm\rho}\bm\pi, \bm\eta)}_\text{Risk of Fenchel--Young loss $L_\Omega$ $\ge0$} \hspace{-4pt}
    + \underbrace{\sum\nolimits_{t\in\widehat{\mathcal{Y}}}\pi_t\mathtt{Regret}_\ell(t,\bm\eta)}_\text{Convex combination of the target regret}
    \!\forall(\bm\theta,\bm\pi)\in\mathbb{R}^ d \times\Pi(\bm\theta),
\]
which directly implies the lower bound \eqref{surrogate_lower} because the Fenchel--Young loss $L_\Omega$ is non-negative.
Note that this \textit{additive} decomposition is indispensable for the linear lower bound \eqref{surrogate_lower}.
This becomes possible just because we create the convolutional negentropy $\Omega_T$ based on the additive form in \eqref{target_specific_negentropy},
which yields the additive form $\bm\theta+\mathcal{L}^{\bm\rho}\bm\pi$ in the conjugate expression \eqref{comb_conj}.
All of these are thanks to the structure of the infimal convolution, 
{and Theorem~\ref{main_link} immediately follows with rescaling.}

Finally, we achieve the main result by combining Corollary~\ref{main_loss} and Theorem~\ref{main_link}.
We constructively prove the existence of convex smooth linear-regret surrogate losses,
by noting that both strongly convex $\Omega$ satisfying Condition~\ref{c1} and $\bm{\pi}$-argmax link do exist.
\begin{theorem}[Main result]
    \label{main_formal}
    Consider a target loss $\ell$ with $(\bm\rho,\bm\ell^{\bm\rho})$-decomposition.
    For a negentropy $\Omega:\mathbb{R}^ d \to\overline{\mathbb{R}}$ satisfying Condition~\ref{c1},
    suppose that $\Omega$ is strongly convex on $\mathrm{dom}(\Omega)$.
    Then the convolutional Fenchel--Young loss $L_{\Omega_T}(\cdot,y)$ is convex, differentiable, and smooth over $\mathbb{R}^ d $ for any $y\in\mathcal{Y}$.
    Moreover, with $\bm\pi$-argmax link $\varphi$, $(L_{\Omega_T},\varphi)$ enjoys the linear surrogate regret bound~\eqref{linear_regret_bound}.
\end{theorem}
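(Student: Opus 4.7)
The plan is to derive Theorem~\ref{main_formal} by assembling the two previously established structural results, Corollary~\ref{main_loss} and Theorem~\ref{main_link}. The hypotheses of Theorem~\ref{main_formal}---Condition~\ref{c1} together with strong convexity of $\Omega$ on $\mathrm{dom}(\Omega)$---are tailored precisely so that both earlier results apply simultaneously, reducing the task to a clean synthesis rather than a fresh argument. The only additional content I would need is to verify that the $\bm\pi$-argmax link is well-defined under these hypotheses.

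First, I would argue that $L_{\Omega_T}(\cdot,y)$ is convex, differentiable, and smooth over $\mathbb{R}^\varrho$ for every $y\in\mathcal{Y}$ by directly invoking Corollary~\ref{main_loss}. Strong convexity of $\Omega$ on $\mathrm{dom}(\Omega)$ implies strict convexity there, so the first clause of Corollary~\ref{main_loss} yields convexity and differentiability. The second clause, whose trigger is precisely strong convexity, then upgrades differentiability to smoothness. No further computation is required beyond matching the hypotheses.

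Next, for the linear surrogate regret bound I would appeal to Theorem~\ref{main_link}. That theorem requires Condition~\ref{c1} (already assumed) and the existence of a $\bm\pi$-argmax link $\varphi$. For the latter, I would observe that Definition~\ref{proposed_link} is meaningful as soon as $\Pi(\bm\theta)$ in \eqref{conj_minimizers} is non-empty for every $\bm\theta\in\mathbb{R}^\varrho$, which is exactly the attainment clause in Lemma~\ref{conjugate}. Fixing any selector $\bm\pi(\bm\theta)\in\Pi(\bm\theta)$ and letting $\varphi$ be the corresponding $\bm\pi$-argmax link, the bound \eqref{linear_regret_bound} follows verbatim from Theorem~\ref{main_link}.

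The main (minor) obstacle is simply to confirm that the construction is non-vacuous, i.e., that at least one $\Omega$ satisfies both Condition~\ref{c1} and strong convexity on $\mathrm{dom}(\Omega)$; otherwise the theorem would be an empty statement. This is already discharged by the examples in the paragraph following Corollary~\ref{main_loss}: the squared norm $\Omega(\bm p)=\tfrac12\|\bm p\|^{2}$ is strongly convex on all of $\mathbb{R}^\varrho$ with $\mathrm{dom}(\Omega^{*})=\mathbb{R}^\varrho$, and the Shannon negentropy $\Omega(\bm p)=\langle\bm p,\ln\bm p\rangle+\mathbb{I}_{\Delta^{K}}(\bm p)$ is strongly convex on $\Delta^K$ with $\mathrm{dom}(\Omega^{*})=\mathbb{R}^K$ under the trivial encoding $\bm\rho(y)=\bm e_y$; each satisfies the inclusion $\mathrm{conv}(\bm\rho(\mathcal{Y}))\subseteq\mathrm{dom}(\Omega)$. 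Hence the synthesis goes through constructively for every discrete target loss, delivering a concrete convex smooth surrogate paired with a prediction link enjoying the linear regret bound.
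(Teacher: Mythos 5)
Your proof is correct and follows the same route the paper itself uses: the authors state explicitly that Theorem~\ref{main_formal} is obtained "by combining Theorems~\ref{main_loss} and~\ref{main_link}," with the well-definedness of the $\bm\pi$-argmax link secured by the attainment clause of Lemma~\ref{conjugate} and non-vacuousness witnessed by the squared-norm and Shannon examples. Your write-up simply spells out this synthesis in more detail.
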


\subsection{Improving Constant of Linear Surrogate Regret Bounds}
\label{section:improved_bound}
The constant $N$ in the linear surrogate regret bound in \eqref{linear_regret_bound} can be prohibitively large,
leading to a vacuous bound.
This issue is significant in structured prediction as studied in \citet{Self_osokin}.
For example, in multilabel classification with the Hamming loss,
$N=2^{ d }$ is the number of all the potential binary label predictions that increases exponentially with $ d $,
the number of binary labels.
In top-$k$ classification,
$N=\binom{K}{k}$ is the number of all possible size-$k$ subsets of the class space.
Thankfully, the geometry property of the problem \eqref{comb_conj} provides a promising scheme for reducing the dependency on prediction number $N$,
which induces the following improved link. 
\begin{corollary}[Improved surrogate regret bound]
\label{improved}
Suppose Condition~\ref{c1} holds.
There exists $\tilde{\bm{\pi}}:\mathbb{R}^{ d }\to\Delta^{N}$
such that $\tilde{\bm{\pi}}(\bm\theta)\in\Pi(\bm{\theta})$ and
$\|\tilde{\bm{\pi}}(\bm\theta)\|_{0}\leq \mathrm{affdim}(\mathcal{L}^{\bm{\rho}})+1$
for any $\bm\theta\in\mathbb{R}^ d $.
Moreover, with the induced $\tilde{\bm{\pi}}$-argmax link $\varphi_{\tilde{\bm\pi}}:\mathbb{R}^ d \to\Delta^N$,
$(L_{\Omega_T},\varphi_{\tilde{\bm\pi}})$ admits the following surrogate regret bound:
\begin{align}
\label{linear_improved}
\mathtt{Regret}_{\ell}(\varphi_{\tilde{\bm{\pi}}}(\bm{\theta}),\bm{\eta})\leq [\mathrm{affdim}(\mathcal{L}^{\bm{\rho}})+1]\,\mathtt{Regret}_{L_{\Omega_{T}}}\!(\bm{\theta},\bm{\eta}),
\quad \text{for any~} (\bm\theta,\bm{\eta})\in\mathbb{R}^ d \times\Delta^{K},
\end{align}           
where $\mathrm{affdim}(\mathcal{L}^{\bm{\rho}})$ is the dimension of the affine hull of the column vectors of $\mathcal{L}^{\bm{\rho}}$.
\end{corollary}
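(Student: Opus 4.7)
The plan is to keep the lower bound from Lemma~\ref{lower} intact and instead sharpen the passage from $\sum_t \pi_t \mathtt{Regret}_\ell(t,\bm\eta)$ to $\pi_{\varphi(\bm\theta)} \mathtt{Regret}_\ell(\varphi(\bm\theta),\bm\eta)$ by producing a minimizer $\tilde{\bm\pi}(\bm\theta)\in\Pi(\bm\theta)$ whose support is bounded by $\mathrm{affdim}(\mathcal{L}^{\bm\rho})+1$ rather than $N$. The key observation is that the objective in \eqref{comb_conj} depends on $\bm\pi$ only through $\mathcal{L}^{\bm\rho}\bm\pi$; so any $\bm\pi'\in\Delta^N$ satisfying $\mathcal{L}^{\bm\rho}\bm\pi'=\mathcal{L}^{\bm\rho}\bm\pi$ is itself a minimizer whenever $\bm\pi\in\Pi(\bm\theta)$. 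This reduces the construction of $\tilde{\bm\pi}$ to a pure convex-geometry question about representations of $\mathcal{L}^{\bm\rho}\bm\pi$ as a convex combination of the columns of $\mathcal{L}^{\bm\rho}$.

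First I would invoke Carath\'{e}odory's theorem in its affine-hull form: since each column $\bm\ell^{\bm\rho}(t)$ lies in the affine hull of $\{\bm\ell^{\bm\rho}(1),\dots,\bm\ell^{\bm\rho}(N)\}$, which has dimension $d\coloneqq\mathrm{affdim}(\mathcal{L}^{\bm\rho})$, every point of $\mathrm{conv}(\{\bm\ell^{\bm\rho}(t)\}_{t\in\widehat{\mathcal{Y}}})$ admits a representation as a convex combination of at most $d+1$ columns. Applying this to $\mathcal{L}^{\bm\rho}\bm\pi$ for any $\bm\pi\in\Pi(\bm\theta)$ (which is non-empty by Lemma~\ref{conjugate}) yields $\tilde{\bm\pi}\in\Delta^N$ with $\mathcal{L}^{\bm\rho}\tilde{\bm\pi}=\mathcal{L}^{\bm\rho}\bm\pi$ and $\|\tilde{\bm\pi}\|_0\le d+1$. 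By the dependence-on-image remark above, $\tilde{\bm\pi}\in\Pi(\bm\theta)$. Selecting one such $\tilde{\bm\pi}(\bm\theta)$ for each $\bm\theta\in\mathbb{R}^\varrho$ defines the desired map.

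For the regret inequality, I would plug $\tilde{\bm\pi}(\bm\theta)$ into Lemma~\ref{lower} to obtain
\begin{align*}
\mathtt{Regret}_{L_{\Omega_T}}\!(\bm\theta,\bm\eta)
\ge \sum_{t=1}^{N}\tilde{\pi}_t(\bm\theta)\,\mathtt{Regret}_{\ell}(t,\bm\eta)
\ge \tilde{\pi}_{\varphi_{\tilde{\bm\pi}}(\bm\theta)}(\bm\theta)\,\mathtt{Regret}_{\ell}(\varphi_{\tilde{\bm\pi}}(\bm\theta),\bm\eta).
\end{align*}
Since $\tilde{\bm\pi}(\bm\theta)\in\Delta^N$ has at most $d+1$ non-zero entries summing to $1$, a pigeonhole argument shows that its maximal coordinate, attained at $\varphi_{\tilde{\bm\pi}}(\bm\theta)$ by definition of the $\tilde{\bm\pi}$-argmax link, satisfies $\tilde{\pi}_{\varphi_{\tilde{\bm\pi}}(\bm\theta)}(\bm\theta)\ge 1/(d+1)$. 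Rearranging gives exactly the bound \eqref{linear_improved}.

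The main obstacle is the first part: verifying that the affine-hull version of Carath\'{e}odory still produces a representative $\tilde{\bm\pi}\in\Delta^N$ that lies in $\Pi(\bm\theta)$. This requires articulating that the conjugate objective only sees $\mathcal{L}^{\bm\rho}\bm\pi$, which although elementary should be spelled out explicitly so that the reduction to the geometric statement is transparent. Once that is in place, the regret inequality follows from Lemma~\ref{lower} and a one-line pigeonhole bound, mirroring the proof of Theorem~\ref{main_link} with $N$ replaced by $d+1$.
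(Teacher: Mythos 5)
Your proposal is correct and follows essentially the same route as the paper: select $\bm\pi\in\Pi(\bm\theta)$, apply Carath\'eodory's theorem to $\mathcal{L}^{\bm\rho}\bm\pi$ to obtain a sparse $\tilde{\bm\pi}$ that is still a minimizer (since the objective depends on $\bm\pi$ only through $\mathcal{L}^{\bm\rho}\bm\pi$), then feed $\tilde{\bm\pi}$ into Lemma~\ref{lower} and use pigeonhole on its at most $\mathrm{affdim}(\mathcal{L}^{\bm\rho})+1$ nonzero entries. You actually make explicit the ``objective sees only $\mathcal{L}^{\bm\rho}\bm\pi$'' step that the paper leaves tacit, and your $\geq 1/(d+1)$ bound is the right (non-strict) form of the pigeonhole inequality, whereas the paper writes a strict $>$ which is slightly too strong but immaterial.
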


Note that we have $\mathrm{affdim}(\mathcal{L}^{\bm{\rho}})\leq\min\{N, d \}$,
which indicates that the dependency on $N$ can be largely reduced when $ d \ll N$.
For example, in top-$k$ classification, $\bm{\rho}(y)=\bm{e}_{y}$ is used with $ d =K$,
which is much smaller than $N=\binom{K}{k}$.
In multilabel classification with Hamming loss, $ d =\log_{2}N$ is logarithmically smaller than $N$ when binary encoding $\bm{\rho}(y)=\bm\nu(y)$ is used (see Section~\ref{s21}). 

\subsection{Bonus: Fisher-consistent Probability Estimator}
\label{s34}
We discuss a benefit of convex smooth surrogate losses beyond discrete prediction problems.
Oftentimes people are interested in a probability estimator over possible prediction outcomes, recovering from surrogate risk minimizers,
as in classification with rejection~\citep{chow1970optimum,bart_r,cortes,cortes2,caoyuzhou,mao_reject,jessie_abs}.
Herein non-smooth surrogate losses have been unfavored because of lacking reasonable probability estimators~\citep{masnadi2008design,Nontawat}.
\citet{rama_BEP} conjectures the incompatibility of probability estimation with linear regret bounds.
By contrast, we give a Fisher-consistent estimator of the linear property $\mathbb{E}_{y\sim\bm\eta}[\bm\rho(y)]$
for convolutional Fenchel--Young losses without sacrificing the linear regret bound.
\begin{theorem}[Fisher-consistent probability estimator]
\label{main_prob}
Suppose Condition~\ref{c1} holds and $\Omega$ is strictly convex on $\mathrm{dom}(\Omega)$.
For any $\bm{\eta}\in\mathrm{relint}(\Delta^{K})$, the surrogate risk $R_{L_{\Omega_{T}}}(\cdot,\bm{\eta})$ is minimized at $\bm{\theta}^{*}\in\mathbb{R}^{ d }$
such that $\nabla\Omega^{*}(\bm{\theta}^{*}+\mathcal{L}^{\bm{\rho}}\bm{\pi})=\mathbb{E}_{y\sim\bm{\eta}}[\bm{\rho}(y)]$ for any $\bm{\pi}\in\Pi(\bm{\theta}^{*})$. 
\end{theorem}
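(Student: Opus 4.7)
The plan is to reduce minimization of $R_{L_{\Omega_T}}(\cdot, \bm\eta)$ to a standard Fenchel--Young problem and then read off its first-order condition using the envelope identity of Lemma~\ref{grad}. Setting $\bar{\bm p} \coloneqq \mathbb{E}_{y \sim \bm\eta}[\bm\rho(y)]$ and pushing the expectation into the definition \eqref{proposed}---which is legitimate because the $\min_{\bm\pi}$ term does not depend on $y$---the risk decomposes as
\[
R_{L_{\Omega_T}}(\bm\theta, \bm\eta) = \Omega_T^*(\bm\theta) - \langle \bm\theta, \bar{\bm p}\rangle + \mathbb{E}_{y\sim\bm\eta}[\Omega_T(\bm\rho(y))],
\]
where Lemma~\ref{conjugate} identifies the first summand with $\Omega_T^*$. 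The last summand is independent of $\bm\theta$, so minimizing the risk in $\bm\theta$ reduces to solving $\min_{\bm\theta \in \mathbb{R}^\varrho}\{\Omega_T^*(\bm\theta) - \langle \bm\theta, \bar{\bm p}\rangle\}$.

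Next I would observe that $\Omega_T^*$ is differentiable on $\mathbb{R}^\varrho$. This is an immediate by-product of Theorem~\ref{main_loss}: the convolutional Fenchel--Young loss $L_{\Omega_T}(\bm\theta,y) = \Omega_T^*(\bm\theta) + \Omega_T(\bm\rho(y)) - \langle\bm\theta,\bm\rho(y)\rangle$ is differentiable in $\bm\theta$ under strict convexity of $\Omega$, and subtracting the linear term leaves $\Omega_T^*$ differentiable on $\mathbb{R}^\varrho$. The first-order optimality condition for a minimizer $\bm\theta^*$ therefore reads $\nabla\Omega_T^*(\bm\theta^*) = \bar{\bm p}$.

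The final step is to unfold $\nabla\Omega_T^*$ via Lemma~\ref{grad}: for any $\bm\pi \in \Pi(\bm\theta^*)$, one has $\nabla\Omega_T^*(\bm\theta^*) = \nabla\Omega^*(\bm\theta^* + \mathcal{L}^{\bm\rho}\bm\pi)$. Substituting into the optimality condition yields $\nabla\Omega^*(\bm\theta^* + \mathcal{L}^{\bm\rho}\bm\pi) = \mathbb{E}_{y\sim\bm\eta}[\bm\rho(y)]$ uniformly over selectors, which is precisely the claimed identity.

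The main obstacle will be establishing \emph{existence} of such a minimizer $\bm\theta^*$, i.e., ensuring $\bar{\bm p}$ lies in the range of $\nabla\Omega_T^*$. Here the hypothesis $\bm\eta \in \mathrm{relint}(\Delta^K)$ is essential: it places $\bar{\bm p}$ in $\mathrm{relint}(\mathrm{conv}(\bm\rho(\mathcal{Y})))$, and combined with Condition~\ref{c1} one can show $\partial\Omega_T(\bar{\bm p})$ is non-empty; by Fenchel--Young duality any $\bm\theta^* \in \partial\Omega_T(\bar{\bm p})$ attains the infimum $-\Omega_T(\bar{\bm p})$ of the reduced objective. A secondary subtlety is that $\Pi(\bm\theta^*)$ may be multi-valued; however, Lemma~\ref{grad} was already formulated to hold for every selector, which is exactly what allows the conclusion to be stated for \emph{any} $\bm\pi \in \Pi(\bm\theta^*)$ rather than merely for some specific choice.
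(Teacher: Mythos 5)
Your proposal is correct and follows essentially the same route as the paper's proof: decompose the surrogate risk, invoke differentiability of $\Omega_T^*$ (a by-product of Theorem~\ref{main_loss}), write the first-order condition $\nabla\Omega_T^*(\bm\theta^*)=\mathbb{E}_{y\sim\bm\eta}[\bm\rho(y)]$, unfold it via Lemma~\ref{grad}, and secure existence from $\bm\eta\in\mathrm{relint}(\Delta^K)$ together with non-emptiness of the subdifferential on the relative interior. The paper phrases the existence step as surjectivity of $\nabla\Omega_T^*$ onto $\mathrm{relint}(\mathrm{conv}(\bm\rho(\mathcal{Y})))$ whereas you phrase it as $\partial\Omega_T(\bar{\bm p})\neq\emptyset$, but by Fenchel duality these are the same argument.
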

{As a result, the empirical minimizers w.r.t. $L_{\Omega_{T}}$ are Fisher-consistent estimators of the linear property $\mathbb{E}_{y\sim\bm{\eta}}[\bm{\rho}(y)]$.}
According to the result above, we can first solve $\Pi(\bm{\theta})$ in \eqref{conj_minimizers} and
select $\bm{\pi}\in\Pi(\bm{\theta})$.
Then, $\nabla\Omega^{*}(\bm{\theta}+\mathcal{L}^{\bm{\rho}}\bm{\pi})$ is a rational estimate of the linear property $\mathbb{E}_{y\sim\bm{\eta}}[\bm{\rho}(y)]$. 
When we follow the trivial loss decomposition \eqref{inner} with $\bm\rho(y)=\bm{e}_y$,
the estimand is $\mathbb{E}_{y\sim\bm\eta}[\bm\rho(y)]=\bm\eta\in\Delta^K$. 
For this reason, we say $\nabla\Omega^*(\bm\theta^*+\mathcal{L}^{\bm\rho}\bm\pi)$ is a ``probability'' estimator.

For example, let us recap the encoding of multilabel classification in Section~\ref{s21},
where 
$y\in\mathcal{Y}$ is a multilabel among all $2^{ d }$ possible combinations of $ d $ binary labels.
A multilabel $y$ is encoded by $\bm{\nu}(y)\in\{0,1\}^{ d }$,
and the Hamming loss is decomposed with $\bm{\rho}(y)=1-2\bm{\nu}(y)$.
Here $\mathbb{E}_{y\sim\bm{\eta}}[\bm{\nu}(y)]$ reads $\bar{\bm{\nu}}=[\mathrm{Prob}(\bm{\nu}_{i}(y)=1)]_{i=1}^{ d }$, 
whose $i$-th element indicates the likelihood of the binary class $i$ being positive.
Through the relation $\bm\rho=1-2\bm\nu$, the probability estimator $\nabla\Omega^{*}(\bm{\theta}+\mathcal{L}^{\bm{\rho}}\bm{\pi})$ is capable of recovering $\bar{\bm\nu}$ by $[1-\nabla\Omega^{*}(\bm{\theta}+\mathcal{L}^{\bm{\rho}}\bm{\pi})]/2$.

\section{Example: Multiclass Classification}
\label{s5}
We demonstrate convolutional Fenchel--Young losses for multiclass classification. 
and further examples of prediction problems can be found in Appendices \ref{example_appendix} and~\ref{binary example}, 
including detailed computational complexity analyses and visualizations of the associated binary classification losses.
In multiclass classification, the class space and the prediction space are the same: $\mathcal{Y}=\widehat{\mathcal{Y}}=[K]$.
For an input with class probability $\bm\eta\in\Delta^K$, the goal is to predict the most likely class $t\in\mathrm{argmax}_{t\in\mathcal{Y}}\,\eta_{t}$.
The target loss is the 0-1 loss $\ell_{01}(t,y)=[\![t\not=y]\!]$.

Firstly, we adopt the decomposition \eqref{inner} for this task, 
that is, $\bm{\ell}^{\bm{\rho}}(t)=[\ell_{01}(t,1),\cdots,\ell_{01}(t,K)]=\bm{1}-\bm{e}_{t}$ and $\bm{\rho}(y)=\bm{e}_{y}$, 
which corresponds to the one-hot encoding commonly used in this task. In this case, $N= d =K$, 
and $\mathcal{L}^{\bm{\rho}}=\bm{1}\bm{1}^{\top}-I$. 

Next, we move on to the convolutional negentropy $\Omega_{T}$. For the choice of $\Omega$, 
we use the Shannon negentropy, which induces the celebrated cross-entropy loss in the original Fenchel--Young loss framework \citep[Table 1]{FY}. 
Its conjugate is the log-sum-exp function $\Omega^{*}(\bm{\theta})=\ln\langle\exp(\bm\theta),\bm{1}\rangle$.
Then we can calculate the conjugate of $\Omega_{T}$ based on Lemma~\ref{conjugate}:
\begin{align}
\label{log_conj}
\Omega_{T}^{*}(\bm{\theta})=\min_{\bm{\pi}\in\Delta^{K}}\Omega^{*}(\bm{\theta}+\mathcal{L}^{\bm{\rho}}\bm{\pi})
=\min_{\bm\pi\in\Delta^K}\ln\left\langle\exp(\bm\theta+\bm1-\bm\pi),\bm1\right\rangle.
\end{align}
Since $\Omega$ is proper convex and l.s.c. with $\mathrm{conv}(\bm{\rho}(\mathcal{Y}))=\Delta^{K}=\mathrm{dom}(\Omega)$
and $\mathrm{dom}(\Omega^{*})=\mathbb{R}^{K}$, $\Omega$ satisfies Condition~\ref{c1}.
Furthermore, since $\Omega$ is strongly convex on $\Delta^{K}$, 
we can finally derive the convolutional Fenchel--Young loss \eqref{proposed} by nothing $\Omega_{T}(\bm{\rho}(y))=0$ for all $y\in\mathcal{Y}$,
as follows:
\begin{align}\label{log_loss}
L_{\Omega_{T}}(\bm{\theta},y)=\Omega_{T}^{*}(\bm{\theta})+\Omega_{T}(\bm{\rho}(y))-\langle\bm{\theta},\bm{\rho}(y)\rangle
=\min_{\bm\pi\in\Delta^K}\ln\left\langle\exp(\bm\theta+\bm1-\bm\pi),\bm1\right\rangle-\theta_y,
\end{align}
which is convex and smooth in $\bm\theta$ thanks to Theorem~\ref{main_loss}.
{While shares the form of cross-entropy loss $L_\Omega(\bm\theta,y)=\ln\langle\exp(\bm\theta),
\bm{1}\rangle-\theta_{y}$, it further incorporates an additional bounded perturbation.}
\begin{algorithm}[t]
  \caption{Exact Solution of \eqref{log_conj} in $\mathcal{O}(K\ln K)$}
  \label{algo1}
  \begin{algorithmic}[1]
    \STATE Sort $\bm{\theta}\in\mathbb{R}^K$ such that $\theta_{(1)}\geq\cdots\geq\theta_{(K)}$. 
    \STATE $n\gets\max\{k\in[K]:~1+k\theta_{(k)}>\sum_{i=1}^{k}\theta_{(i)}\}$.
    \STATE $\tau(\bm{\theta})\gets\frac{\sum_{i=1}^{n}\theta_{(i)}-1}{n}$.    
    \STATE $\pi_{\log}(\bm\theta)_{i}\gets\max\{\theta_{i}-\tau(\bm{\theta}),0\}$.
    \end{algorithmic}
\end{algorithm}

Solving the minimization problem \eqref{log_conj} is important from the computational aspects,
including the gradient calculation of $L_{\Omega_T}$ and accessing to the probability estimator given by Theorem~\ref{main_prob}.
We provide Algorithm~\ref{algo1} to solve \eqref{log_conj} with $\mathcal{O}(K\ln K)$ time.
\begin{lemma}
\label{mc_analy}
For any $\bm{\theta}\in\mathbb{R}^{K}$,
the problem \eqref{log_conj} has a unique minimizer $\bm\pi_{\log}(\bm\theta)\in\Pi(\bm\theta)$,
which can be obtained in $\mathcal{O}(K\ln K)$ time by Algorithm~\ref{algo1}.
\end{lemma}
The proof is deferred to Appendix~\ref{proof_mc_analy}, 
following from the KKT conditions.
Eventually we have 
\[
    \nabla\Omega_{T}^{*}(\bm{\theta})=\mathrm{softmax}\left(\bm{\theta}+\bm1-\bm{\pi}_{\log}(\bm{\theta})\right),
    \;\; \text{where~~}
    \mathrm{softmax}(\bm{\theta})_y=\frac{\exp(\theta_{y})}{\sum_{i=1}^{K}\exp(\theta_{i})}
    \;\; \text{for $y\in[K]$}.
\]
Algorithm~\ref{algo1} comes with a significant resemblance with the sparsemax ~\citep[Algorithm 1]{msparsemax}, 
which is determined by the similar structure shared by \eqref{log_conj} and Euclidean projection problem \citep[(2)]{msparsemax}.
Since $\Pi(\bm{\theta})$ is a singleton, 
we have the unique $\bm\pi_{\log}$-argmax link $\varphi_{\bm\pi_{\log}}$ (Definition~\ref{proposed_link}).
While we need to solve problem \eqref{log_conj} for every $\bm{\theta}$ to have access to $\nabla_{\bm\theta}L_{\Omega_T}(\bm\theta,y)$,
the prediction by the $\bm\pi_{\log}$-argmax link is much cheaper,
without requiring Algorithm~\ref{algo1},
{which indicates that we can simply use the class label with the largest score $\max_{y\in[K]}\theta_{y}$ in test time.}
The proof can be found in Appendix~\ref{proof_mc_link}.
\begin{proposition}
\label{mc_link}
A prediction link $\varphi$ is the $\bm{\pi}_{\log}$-argmax link if and only if $\varphi(\bm{\theta})\in\mathrm{argmax}_{t\in\mathcal{Y}}\theta_{t}$.
\end{proposition}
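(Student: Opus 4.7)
The plan is to use the explicit closed form of $\bm\pi_{\log}$ from Algorithm~\ref{algo1}, namely $\pi_{\log}(\bm\theta)_i=\max\{\theta_i-\tau(\bm\theta),0\}$, and observe that because the map $x\mapsto\max\{x-\tau,0\}$ is non-decreasing, the argmax over coordinates is essentially preserved. Since Corollary~\ref{mc_analy} gives $\Pi(\bm\theta)=\{\bm\pi_{\log}(\bm\theta)\}$, any $\bm\pi_{\log}$-argmax link $\varphi$ must satisfy $\varphi(\bm\theta)\in\mathop{\mathrm{argmax}}_{t\in\mathcal Y}\pi_{\log}(\bm\theta)_t$, so the proposition reduces to the set equality
\[
\mathop{\mathrm{argmax}}_{t\in\mathcal Y}\,\pi_{\log}(\bm\theta)_t \;=\; \mathop{\mathrm{argmax}}_{t\in\mathcal Y}\,\theta_t.
\]

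First, I would show that $\pi_{\log}(\bm\theta)$ must be strictly positive on the $\theta$-argmax. Since $\bm\pi_{\log}(\bm\theta)\in\Delta^{K}$, its entries sum to $1$, so at least one coordinate is strictly positive; by the formula this forces $\theta_{[1]}>\tau(\bm\theta)$, i.e. the largest coordinate of $\bm\theta$ strictly exceeds $\tau(\bm\theta)$. Consequently, for every $t^\star\in\mathop{\mathrm{argmax}}_t\theta_t$ we have $\pi_{\log}(\bm\theta)_{t^\star}=\theta_{t^\star}-\tau(\bm\theta)>0$.

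Next, I would establish the set equality in both directions via monotonicity. For any index $i$, $\pi_{\log}(\bm\theta)_i=\max\{\theta_i-\tau(\bm\theta),0\}\le \theta_{t^\star}-\tau(\bm\theta)=\pi_{\log}(\bm\theta)_{t^\star}$, showing $\mathop{\mathrm{argmax}}_t\theta_t\subseteq\mathop{\mathrm{argmax}}_t\pi_{\log}(\bm\theta)_t$. Conversely, if $i\in\mathop{\mathrm{argmax}}_t\pi_{\log}(\bm\theta)_t$, then $\pi_{\log}(\bm\theta)_i=\pi_{\log}(\bm\theta)_{t^\star}>0$, hence $\pi_{\log}(\bm\theta)_i=\theta_i-\tau(\bm\theta)$, which forces $\theta_i=\theta_{t^\star}$, giving the reverse inclusion.

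Combining these, the $\bm\pi_{\log}$-argmax and $\bm\theta$-argmax sets coincide for every $\bm\theta\in\mathbb R^{K}$, and since Definition~\ref{proposed_link} permits arbitrary tie-breaking, $\varphi$ is a $\bm\pi_{\log}$-argmax link if and only if $\varphi(\bm\theta)\in\mathop{\mathrm{argmax}}_{t\in\mathcal Y}\theta_t$. The main obstacle is purely notational rather than conceptual: one must be careful to handle the non-uniqueness of tie-breaking and to verify that the threshold $\tau(\bm\theta)$ is genuinely below $\theta_{[1]}$ (ruling out the degenerate case where the thresholding would zero out every entry), both of which are immediate from the construction of $\tau$ in Algorithm~\ref{algo1} together with the feasibility $\bm\pi_{\log}(\bm\theta)\in\Delta^{K}$ asserted in Corollary~\ref{mc_analy}.
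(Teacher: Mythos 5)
Your argument is correct and follows essentially the same route as the paper's own proof: both reduce the claim to the set equality $\mathop{\mathrm{argmax}}_t\theta_t=\mathop{\mathrm{argmax}}_t\pi_{\log}(\bm\theta)_t$, observe that $\bm\pi_{\log}(\bm\theta)\in\Delta^K$ forces $\max_t\theta_t>\tau(\bm\theta)$, and then exploit the thresholding formula $\pi_{\log}(\bm\theta)_i=\max\{\theta_i-\tau(\bm\theta),0\}$ to conclude. The paper phrases the final step as a case analysis on whether $\theta_j>\tau$ or $\theta_j\le\tau$ for $j$ outside the $\theta$-argmax, while you phrase it as two inclusions via monotonicity of $x\mapsto\max\{x-\tau,0\}$; these are the same argument in different clothing.
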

\begin{remark}
Under classification, a surrogate loss with a regret bound is \textit{classification-calibrated},
which indicates that the surrogate risk minimization eventually leads to the Bayes-optimal classifier.
In literature, \citet{Self3} and \citet{wang_unified} have investigated sufficient conditions for Fenchel--Young losses to be classification-calibrated.
Therein the base negentropy is assumed to be of Legendre-type or twice differentiable.
Our convolutional Fenchel--Young losses are interesting because we do not require these conditions
to yield both the smoothness and a regret bound.
Thus it remains open to relax these existing sufficient conditions for Fenchel--Young losses further.
\end{remark}

\section{Discussion}
\paragraph{Convex non-smooth linear regret losses.}  
Compared to existing convex non-smooth surrogates with linear regret, e.g., polyhedral losses \citep{total_poly}, 
our smooth convex surrogate has several advantages. 
First, smoothness enables more efficient optimization, as supported by results in both deterministic and stochastic optimization regimes~\citep{yuri,SAG} while it is left open to exploit specific structures arising from polyhedral losses to achieve faster optimization.
In addition, the smoothness can lead to optimistic ERM rates of estimation error \citep{smooth1}, 
offering better estimation in easier tasks. Another appealing aspect is that our loss admits a consistent probability estimator (Theorem \ref{main_prob}), 
which can be valuable for downstream tasks such as uncertainty quantification and calibration.

\paragraph{Efficient gradient calculation.}
In general discrete prediction problems,
we need to solve the minimization \eqref{comb_conj} to take a gradient of $L_{\Omega_T}$,
which is potentially demanding over a high-dimensional domain~$\Delta^N$.
To have access to the gradient, we can alternatively solve
\begin{equation}
    \label{alternative_min_problem}
    \min_{\bm{\nu}\in V}\Omega^{*}(\bm{\theta}+\bm{\nu}),
    \quad \text{where~~} V\coloneqq\mathrm{conv}\Bigl(\{\bm\ell^{\bm\rho}(t)\}_{t\in[N]}\Bigr).
\end{equation}
To see this, let us denote the minimizer of \eqref{alternative_min_problem} by $\bm\nu^*$.
Then \eqref{alternative_min_problem} is an equivalent optimization problem to \eqref{comb_conj}
because there exists $\bm\pi\in\Pi(\bm\theta)$ such that $\bm\nu^*=\mathcal{L}^{\bm\rho}\bm\pi$.
Eventually $\nabla\Omega^*(\bm\theta+\bm\nu^*)$ serves as an alternative to the gradient formula~\eqref{envelope}.
Thus we can reduce the dimensionality of the optimization problem.
For example, in multilabel classification (Section~\ref{s21}), 
$V=[0,1]^{ d }$ has a logarithmically smaller optimization dimensionality than $N=2^ d $.
We can solve \eqref{alternative_min_problem} with this box constraint efficiently by using the standard L-BFGS solver~\citep{Liu1989}.

\paragraph{Randomized prediction link.}
Recall that the $\bm\pi$-argmax link $\varphi$ (Definition~\ref{proposed_link}) deterministically outputs in the probability simplex~$\Delta^{N}$. 
Instead we can define a randomized link $\tilde\varphi$ such that
$\Pr[\tilde\varphi(\bm\theta)=t]=\pi_t(\bm\theta)$.
This yields a better regret bound by Lemma~\ref{lower}, as follows:
\[
\mathbb{E}_{\tilde{\varphi}(\bm{\theta})}[\mathtt{Regret}_{\ell}(\tilde{\varphi}(\bm{\theta}),\bm{\eta})]=\sum_{t=1}^{N}\pi_{t}(\bm{\theta})\,\mathtt{Regret}_{\ell}(t,\bm{\eta})\leq\mathtt{Regret}_{L_{\Omega_{T}}}\!(\bm{\theta},\bm{\eta})
\quad\forall(\bm\theta,\bm{\eta})\in\mathbb{R}^ d \times\Delta^{K},
\]
where the expectation is taken over the randomness of $\tilde\varphi$.
Although we adopt the \textit{expected} target regret differently from Theorem~\ref{main_link},
the constant of regret bounds is strikingly improved from $N$ to $1$,
which is dimension-free.
\citet{sakaue2024online} observes a similar regret improvement by a randomized link for online structured prediction with Fenchel--Young losses.

{\section{Conclusion}
In this work, we construct convex and smooth surrogate losses with linear surrogate regret bounds by leveraging Fenchel–Young losses and infimal convolution. Our results demonstrate that convexity, smoothness, and linear surrogate regret are compatible for arbitrary discrete prediction problems. Moreover, our loss naturally admits consistent probability estimator, bridging the gap between linear regret and estimation. We illustrate the broad applicability of our approach through examples in multiclass classification, classification with rejection, and multilabel ranking. Overall, this study highlights the utility of convex analysis as a principled tool for designing surrogate losses.
}

\section*{Acknowledgement}
We thank Shinsaku Sakaue for his valuable insights on the target-loss decomposition and for his careful review,
and we also thank the anonymous reviewers for their attentive reading of the manuscript and their many thoughtful comments and suggestions.
This research is supported by the Ministry of Education, Singapore, under its MOE AcRF Tier 2 Award MOE-T2EP20223-0003. 
Any opinions, findings and conclusions or recommendations expressed in this material are those of the author(s) and do not reflect the views of the Ministry of Education, Singapore.
YC is also supported by Google PhD Fellowship program. 
HB is supported by JST PRESTO (Grant No. JPMJPR24K6), Japan. 
Lei Feng is supported by the Big Data Computing Center of Southeast University.
\newpage

\bibliographystyle{plainnat}
\bibliography{ref.bib}

\newpage
\appendix
{
\section{Additional Discussions on Related Losses}
\label{appendix_discussion}
\paragraph{Fast rate losses.} 
Let us briefly discuss the target risk estimation error rates induced by different surrogate losses.
Suppose that a surrogate risk can be estimated with the estimation error upper bound $\epsilon_\text{est}$ and the surrogate regret rate is $\psi$.
Then, the target risk estimation error is of order $\psi(\epsilon_\text{est})$.
Hence, we need to take care of both $\psi$ and $\epsilon_\text{est}$ to discuss the target risk estimation.

On the one hand, loss functions entailing either strongly convex or exponentially concave are known to achieve fast estimation error rates with ERM 
(e.g., $\mathcal{O}(1/n)$) in standard parametric setting,
where the hypothesis class is of finite dimension~\citep{smooth1,mix,fast_rate,fast2}. 
However, {\citet[Theorem 4]{poly_regret} reveals that typical fast rate losses suffer at least from square-root regret rates,
and thus their corresponding target risk estimation error bounds are $\mathcal{O}(1/\sqrt{n})$ at least.}
On the other hand, convolutional Fenchel--Young losses (which is convex smooth but not strongly convex) yield the estimation error upper bounds of order $\mathcal{O}(1/\sqrt{n})$ under the same setting,
while the final target risk upper bounds are in order of $\mathcal{O}(1/\sqrt{n})$ thanks to the linear regret rate function.
As a result, convolutional Fenchel--Young losses achieve target regret convergence rates that are comparable to those of existing fast-rate losses.

In more general nonparametric settings,
where fast rates are often hard to achieve without additional assumptions~\citep{mendelson2008}, convolutional Fenchel--Young losses achieve a target risk estimation error bound of order $\mathcal{O}(1/\sqrt{n})$.
In contrast, strongly convex surrogate losses typically achieve a slower target risk estimation error rate than $\mathcal{O}(1/\sqrt{n})$
because the general nonparametric estimation error is $\mathcal{O}(1/\sqrt{n})$ but the regret rate are slower than $\psi(r)=\mathcal{O}(r)$.

{While we do not intend to argue that the convolutional Fenchel--Young loss is always better, we would like to highlight that it may be a good alternative when the parametric conditions for fast rates cannot be readily justified.}
It also remains an open and worthwhile question whether fast rates can be obtained for our loss under additional assumptions, such as low-noise or margin conditions.

\paragraph{Smooth non-convex linear regret losses.}
While our focus is on convex surrogates due to their favorable optimization and statistical properties, 
we note that certain smooth but non-convex surrogates, such as the mean absolute error (MAE) \citep{ghosh, mao_cs} and structured comp-sum losses with MAE \citep{mao_st}, 
also achieve linear regret. 
These methods, while typically non-convex and not equipped with probability estimator, 
offer valuable advantages in other aspects, such as robustness to label noise, 
$\mathcal{H}$-consistency, and potential benefits under adversarial conditions \citep{adv1,adv2}, where non-convexity can play a meaningful role.

\section{Deferred Proofs in Section~\ref{s3}}
Recalling the definition of $T(\bm{p})=-\min_{t\in\widehat{\mathcal{Y}}}\left\langle\bm{p},\bm{\ell}^{\bm{\rho}}(t)\right\rangle$,
which is the negative of the affinely transformed target Bayes risk in \eqref{transformed_target_bayes_risk}.
It can be inferred that it is proper convex and l.s.c., and
we have $\Omega_{T}=\Omega+T$.
\label{apendix_A}
\subsection{Proof of Lemma~\ref{conjugate}}
\label{proof_conjugate}
\begin{proof}
Since $\Omega$ and $T$ are proper convex and l.s.c.,
so are $\Omega_{T}$ and thus $\Omega_{T}^{*}$.
Then we prove \eqref{comb_conj} using the infimal convolution.
First, by noting that $T$ is nothing else but the support function of the closed convex set $\mathrm{conv}(\{-\bm{\ell}^{\bm{\rho}}(t)\}_{t\in\widehat{\mathcal{Y}}})$, we can express $T^{*}$ as follows:
\begin{align*}
    &T^{*}(\bm{\theta})=\sup_{\bm{p}\in\mathbb{R}^{ d }}\left[\langle\bm{\theta},\bm{p}\rangle-\max_{t\in\widehat{\mathcal{Y}}}\left\langle\bm{p},-\bm{\ell}^{\bm{\rho}}(t)\right\rangle\right]
    =\mathbb{I}_{\mathrm{conv}\left(\{-\bm{\ell}^{\bm{\rho}}(t)\}_{t\in\widehat{\mathcal{Y}}}\right)}(\bm{\theta}),
\end{align*}
where we use the conjugacy relationship between a support function and indicator function of a closed convex set $\mathrm{conv}(\{-\bm{\ell}^{\bm{\rho}}(t)\}_{t\in\widehat{\mathcal{Y}}})$~\citep[Section 13]{convex}.
According to Condition~\ref{c1} and the definition of $T$,
we see that both $\Omega$ and $T$ are proper convex and $T$ is continuous on $\mathrm{dom}(\Omega)$.
Then we use the infimal convolution~\citep{infimal} to derive the conjugate of $\Omega_{T}$:
\begin{align*}
\Omega_{T}^{*}(\bm{\theta})&=\inf_{\bm{\theta}'\in\mathbb{R}^{d}}\bigg[\,\Omega^{*}(\bm{\theta}-\bm{\theta}')+\mathbb{I}_{\mathrm{conv}\left(\{-\bm{\ell}^{\bm{\rho}}(t)\}_{t\in\widehat{\mathcal{Y}}}\right)}(\bm{\theta}')\,\bigg]\\
&=\inf_{\bm{\theta}'\in\mathrm{conv}\left(\{-\bm{\ell}^{\bm{\rho}}(t)\}_{t\in\widehat{\mathcal{Y}}}\right)}\Omega^{*}(\bm{\theta}-\bm{\theta}')\\
&=\inf_{\bm{\pi}\in\Delta^{N}}\Omega^{*}(\bm{\theta}+\mathcal{L}^{\bm{\rho}}\bm{\pi}).
\end{align*}

Next we show that the infimum is indeed achieved by some $\bm{\pi}\in\Delta^{N}$.
Note that for any $\bm{\theta}\in\mathbb{R}^{ d }$,
the set $\Theta\coloneqq\{\bm{\theta}+\mathcal{L}^{\bm{\rho}}\bm{\pi}:\bm{\pi}\in\Delta^{N}\}$ is compact and non-empty,
and that $\Omega^{*}$ is l.s.c. on $\Theta$ since $\mathrm{dom}({\Omega^{*}})=\mathbb{R}^{ d }$.
Then the infimum of $\Omega^{*}(\bm{\theta})$ is achieved by some $\bm{\theta}''\in\Theta$,
and there must exist $\bm{\pi}\in\Delta^{N}$ such that $\bm{\theta}+\mathcal{L}^{\bm{\rho}}\bm{\pi}=\bm{\theta}''$ by the definition of $\Theta$,
which complete the proof of the existence of the minimizer.

Finally we see that for any $\bm{\theta}\in\mathbb{R}^{ d }$,
there exists $\bm{\pi}\in\Delta^{N}$ such that $\Omega_{T}^{*}(\bm{\theta})=\Omega^{*}(\bm{\theta}+\mathcal{L}^{\bm{\rho}}\bm{\pi})$,
which is smaller than $+\infty$ since $\mathrm{dom}(\Omega^{*})=\mathbb{R}^{ d }$ and $\bm{\theta}+\mathcal{L}^{\bm{\rho}}\bm{\pi}\in\mathbb{R}^{ d }$.
This indicates that $\mathrm{dom}(\Omega_{T}^{*})=\mathbb{R}^{ d }$ because $\bm{\theta}$ is chosen arbitrarily.
\end{proof}

\subsection{Proof of Theorem~\ref{main_loss}}
\label{proof_loss}
\begin{proof}
In the convolutional Fenchel--Young loss \eqref{proposed}, the term $\Omega_{T}(\bm{\rho}(y))-\langle\bm{\theta},\bm{\rho}(y)\rangle$ is linear.
Hence it suffices to prove the convexity and smoothness of the conjugate $\min_{\bm{\pi}\in\Delta^{N}}\Omega^{*}(\bm{\theta}+\mathcal{L}^{\bm{\rho}}\bm{\pi})$. 

When $\Omega$ is strictly convex, $\Omega_{T}$ is also strictly convex since $T$ is convex.
Since the strict convexity holds on $\mathrm{dom}(\Omega_{T})$,
$\Omega_{T}$ is further essentially strictly convex, that is, strictly convex on every convex subset of $\mathrm{dom}(\partial\Omega_{T})$~\citep[p253]{convex}.
According to \citet[Theorem 26.3]{convex}, $\Omega_{T}^{*}$ is essentially smooth,
that is, differentiable throughout non-empty $\mathrm{int}(\mathrm{dom}(\Omega_{T}^{*}))$
with $\|\nabla\Omega_{T}^{*}(\bm{\theta})\|$ diverging to $+\infty$
when $\bm{\theta}$ approaches a boundary point of $\mathrm{int}(\mathrm{dom}(\Omega_{T}^{*}))$,
which immediately indicates the differentiability on $\mathbb{R}^{ d }$.

When $\Omega$ is further strongly convex, so is $\Omega_{T}$.
According to Condition~\ref{c1} and Lemma~\ref{conjugate}$,
\Omega_{T}$ is proper convex and l.s.c., and thus the biconjugate $\Omega_{T}^{**}$ matches $\Omega_{T}$ by the Fenchel--Moreau theorem.
According to \citet[Proposition 12.60]{variational}, $\Omega_{T}^{*}$ is smooth since its conjugate $\Omega_{T}^{**}=\Omega_{T}$ is strongly convex.
\end{proof}

\subsection{Proof of Lemma~\ref{grad}}
\label{proof_grad}
\begin{proof}
According to Condition~\ref{c1}, the strict convexity of $\Omega$, and the proof of Theorem~\ref{main_loss},
both $\Omega_{T}^{*}$ and $\Omega^{*}$ are differentiable on $\mathbb{R}^{ d }$.
In addition, we have $\mathrm{dom}(\Omega)=\mathrm{dom}(\Omega_{T})$ according to \eqref{target_specific_negentropy}. 

Note that $\Omega^{*}(\bm{\theta}+\mathcal{L}^{\bm{\rho}}\bm{\pi})$ is convex in $\bm{\pi}$.
Then its first-order optimal condition for any $\bm{\pi}\in\Pi(\bm{\theta})$ reads
\[
\bigl\langle\nabla\Omega^{*}(\bm{\theta}+\mathcal{L}^{\bm{\rho}}\bm{\pi}), \mathcal{L}^{\bm{\rho}}(\bm{\pi}'-\bm{\pi})\bigr\rangle\geq0
\qquad \text{any $\bm{\pi}'\in\Delta^{N}$.}
\]
First, for any $\bm{\pi}\in\Pi(\bm{\theta})$,
we choose
\[
  t\in\mathrm{argmin}_{t'\in\widehat{\mathcal{Y}}}\langle\nabla\Omega^{*}(\bm{\theta}+\mathcal{L}^{\bm{\rho}}\bm{\pi}),\bm{\ell}^{\bm{\rho}}(t')\rangle,
\]
then we have
\begin{align*}
\Omega^{*}(\bm{\theta}+\mathcal{L}^{\bm{\rho}}\bm{\pi})
&\overset{\text{(A)}}=(\bm{\theta}+\mathcal{L}^{\bm{\rho}}\bm{\pi})^{\top}\nabla\Omega^{*}(\bm{\theta}+\mathcal{L}^{\bm{\rho}}\bm{\pi})-\Omega(\nabla\Omega^{*}(\bm{\theta}+\mathcal{L}^{\bm{\rho}}\bm{\pi}))\\
&=\bm{\theta}^{\top}\nabla\Omega^{*}(\bm{\theta}+\mathcal{L}^{\bm{\rho}}\bm{\pi})+\left\langle\nabla\Omega^{*}(\bm{\theta}+\mathcal{L}^{\bm{\rho}}\bm{\pi}), \mathcal{L}^{\bm{\rho}}\bm{\pi}\right\rangle-\Omega(\nabla\Omega^{*}(\bm{\theta}+\mathcal{L}^{\bm{\rho}}\bm{\pi}))\\
&\overset{\text{(A)}}\leq\bm{\theta}^{\top}\nabla\Omega^{*}(\bm{\theta}+\mathcal{L}^{\bm{\rho}}\bm{\pi})+\left\langle\nabla\Omega^{*}(\bm{\theta}+\mathcal{L}^{\bm{\rho}}\bm{\pi}), \mathcal{L}^{\bm{\rho}}\bm{e}_{t}\right\rangle-\Omega(\nabla\Omega^{*}(\bm{\theta}+\mathcal{L}^{\bm{\rho}}\bm{\pi}))\\
&=\bm{\theta}^{\top}\nabla\Omega^{*}(\bm{\theta}+\mathcal{L}^{\bm{\rho}}\bm{\pi})+\langle\nabla\Omega^{*}(\bm{\theta}+\mathcal{L}^{\bm{\rho}}\bm{\pi}),\bm{\ell}^{\bm{\rho}}(t)\rangle-\Omega(\nabla\Omega^{*}(\bm{\theta}+\mathcal{L}^{\bm{\rho}}\bm{\pi}))\\
&\overset{\text{(B)}}=\bm{\theta}^{\top}\nabla\Omega^{*}(\bm{\theta}+\mathcal{L}^{\bm{\rho}}\bm{\pi})+\underbrace{\min_{t'\in\widehat{\mathcal{Y}}}\langle\nabla\Omega^{*}(\bm{\theta}+\mathcal{L}^{\bm{\rho}}\bm{\pi}),\bm{\ell}^{\bm{\rho}}(t')\rangle}_{=-T\left(\nabla\Omega^*(\bm\theta+\mathcal{L}^{\bm\rho}\bm\pi)\right)}-\Omega(\nabla\Omega^{*}(\bm{\theta}+\mathcal{L}^{\bm{\rho}}\bm{\pi}))\\
&\overset{\text{(C)}}=\bm{\theta}^{\top}\nabla\Omega^{*}(\bm{\theta}+\mathcal{L}^{\bm{\rho}}\bm{\pi})-\Omega_{T}(\nabla\Omega^{*}(\bm{\theta}+\mathcal{L}^{\bm{\rho}}\bm{\pi}))\\
&\overset{\text{(D)}}\leq\sup_{\bm{p}\in\mathrm{dom}(\Omega_{T})}\bm{\theta}^{\top}\bm{p}-\Omega_{T}(\bm{p})\\
&=\Omega_{T}^{*}(\bm{\theta}),
\end{align*}
where (A) owes to the optimality of $\bm\pi\in\Pi(\bm\theta)$,
(B) holds by the definition of $t$, and
(C) holds by the definition of the convolutional negentropy $\Omega_T$ \eqref{target_specific_negentropy}.
Since $\bm{\pi}\in\Pi(\bm{\theta})$, we have $\Omega^{*}(\bm{\theta}+\mathcal{L}^{\bm{\rho}}\bm{\pi})=\Omega_{T}^{*}(\bm{\theta})$ according to Lemma~\ref{conjugate}.
Thus the above inequality is indeed an identity.
In particular, (D) becomes an identity,
which implies that the supremum of $\sup_{\bm{p}\in\mathrm{dom}(\Omega_{T})}\bm{\theta}^{\top}\bm{p}-\Omega_{T}(\bm{p})$ is achieved at $\nabla\Omega^{*}(\bm{\theta}+\mathcal{L}^{\bm{\rho}}\bm{\pi})$.
Since the supremum is also achieved at $\nabla\Omega_{T}^{*}(\bm{\theta})$ and the maximizer is unique according to \citet[Proposition 11.3]{variational} and the differentiability of $\Omega^{*}_{T}$,
we have
\[
  \nabla\Omega_{T}^{*}(\bm{\theta})=\nabla\Omega^{*}(\bm{\theta}+\mathcal{L}^{\bm{\rho}}\bm{\pi}).
\]
Since $\bm{\pi}\in\Pi(\bm{\theta})$ and $\bm{\theta}\in\mathbb{R}^{ d }$ are chosen arbitrarily,
this concludes the proof.
\end{proof}
\subsection{Proof of Theorem \ref{main_link}}
\begin{proof}
Fix any $\bm\theta\in\mathbb{R}^ d $
and choose any $\bm\pi\in\Delta^N$ out of $\Pi(\bm\theta)$.
Then we have 
\begin{align*}
    \forall\bm\eta\in\Delta^K,\;\;
    \mathtt{Regret}_{L_{\Omega_{T}}}\!(\bm{\theta},\bm{\eta})
    &\geq\sum\nolimits_{t=1}^{N}\pi_{t}\mathtt{Regret}_{\ell}(t,\bm{\eta})
        && \textit{(by \eqref{surrogate_lower})} \\
    &\geq\pi_{\varphi(\bm{\theta})}\mathtt{Regret}_{\ell}(\varphi(\bm{\theta}),\bm{\eta})
        && \textit{(because $\pi_t\ge0$ for any $t\in\widehat{\mathcal{Y}}$)} \\
    &\geq\mathtt{Regret}_{\ell}(\varphi(\bm{\theta}),\bm{\eta})/N.
        && \textit{(by definition of $\varphi$ in Definition~\ref{proposed_link})}
\end{align*}
\end{proof}

\subsection{Proof of Lemma~\ref{lower}}
\label{cpe_used}
\begin{proof}
First, we derive the Bayes risk of the convolutional Fenchel--Young loss $L_{\Omega_{T}}$ as follows:
\begin{align*}
\underline{R}_{L_{\Omega_{T}}}(\bm{\eta})&=\inf_{\bm{\theta}\in\mathbb{R}^{ d }}R_{L_{\Omega_{T}}}(\bm{\theta},\bm{\eta})\\
&=\inf_{\bm{\theta\in\mathbb{R}^{ d }}}\Bigl[\Omega_{T}^{*}(\bm{\theta})-\langle\bm{\theta},\mathbb{E}_{y\sim\bm{\eta}}[\bm{\rho}(y)]\rangle\Bigr]+\mathbb{E}_{y\sim\bm{\eta}}[\Omega_{T}(\bm{\rho}(y))]\\
&=-\sup_{\bm{\theta\in\mathbb{R}^{ d }}}\Bigl[\langle\bm{\theta},\mathbb{E}_{y\sim\bm{\eta}}[\bm{\rho}(y)]\rangle-\Omega_{T}^{*}(\bm{\theta})\Bigr]+\mathbb{E}_{y\sim\bm{\eta}}[\Omega_{T}(\bm{\rho}(y))]\\
&=-\Omega_{T}^{**}\Bigl(\mathbb{E}_{y\sim\bm{\eta}}[\bm{\rho}(y)]\Bigr)+\mathbb{E}_{y\sim\bm{\eta}}[\Omega_{T}(\bm{\rho}(y))]\\
&=\mathbb{E}_{y\sim\bm{\eta}}[\Omega_{T}(\bm{\rho}(y))]-\Omega_{T}\Bigl(\mathbb{E}_{y\sim\bm{\eta}}[\bm{\rho}(y)]\Bigr)
\end{align*}
where the Fenchel--Moreau theorem is applied to proper convex and l.s.c. $\Omega_{T}^{*}$ (by Lemma~\ref{conjugate}) at the last identity.
Then we can get the following regret lower bound for any $\bm{\pi}\in\Pi(\bm{\theta})$:
\begin{align*}
\mathtt{Regret}_{L_{\Omega_{T}}}(\bm{\theta},\bm{\eta})&=R_{L_{\Omega_{T}}}(\bm{\theta},\bm{\eta})-\underline{R}_{L_{\Omega_{T}}}(\bm{\eta})\\
&=\Omega_{T}^{*}(\bm{\theta})-\Bigl\langle\bm{\theta},\mathbb{E}_{y\sim\bm{\eta}}[\bm{\rho}(y)]\Bigr\rangle+\Omega_{T}\Bigl(\mathbb{E}_{y\sim\bm{\eta}}[\bm{\rho}(y)]\Bigr)\\
&=\Omega^{*}(\bm{\theta}+\mathcal{L}^{\bm{\rho}}\bm{\pi})-\Bigl\langle\bm{\theta},\mathbb{E}_{y\sim\bm{\eta}}[\bm{\rho}(y)]\Bigr\rangle+\Omega\Bigl(\mathbb{E}_{y\sim\bm{\eta}}[\bm{\rho}(y)]\Bigr)+T\Bigl(\mathbb{E}_{y\sim\bm{\eta}}[\bm{\rho}(y)]\Bigr)\\
&\overset{\text{(A)}}=\underbrace{\Omega^{*}(\bm{\theta}+\mathcal{L}^{\bm{\rho}}\bm{\pi})-\Bigl\langle\bm{\theta}+\mathcal{L}^{\bm{\rho}}\bm{\pi},\mathbb{E}_{y\sim\bm{\eta}}[\bm{\rho}(y)]\Bigr\rangle+\Omega\Bigl(\mathbb{E}_{y\sim\bm{\eta}}[\bm{\rho}(y)]\Bigr)}_{\ge0\quad\text{due to the Fenchel--Young inequality}}\\
&~~~+\Bigl\langle\mathcal{L}^{\bm{\rho}}\bm{\pi},\mathbb{E}_{y\sim\bm{\eta}}[\bm{\rho}(y)]\Bigr\rangle+T\Bigl(\mathbb{E}_{y\sim\bm{\eta}}[\bm{\rho}(y)]\Bigr) \\
&\overset{\text{(B)}}\geq\sum_{t=1}^{N}\pi_{t}\Bigl\langle\mathbb{E}_{y\sim\bm{\eta}}[\bm{\rho}(y)],\bm{\ell}^{\bm{\rho}}(t)\Bigr\rangle-\min_{t\in\widehat{\mathcal{Y}}}\Bigl\langle\mathbb{E}_{y\sim\bm{\eta}}[\bm{\rho}(y)],\bm{\ell}^{\bm{\rho}}(t)\Bigr\rangle\\
&\overset{\text{(C)}}=\sum_{t=1}^{N}\pi_{t}\mathtt{Regret}_{\ell}(t,\bm{\eta}),
\end{align*}
where (A) holds according to the explicit form of $\Omega_{T}^{*}$ in \eqref{comb_conj} and the definition of $\Pi(\bm{\theta})$,
(B) owes to Fenchel--Young inequality, and
(C) holds according to the definition of $(\bm{\rho},\bm{\ell}^{\bm{\rho}})$-decomposition in \eqref{self}.
This concludes the proof.
\end{proof}

{\subsection{Proof of Corollary \ref{improved}}
\begin{proof}
Fix an arbitrary $\bm\theta\in\mathbb{R}^ d $.
Since $\Pi(\bm{\theta})$ is non-empty, we can select $\bm{\pi}$ from  $\Pi(\bm{\theta})$.
Note that $\mathcal{L}^{\bm{\rho}}\bm{\pi}\in\mathrm{conv}(\{\bm{\ell}^{\bm{\rho}}(t)\}_{t\in\widehat{\mathcal{Y}}})$.
According to Carathéodory's theorem, there exists $\tilde{\bm{\pi}}$ such that
$\|\tilde{\bm{\pi}}\|_{0}\leq \mathrm{affdim}(\mathcal{L}^{\bm{\rho}})+1$ and $\mathcal{L}^{\bm{\rho}}\tilde{\bm{\pi}}=\mathcal{L}^{\bm{\rho}}\bm{\pi}$
is also in $\Pi(\bm{\theta})$.
Since $\bm{\theta}$ is chosen arbitrarily, we can then construct a single-valued function $\tilde{\bm{\pi}}:\mathbb{R}^ d \to\Delta^N$
such that $\tilde{\bm{\pi}}(\bm{\theta})\in\Pi(\bm{\theta})$ and $\|\tilde{\bm{\pi}}(\bm{\theta})\|_{0}\leq \mathrm{affdim}(\mathcal{L}^{\bm{\rho}})+1$.
Noting that $\max_{t\in\widehat{\mathcal{Y}}}\tilde{\pi}_{t}(\bm\theta)>{1}/[\mathrm{affdim}(\mathcal{L}^{\bm{\rho}})+1]$,
we can complete the rest of the proof similarly to Theorem~\ref{main_link}. 
\end{proof}
\begin{remark}Interestingly, \citet{rama_cvxdim} also used the affine dimension of the loss matrix to study the dimension of convex surrogates, suggesting that this concept plays a important role in loss function design and deserves more attention.
\end{remark}
}

\subsection{Proof of Theorem~\ref{main_prob}}
\begin{proof}
First of all, we prove that $\mathrm{relint}(\mathrm{conv}(\bm{\rho}(\mathcal{Y})))$ is in the image of $\nabla\Omega_{T}^{*}$ by contradiction. 
According to \citet[Theorem 23.4]{convex}, we have that $\partial\Omega_{T}(\bm{p})$ is non-empty for all $\bm{p}\in\mathrm{relint}(\mathrm{dom}(\Omega_{T}))$.
Now suppose there exists $\bm{p}\in\mathrm{relint}(\mathrm{dom}(\Omega_{T}))$
such that $\bm{p}\not=\nabla\Omega_{T}^{*}(\bm{\theta})$
for any $\bm{\theta}\in\mathbb{R}^{ d }$. 
Since $\partial\Omega_{T}(\bm{p})$ is non-empty,
there exists $\bm{\theta}'\in\mathbb{R}^{ d }$ such that $\bm{\theta}'=\nabla\Omega_{T}(\bm{p})$. 
By \citet[Proposition 11.3]{variational} (applied on proper l.s.c. $\Omega_T$),
we have $\bm{p}=\nabla\Omega_{T}^{*}(\bm{\theta'})$,
which contradicts the assumption $\bm{p}\not=\nabla\Omega_{T}^{*}(\bm{\theta}')$. 
Thus we have verified that $\mathrm{relint}(\mathrm{dom}(\Omega_T))$ is in the image of $\nabla\Omega_T^*$.
This additionally implies that $\mathrm{relint}(\mathrm{conv}(\bm\rho(\mathcal{Y}))) \subseteq \mathrm{relint}(\mathrm{dom}(\Omega_T))$
is also in the image of $\nabla\Omega_T^*$ because of Condition~\ref{c1}.

Now we fix $\bm\eta\in\mathrm{relint}(\Delta^K)$ and note that
\[
  R_{L_{\Omega_{T}}}(\bm{\theta},\bm{\eta})=\Omega_{T}^{*}(\bm{\theta})+\mathbb{E}_{y\sim\bm{\eta}}[\Omega_{T}(\bm{\rho}(y))]-\langle\bm{\theta},\mathbb{E}_{y\sim\bm{\eta}}[\bm{\rho}(y)]\rangle
\]
is differentiable and convex in $\bm{\theta}$. 
If $\bm\theta^*\in\mathbb{R}^ d $ is the minimizer of $R_{L_{\Omega_T}}(\cdot,\bm\eta)$,
its optimality condition indicates
\[
  \nabla_{\bm{\theta}} R_{L_{\Omega_{T}}}(\bm{\theta}^*,\bm{\eta})=\nabla\Omega_{T}^{*}(\bm{\theta}^*)-\mathbb{E}_{y\sim\bm{\eta}}[\bm{\rho}(y)]=0.
\]
When $\bm{\eta}\in\mathrm{relint}(\Delta^{K})$,
we have $\mathbb{E}_{y\sim\bm{\eta}}[\bm{\rho}(y)]\in\mathrm{relint}(\mathrm{conv}(\bm{\rho}(\mathcal{Y})))$,
and thus $\bm\theta^*$ satisfying the above optimality condition does exist.
Finally, we conclude the proof by combining Lemmas~\ref{conjugate} and~\ref{grad}.
\end{proof}

\section{Deferred Proofs in Section~\ref{s5}}
\subsection{Proof of Lemma~\ref{mc_analy}}
\label{proof_mc_analy}
\begin{proof}
Denote by $V_{\bm{\theta}}(\bm{\pi})\coloneqq\sum_{i=1}^{K}e^{\theta_{i}+1-\pi_{i}}$.
Since $\ln(\cdot)$ is strictly increasing on $\mathbb{R}_{>0}$,
$V_{\bm{\theta}}(\bm{\pi})$ shares the same minimizer as $\ln\bigl(\sum_{i=1}^{K}e^{\theta_{i}+1-\pi_{i}}\bigr)$.
The Lagrangian $\mathcal{F}$ of the minimization problem $V_{\bm\theta}$ for $\bm\pi\in\Delta^K$ is written as follows:
\begin{align*}
\mathcal{F}(\bm{\pi},\bm{\alpha},\beta)
= \sum_{i=1}^{K}e^{\theta_{i}+1-\pi_{i}}-\bm{\alpha}^{\top}\bm{\pi}+\beta\left(\sum_{i=1}^{K}\pi_{i}-1\right),
\end{align*}
where $\alpha_1,\dots,\alpha_K\ge0$ and $\beta$ are the Lagrangian multipliers.
Since $V_{\bm{\theta}}(\bm{\pi})$ is convex and differentiable, and the feasible region $\Delta^K$ is convex,
the KKT conditions are necessary and sufficient for the optimality of $\bm{\pi}^{*}\in\Pi(\bm{\theta})$,
which requires that there exists $(\bm{\alpha}^{*},\beta^{*})$ satisfying
\begin{align}
\label{stationary}&-e^{\theta_{y}+1-\pi_{y}^{*}}-\alpha_{y}^{*}+\beta^{*}=0, && \text{for any $y=1,\dots,K$,}\\
\label{feasibility}&\bm{1}^{\top}\bm{\pi}^{*}=1, \quad \bm{\pi}^{*}\geq0, \quad \bm{\alpha}^{*}\geq0,\\
\label{slack}&\alpha^{*}_{y}\pi_{y}^{*}=0, && \text{for any $y=1,\dots,K$.}
\end{align}
The conditions~\eqref{feasibility} and~\eqref{slack} indicates that $\pi_{y}^{*}>0\implies\alpha_{y}^{*}=0$. 
Then according to \eqref{stationary}, we have
\begin{align}
\label{non-zero}
\pi_{y}^{*}>0\implies\pi_{y}^{*}=\theta_{y}-(\ln\beta^{*}-1).
\end{align} 
Meanwhile, for $\pi_{y}^{*}=0$, \eqref{slack} indicates that $\alpha_{y}^{*}\geq 0$, which further indicates
\begin{align}
\label{zero}
\pi_{y}^{*}=0\implies\theta_{y}=\ln(\beta^{*}-\alpha_{y}^{*})-1\leq\ln\beta^{*}-1.    
\end{align}
Then \eqref{non-zero} and \eqref{zero} can be rewritten as follows:
\begin{align}
    \pi_{y}^{*}=\max\{\theta_{y}-(\ln\beta^{*}-1),0\}.
\end{align}
According to~\eqref{feasibility} and $\ln\beta^{*}-1\in\mathbb{R}$,
the KKT conditions can be further simplified into the existence of $\tau^{*}\in\mathbb{R}$ such that the following conditions simultaneously hold:
$$\begin{cases}
\pi_{y}^{*}=\max\{\theta_{y}-\tau^{*},0\},\\
\sum_{y=1}^{K}\max\{\theta_{y}-\tau^{*},0\}=1.
\end{cases}$$
Denote by $f(\tau)\coloneqq\sum_{y=1}^{K}\max\{\theta_{y}-\tau,0\}$.
Then $f$ is continuous on $\mathbb{R}$,
and moreover, it is strictly decreasing on $(-\infty,\theta_{(1)}]$
and equals $0$ for $\tau\geq\theta_{(1)}$ because of the following expression:
\[
f(\tau)=
\begin{cases}
\sum_{y=1}^{K}(\theta_{y}-\tau) & \text{if $\tau\leq\theta_{(K)}$,}\\
\sum_{i=1}^{k-1}(\theta_{(i)}-\tau) &\text{if $\tau\in[\theta_{(k)},\theta_{(k-1)}]$ for $k=2,\dots,K$,}\\
0  & \text{if $\tau>\theta_{(1)}$}.
\end{cases}
\]
Then $n$ defined in line~2 of Algorithm~\ref{algo1} exists since $f(\theta_{(1)})=0<1$. 
We also have that $f(\theta_{(n)})=\sum_{i=1}^{n}(\theta_{(i)}-\theta_{(n)})<1$. 
When $n<K$, we have that $f(\theta_{(n+1)})=\sum_{i=1}^{n+1}(\theta_{(i)}-\theta_{(n+1)})=\sum_{i=1}^{n}(\theta_{(i)}-\theta_{(n+1)})\geq1$ by definition.
Then we see
\[
  \frac{\sum_{i=1}^{n}\theta_{(i)}-1}{n}\in[\theta_{(n+1)},\theta_{(n)}]
\]
and
\begin{align*}
f\left(\frac{\sum_{i=1}^{n}\theta_{(i)}-1}{n}\right)&=\sum_{i=1}^{n}\left(\theta_{(i)}-\frac{\sum_{i=1}^{n}\theta_{(i)}-1}{n}\right)\\
&=\sum_{i=1}^{n}\theta_{(i)}-\sum_{i=1}^{n}\theta_{(i)}+1\\
&=1.
\end{align*}

When $n=K$, we have that $\sum_{i=1}^{K}\theta_{(i)}-K\theta_{(k)}<1$,
that is,
\[
  \frac{\sum_{i=1}^{K}\theta_{(i)}-1}{K}<\theta_{[K]}.
\]
Then
\begin{align*}
f\left(\frac{\sum_{i=1}^{K}\theta_{(i)}-1}{K}\right)&=\sum_{i=1}^{K}\left(\theta_{(i)}-\frac{\sum_{i=1}^{K}\theta_{(i)}-1}{K}\right)\\
&=\sum_{i=1}^{K}\theta_{(i)}-\sum_{i=1}^{K}\theta_{(i)}+1\\
&=1.
\end{align*}
Combining these two cases, we have
\[
  \tau^{*}=\frac{\sum_{i=1}^{n}\theta_{(i)}-1}{n}<\theta_{(1)},
\]
which is what line~3 of Algorithm~\ref{algo1} returns.
Since $f(\tau)$ is strictly decreasing on $(-\infty,\theta_{(1)}]$,
$\tau^{*}$ uniquely satisfies $f(\tau^{*})=1$. 

Since $\sum_{i=1}^{k}\theta_{(i)}$ can be calculated cumulatively in $\mathcal{O}(K)$ time
and the quick sort runs in $\mathcal{O}(K\log K)$ time,
Algorithm~\ref{algo1} runs in $\mathcal{O}(K\log K)$ time in total.
Furthermore, since $\tau^{*}$ is unique and $\pi_{y}^{*}=\max\{\theta_{y}-\tau^{*},0\}$,
we ensure that $\Pi(\bm{\theta})$ is a singleton, which concludes the proof.
\end{proof}

\subsection{Proof of Proposition~\ref{mc_link}}
\label{proof_mc_link}
\begin{proof}
By Definition~\ref{proposed_link}, the statement is equivalent to
\[
  \mathop\mathrm{argmax}_{t\in\mathcal{Y}}\theta_{t}
  =\mathop\mathrm{argmax}_{t\in\mathcal{Y}} {{\pi}}_{\log}(\bm{\theta})_{t}.
\]
According to Lemma~\ref{mc_analy} and Algorithm~\ref{algo1},
there exists $\tau\in\mathbb{R}$ such that $\pi_{\log}(\theta)_{i}=\max\{\theta_{i}-\tau,0\}$. 
We also have that $\max_{t}\theta_{t}>\tau$ by contradiction:
if $\max_{t}\theta_{t}\leq\tau$, $\bm{\pi}_{\log}(\bm{\theta})=\bm{0}$ holds,
which contradicts $\bm{\pi}_{\log}(\bm{\theta})\in\Delta^{K}$. 

Denote the set $\mathrm{argmax}_{t\in\mathcal{Y}}\theta_{t}$ by $\mathcal{I}$.
For any $i\in\mathcal{I}$, we have $\pi_{\log}(\bm{\theta})_{i}=\max\{\theta_{i}-\tau,0\}=\max_{t}\theta_{t}-\tau>0$.
For any $j\not\in\mathcal{I}$,
\begin{itemize}
    \item If $\theta_{j}>\tau$: $\pi_{\log}(\bm{\theta})_{j}=\max\{\theta_{j}-\tau,0\}=\theta_{j}-\tau<\max_{t}\theta_{t}-\tau$,
    \item If $\theta_{j}\leq\tau$: $\pi_{\log}(\bm{\theta})_{j}=\max\{\theta_{j}-\tau,0\}=0<\max_{t}\theta_{t}-\tau$.
\end{itemize}
These imply that for any $i\in\mathcal{I}$, $\pi_{\log}(\bm{\theta})_{i}\geq\pi_{\log}(\bm{\theta})_{j}$ and the equality holds if and only if $j\in\mathcal{I}$, which concludes the proof.
\end{proof}

\section{Additional Examples}
\label{example_appendix}
In this section, we further provide more examples of target losses to demonstrate that 
we can generate convex smooth surrogate losses for a wide range of target prediction problems. 
The generated convolutional Fenchel--Young losses are automatically guaranteed to entail linear surrogate regret bounds thanks to Theorem~\ref{main_link}.

\subsection{Multiclass Classification with Rejection}

\paragraph{Problem setup.} In multiclass classification with rejection \citep{chow1970optimum}, the class space is $\mathcal{Y}=[K]$,
and the prediction space $\widehat{\mathcal{Y}}=[K+1]$,
which is augmented by a rejection option $K+1$.
We focus on the case that rejection cost $c$ is in $[0,0.5)$ here.
For an input instance with class probability $\bm\eta\in\Delta^K$, 
the goal is to predict the most likely class label $t\in\mathrm{argmax}_{t\in\mathcal{Y}}\,\eta_{t}$ if $\max_{y\in\mathcal{Y}}\,\eta_{y}>1-c$ for the predetermined cost $c$, 
and refrain from predicting otherwise. 
The standard target loss is the 0-1-$c$ loss:
$$
\ell_{01c}(t,y)=\begin{cases}
[\![t\not=y]\!],&t\in[K],\\
c,&t=K+1,
\end{cases}
$$
that is, the prediction suffers from the ordinary classification error if it is a wrong class label, 
and suffers from an intermediate error $c$ if it chooses to refrain from prediction.

We adopt the decomposition \eqref{inner} for this task: $\bm{\ell}^{\bm{\rho}}(t)=[\ell_{01c}(t,1),\cdots,\ell_{01}(t,K)]=\bm{1}-\bm{e}_{t}$ if $t\not=K+1$,
and $\bm{\ell}^{\bm{\rho}}(t)=c\bm{1}$ if $t=K+1$.
We choose the label encoding function $\bm\rho(y) ...$ as in the multiclass classification task.  
In this case, $N=K+1$ and $ d =K$, 
and $\mathcal{L}^{\bm{\rho}}=[\bm{1}\bm{1}^{\top}-I,c\bm{1}]$, where $\bm{1}$ is $K$-dimensional. 

\paragraph{Loss formulation and calculation.}
In this case, we consider the Shannon negentropy $\Omega$ as in Section \ref{s5}.
Based on the discussion above and Lemma \ref{conjugate}, the conjugated convolutional negentropy $\Omega_T^*$ can be written as follows:\footnote{We emphasize that the domain of conjugate is $K$-dimensional, despite the $\bm{\pi}$ is in $K+1$-simplex.}
\begin{align}
\label{opt_rej}
    \Omega_{T}^{*}(\bm{\theta})=\min_{\bm{\pi}\in\Delta^{K+1}}\Omega^{*}(\bm{\theta}+\mathcal{L}^{\bm{\rho}}\bm{\pi})=\min_{\bm\pi\in\Delta^{K+1}}\ln\left(\sum_{i=1}^{K}\exp(\theta_{i}+1-\pi_{i}-(1-c)\pi_{K+1})\right),
\end{align}
and we can then get the corresponding convolutional Fenchel--Young loss as follows, by noting  $\Omega_{T}(\bm{\rho}(y))=0$ for all $y\in\mathcal{Y}$:
\begin{align}
\label{log_rej}
    L_{\Omega_{T}}(\bm{\theta},y)=\min_{\bm\pi\in\Delta^{K+1}}\ln\left(\sum_{i=1}^{K}\exp(\theta_{i}+1-\pi_{i}-(1-c)\pi_{K+1})\right)-\theta_y.
\end{align}
To compute~$L_{\Omega_T}$, we need to know the minimizer~$\bm\pi$ in \eqref{opt_rej}. 
We show its closed form below.
\begin{lemma}\label{abstain}
For any $\bm{\theta}\in\mathbb{R}^{K}$,
the problem \eqref{opt_rej} has a unique minimizer $\bm\pi^{*}(\bm\theta)\in\Pi(\bm\theta)$,
which can be obtained in $\mathcal{O}(K)$ time.
Denote by $y^{*}$ an arbitrary element in $\mathop{\rm argmax}_{y'\in[K]}\theta_{y'}$,\footnote{Though chosen arbitrarily, uniqueness is guaranteed: argmax sets are singleton in the first and third cases.} then the minimizer can be written as follows:
$$\bm{\pi}^{*}(\bm{\theta})=\begin{cases}
    \bm{e}_{y^{*}}, & \text{if}~\frac{\exp(\theta_{y^{*}})}{\exp(\theta_{y^{*}})+\sum_{i=1,i\not=y^{*}}^{K}\exp(\theta_{i}+1)}>1-c\\
    \bm{e}_{K+1}, & \text{ if}~\frac{\exp(\theta_{y^{*}})}{\sum_{i=1}^{K}\exp(\theta_{i})}<1-c\\
    \gamma(\bm{\theta})\bm{e}_{y^{*}}+(1-\gamma(\bm{\theta}))\bm{e}_{K+1},&\text{else,}
\end{cases}$$
where $\gamma(\bm{\theta})\coloneqq-\ln\left(\frac{\sum_{i=1}^{K}\exp(\theta_{i})}{\exp(\theta_{y^{*}})}-1\right)-\ln\left(\frac{1-c}{c}\right)$.
\end{lemma}
\begin{proof}
First, the Lagrangian of the minimization problem \eqref{opt_rej} is written as follows:
\begin{align*}
\mathcal{F}(\bm{\pi},\bm{\alpha},\beta)
&=\ln\left(\sum_{i=1}^{K}\exp(\theta_{i}+1-\pi_{i}-(1-c)\pi_{K+1})\right)-\bm{\alpha}^{\top}\bm{\pi}+\beta\left(\sum_{i=1}^{K+1}\pi_{i}-1\right)\\
&=\ln\left(\sum_{i=1}^{K}\exp(\theta_{i}+1-\pi_{i})\right)-(1-c)\pi_{K+1}-\bm{\alpha}^{\top}\bm{\pi}+\beta\left(\sum_{i=1}^{K+1}\pi_{i}-1\right),
\end{align*}
where $\alpha_1,\dots,\alpha_{K+1}\ge0$ and $\beta\in\mathbb{R}$ are the Lagrangian multipliers.
Since the log-sum-exp term is convex and differentiable and the feasible region $\Delta^{K+1}$ is convex,
the KKT conditions are necessary and sufficient for the optimality of $\bm{\pi}^{*}\in\Pi(\bm{\theta})$,
which requires that there exists $(\bm{\alpha}^{*},\beta^{*})$ satisfying
\begin{align}
\label{stationary1}&\frac{\exp(\theta_{y}+1-\pi^{*}_{y})}{\sum_{i=1}^{K}\exp(\theta_{i}+1-\pi^{*}_{i})}=\beta^{*}-\alpha_{y}^{*}, && \text{for any $y=1,\dots,K$,}\\
\label{stationary2}&1-c=\beta^{*}-\alpha_{K+1}^{*}, && ~\\
\label{feasibility1}&\bm{1}^{\top}\bm{\pi}^{*}=1, \quad \bm{\pi}^{*}\geq0, \quad \bm{\alpha}^{*}\geq0,\\
\label{slack1}&\alpha^{*}_{y}\pi_{y}^{*}=0, && \text{for any $y=1,\dots,K+1$.}
\end{align}
We continue the proof with the following four observations:    
\paragraph{Observation 1.}
By noting $1-c>0.5$, we have
$$
\frac{\exp(\theta_{y^{*}})}{\sum_{i=1}^{K}\exp(\theta_{i})}\geq1-c\implies\mathop{\rm argmax}_{y'\in[K]}\theta_{y'}=\{y^{*}\}.$$


\paragraph{Observation 2.}
From the KKT conditions, we have
$$\frac{\exp(\theta_{y^{*}})}{\exp(\theta_{y^{*}})+\sum_{i=1,i\not=y^{*}}^{K}\exp(\theta_{i}+1)}>1-c\implies\bm{\pi}^{*}=\bm{e}_{y^{*}}.$$ 
To see this, let us integrate the above left-hand side with \eqref{stationary1} and \eqref{stationary2}:
\begin{align*}
\beta^{*}-\alpha_{y^*}^{*} 
&=\frac{\exp(\theta_{y^*}+1-\pi^{*}_{y^*})}{\sum_{i=1}^{K}\exp(\theta_{i}+1-\pi^{*}_{i})}&& \text{by \eqref{stationary1}} \\
&=\frac{\exp(\theta_{y^*})}{\exp(\theta_{y^{*}})+\sum_{i=1,i\not=y^{*}}^{K}\exp(\theta_{i}+\pi^*_{y^*}-\pi^*_{i})}\\
&>\frac{\exp(\theta_{y^{*}})}{\exp(\theta_{y^{*}})+\sum_{i=1,i\not=y^{*}}^{K}\exp(\theta_{i}+1)} && \text{by $\pi_{y^*}^*\le 1$ and $\pi_i^*\ge0$} \\
&\overset{(\clubsuit)}{>}1-c && \text{by the assumption} \\
&=\beta^{*}-\alpha_{K+1}^{*}, && \text{by \eqref{stationary2}}
\end{align*}
which indicates that $\alpha_{K+1}^*>\alpha_{y^*}^*\ge 0$, and thus $\pi_{K+1}^*=0$ due to \eqref{feasibility1} and \eqref{slack1}.
On the other hand,
for any $y\in[K]\setminus\{y^*\}$, we have
\begin{align*}
\beta^*-\alpha_{y}^* &=\frac{\exp(\theta_{y}+1-\pi^{*}_{y})}{\sum_{i=1}^{K}\exp(\theta_{i}+1-\pi^{*}_{i})}&& \text{by \eqref{stationary1}} \\
&\leq 1-\frac{\exp(\theta_{y^*}+1-\pi^{*}_{y^*})}{\sum_{i=1}^{K}\exp(\theta_{i}+1-\pi^{*}_{i})}\\
&\leq 1-(1-c) && \text{by the same argument as $(\clubsuit)$} \\
&= c \\
&<1-c && \text{by}~c\in[0,0.5)\\
&=\beta^*-\alpha_{K+1}^*, && \text{by \eqref{stationary2}}
\end{align*}
and thus $\alpha^{*}_{y}>\alpha_{K+1}^*>0$, 
which indicates that $\pi_{y}^{*}=0$ except for $y=y^*$ due to \eqref{feasibility1} and \eqref{slack1}. 
Thus, we verify $\bm\pi^* = \bm e_{y^*}$.


\paragraph{Observation 3.}
From the KKT conditions, we have
$$\frac{\exp(\theta_{y^{*}})}{\sum_{i=1}^{K}\exp(\theta_{i})}<1-c\implies\bm{\pi}^{*}=\bm{e}_{K+1}.$$
To see this, we show the contraposition of Observation~3.
Suppose there exists $y\in[K]$ such that $\pi_y^*>0$.
Then, we have $\alpha_y^*=0$ due to \eqref{slack1}, and consequently
\begin{equation}
\label{cwr-proof-supp-1}
\begin{aligned}
\frac{\exp(\theta_{y}+1-\pi^{*}_{y})}{\sum_{i=1}^{K}\exp(\theta_{i}+1-\pi^{*}_{i})}
&=\beta^{*} && \text{by \eqref{stationary1} and $\alpha_y^*=0$} \\
&\geq\beta^{*}-\alpha_{K+1}^* && \text{by \eqref{feasibility1}} \\
&=1-c. && \text{by \eqref{stationary2}}
\end{aligned}
\end{equation}
Then, for any $y'\in[K]\setminus\{y\}$, this inequality implies
\begin{align*}
\beta^*-\alpha_{y'}^* &=\frac{\exp(\theta_{y'}+1-\pi^{*}_{y'})}{\sum_{i=1}^{K}\exp(\theta_{i}+1-\pi^{*}_{i})}
&& \text{by \eqref{stationary1}} \\
&\leq 1-\frac{\exp(\theta_{y}+1-\pi^{*}_{y})}{\sum_{i=1}^{K}\exp(\theta_{i}+1-\pi^{*}_{i})}\\
&\leq1-(1-c) && \text{by \eqref{cwr-proof-supp-1}} \\
&=c\\
&<1-c,&&\text{by}~c\in[0.0.5)
\end{align*}
which indicates that $\alpha_{y'}^*>0$ and $\pi_{y'}^*=0$ due to \eqref{feasibility1} and \eqref{slack1}.
Then, we have
\begin{align*}
\frac{\exp(\theta_{y}+1-\pi^{*}_{y})}{\exp(\theta_{y}+1-\pi^{*}_{y})+\sum_{i=1,i\not=y}^{K}\exp(\theta_{i}+1)}&=\frac{\exp(\theta_{y}-\pi^{*}_{y})}{\exp(\theta_{y}-\pi^{*}_{y})+\sum_{i=1,i\not=y}^{K}\exp(\theta_{i})}\\
&=\frac{\exp(\theta_{y})}{\exp(\theta_{y})+\sum_{i=1,i\not=y}^{K}\exp(\theta_{i}+\pi_{y}^{*})}\\
&\geq1-c,
\end{align*}
where we used \eqref{cwr-proof-supp-1} at the last line.
This inequality further implies
\begin{align*}
\frac{\exp(\theta_{y^{*}})}{\sum_{i=1}^{K}\exp(\theta_{i})}\geq\frac{\exp(\theta_{y})}{\exp(\theta_{y})+\sum_{i=1,i\not=y}^{K}\exp(\theta_{i}+\pi_{y}^{*})}\geq1-c.
\end{align*}
Thus, the contraposition of Observation~3 is shown.


\paragraph{Observation 4.} 
From the KKT conditions again, we have
\begin{align*}
&\frac{\exp(\theta_{y^{*}})}{\sum_{i=1}^{K}\exp(\theta_{i})}\geq1-c\geq\frac{\exp(\theta_{y^{*}})}{\exp(\theta_{y^{*}})+\sum_{i=1,i\not=y^{*}}^{K}\exp(\theta_{i}+1)} \\
&\qquad \implies
\bm{\pi}^{*}=\gamma(\bm{\theta})\bm{e}_{y^{*}}+(1-\gamma(\bm{\theta}))\bm{e}_{K+1},
\end{align*}
where $\gamma(\bm{\theta})=-\ln\left(\frac{\sum_{i=1}^{K}\exp(\theta_{y})}{\exp(\theta_{y^{*}})}-1\right)-\ln\left(\frac{1-c}{c}\right)$.
To see this, we carefully examine $\bm\pi^*$.
First, if there exists different $y',~y''\in[K]$ with $\pi_{y'},\pi_{y''}>0$, 
    we have
\begin{align*}
\beta^*&=\frac{\exp(\theta_{y'}+1-\pi^{*}_{y'})}{\sum_{i=1}^{K}\exp(\theta_{i}+1-\pi^{*}_{i})}&&\text{by \eqref{stationary1} and $\alpha_{y'}^{*}=0$ since $\pi_{y'}>0$}\\
&=\frac{\exp(\theta_{y''}+1-\pi^{*}_{y''})}{\sum_{i=1}^{K}\exp(\theta_{i}+1-\pi^{*}_{i})}&&\text{by \eqref{stationary1} and $\alpha_{y''}^{*}=0$ since $\pi_{y'}>0$}\\
&\geq \beta^*-\alpha_{K+1}^{*} && \text{by $\alpha_{K+1}^*\ge0$}\\
&=1-c&&\text{by \eqref{stationary2}}\\
&>0.5,\\
\end{align*}
which is impossible since $$\frac{\exp(\theta_{y'}+1-\pi^{*}_{y'})}{\sum_{i=1}^{K}\exp(\theta_{i}+1-\pi^{*}_{i})}+\frac{\exp(\theta_{y''}+1-\pi^{*}_{y''})}{\sum_{i=1}^{K}\exp(\theta_{i}+1-\pi^{*}_{i})}\leq1.$$
This contradiction indicates that there exists at most one $y\in[K]$ such that $\pi_{y}^*>0$. 
Second, we show $y\in[K]$ with $\pi_y^*>0$ must be $y=y^*$.
To see this, suppose there exists $y\in[K]$ such that $\pi_y^*>0$ but $y\ne y^*$,
then $\pi_{y'}^*=0$ for any $y'\in[K]\setminus\{y\}$ and we have
\begin{align*}
\frac{\exp(\theta_{y}+1-\pi^{*}_{y})}{\sum_{i=1}^{K}\exp(\theta_{i}+1-\pi^*_{i})}&=
\frac{\exp(\theta_{y}-\pi^{*}_{y})}{\exp(\theta_{y}-\pi^{*}_{y})+\sum_{i=1,i\not=y}^{K}\exp(\theta_{i})}\\
&=\beta^*-\alpha_{y}^*&&\text{by \eqref{stationary1}}\\
&=\beta^* &&\text{by $\pi^*_{y}>0$ and \eqref{slack1}}\\
&\geq \beta^*-\alpha^*_{K+1} && \text{by $\alpha_{K+1}^*\ge0$}\\
&\geq1-c &&\text{by \eqref{stationary2}}\\
&>0.5.
\end{align*}
However, this contradicts the following inequality:
\begin{align*}
\frac{\exp(\theta_{y^*}+1-\pi^{*}_{y^*})}{\sum_{i=1}^{K}\exp(\theta_{i}+1-\pi^*_{i})}&=\frac{\exp(\theta_{y^*})}{\exp(\theta_{y}-\pi^{*}_{y})+\sum_{i=1,i\not=y}^{K}\exp(\theta_{i})}\\
&\geq\frac{\exp(\theta_{y^{*}})}{\sum_{i=1}^{K}\exp(\theta_{i})} && \text{by $\pi_y^*\ge0$}\\
&\geq1-c && \text{by the assumption}\\
&>0.5.
\end{align*}
Thus, we have verified that $\pi_y^*>0$ is possible with $y=y^*$ or $y=K+1$ only.
From this, we have
\begin{align}
\label{repre}
    \bm\pi^* = \pi_{y^*}^*\bm e_{y^*} + (1-\pi_{y^*}^*)\bm{e}_{K+1}.
\end{align}

From now on, we show that
\begin{equation}
\label{cwr_obs4_kkt}
\begin{cases}
  \bm\pi^* = \gamma(\bm\theta)\bm e_{y^*} + (1-\gamma(\bm\theta))\bm e_{K+1} \\
  \bm\alpha^* = \bm0 \\
  \beta^* = 1-c
\end{cases}
\end{equation}
fulfill the KKT conditions under the assumption of Observation~4.
By using \eqref{repre}, we have
\begin{align*}
\frac{\exp(\theta_{y^*}+1-\pi^*_{y^*})}{\sum_{i=1}^{K}\exp(\theta_{i}+1-\pi^*_{i})}&=\frac{\exp(\theta_{y^*}-\pi^*_{y^*})}{\exp(\theta_{y^*}-\pi^*_{y^*})+\sum_{i=1,i\not=y^*}^{K}\exp(\theta_{i})}&&\text{by \eqref{repre}}\\
&=\beta^*-\alpha_{y^*}^*&&\text{by \eqref{stationary1}}\\
&=1-c+\alpha_{y^*}^*-\alpha_{K+1}^*. &&\text{by \eqref{stationary2}}
\end{align*}
By elementary algebra and plugging in $\bm\alpha^*=\bm0$ given by \eqref{cwr_obs4_kkt}, we can solve it with respect to $\pi_{y^*}^*$ as follows:
\begin{align}
    \pi_{y^*}^{*}=-\ln\left(\frac{\sum_{i=1}^{K}\exp(\theta_{i})}{\exp(\theta_{y^*})}-1\right)-\ln\left(\frac{1-c}{c}\right)
    = \gamma(\bm\theta).
\end{align}
By rearranging the assumption of Observation~4, we have
\[
    \frac1e\frac{c}{1-c} \le \frac{\sum_{i=1}^K\exp(\theta_i)}{\exp(\theta_{y^*})} - 1 \le \frac{c}{1-c},
\]
from which we can see $0\le \pi_{y^*}^* = \gamma(\bm\theta) \le 1$.
All in all, $(\bm\pi^*,\bm\alpha^*,\beta^*)$ in \eqref{cwr_obs4_kkt} fulfill the KKT conditions, and thus Observation~4 is verified.

By combining Observations~2, 3, and 4, we have shown the closed form of $\pi^*(\bm{\theta})$.
Furthermore, Observation~1 guarantees the uniqueness of $\bm{\pi}^{*}(\bm\theta)$. 

Lastly, it is easy to see that $\bm\pi^*(\bm\theta)$ can be computed in linear time
because both of the following terms
$$
\frac{\exp(\theta_{y^{*}})}{\exp(\theta_{y^{*}})+\sum_{i=1,i\not=y^{*}}^{K}\exp(\theta_{i}+1)}
\quad \text{and} \quad
\frac{\exp(\theta_{y^{*}})}{\sum_{i=1}^{K}\exp(\theta_{i})}
$$
can be computed in linear time.
\end{proof}
\begin{remark}[Refined surrogate regret bound for CwR]
Recall that $\|\bm{\pi}^{*}(\bm{\theta})\|_{1}\leq 2$ by Lemma \ref{abstain}. Consequently, it follows that $\max_{i\in[K+1]}\pi^{*}_{i}(\bm{\theta})\geq\frac{1}{2}$.
This yields the following regret decomposition for any $(\bm\eta,\bm\theta)\in\Delta^K\times\mathbb{R}^{K}$:
\begin{align*}
    \mathtt{Regret}_{L_{\Omega_{T}}}\!(\bm{\theta},\bm{\eta})
    &\geq\sum\nolimits_{t=1}^{N}\pi_{t}\mathtt{Regret}_{\ell_{01c}}(t,\bm{\eta})
        && \text{(by \eqref{surrogate_lower})} \\
    &\geq\pi_{\varphi(\bm{\theta})}\mathtt{Regret}_{\ell_{01c}}(\varphi(\bm{\theta}),\bm{\eta})
        && \text{(since $\pi_t\ge0, \forall t\in\widehat{\mathcal{Y}}$)} \\
    &\geq\frac{1}{2}\mathtt{Regret}_{\ell_{01c}}(\varphi(\bm{\theta}),\bm{\eta}),
        && \text{(since $\pi_{\varphi(\bm{\theta})}=\max_{i\in[K+1]}\pi^{*}_{i}(\bm{\theta})\geq 1/2$)}
\end{align*}
implying that $\mathtt{Regret}_{\ell_{01c}}(\varphi(\bm{\theta}),\bm{\eta})\leq 2\mathtt{Regret}_{L_{\Omega_{T}}}(\bm{\theta},\bm{\eta})$.
This bound strictly improves upon Theorem \ref{main_link} and Corollary \ref{improved} because we have:
$$2 \leq \underbrace{\mathrm{affdim}(\mathcal{L}^{\bm{\rho}})}_{\text{\rm constant in Cor. \ref{improved}}} = K < \underbrace{N}_{\text{\rm constant in Thm. \ref{main_link}}} = K+1.$$
Notably, the constant $2$ is independent of the number of classes $K$. 
This suggests that CwR possesses a lower intrinsic complexity than standard classification. Furthermore, this refinement facilitates a sharper case-based analysis.
\end{remark}

\subsection{Multilabel Learning with Precision@\texorpdfstring{$k$}{\textit{k}}}

\paragraph{Problem setup.}
In multilabel learning, the target prediction space $\mathcal{Y}=[2^{ d }]$ is the collection of indices of all possible combinations of $ d $ binary labels with $|\mathcal{Y}|=K=2^ d $. 
Precision@$k$ is a common performance metric for multilabel ranking.
We consider that the prediction space $\widehat{\mathcal{Y}}=[\binom{ d }{k}]$ is 
the collection of indices of all possible size-$k$ subsets of multilabels with $|\widehat{\mathcal{Y}}|=N=\binom{ d }{k}$.
In addition, we encode the label $y\in\mathcal{Y}$ into $\bm\rho(y)\in\{0,1\}^ d $ and the prediction $t\in\widehat{\mathcal{Y}}$ into $\bm\mu(t)\in\{0,1\}^ d $, 
where $\{\bm\mu(t)\}_{t=1}^{N}$ is the collection of all distinct permutations of $\bm\omega=\bm{e}_1 + \dots + \bm{e}_k$.
Then, the target loss of Precision@$k$ is defined as follows:
$$\ell(t,y)=1-\frac{\sum_{i=1}^{ d }\rho(y)_{i}\mu(t)_{i}}{k}.$$
This is the portion of binary labels with value $0$ in the top-$k$ list.

We adopt the decomposition~\eqref{decomp} by using the aforementioned $\bm{\rho}(y)$, $\bm{\ell}^{\rho}(t)=-\frac{\bm{\mu}(t)}{k}$, and $c(y)=1$ for all $y\in\mathcal{Y}$.

\paragraph{Loss formulation and calculation.}
For Precision@$k$, 
we consider the base negentropy $$\Omega(\bm{p})=\sum_{i=1}^{ d }(p_{i}\ln p_{i}+(1-p_{i})\ln (1-p_{i})).$$
Then, we have
$$\Omega^*(\bm{\theta})=\sum_{i=1}^{ d }\ln(1+\exp(\theta_{i})),$$
and
\begin{align}
\Omega_{T}^{*}(\bm{\theta})=\min_{\bm{\pi}\in\Delta^{\binom{ d }{k}}}\sum_{i=1}^{ d }\ln\Biggl(1+\exp\biggl(\theta_{i}-\frac{1}{k}\sum_{t=1}^{\binom{ d }{k}}\pi_{t}\mu(t)_{i}\biggr)\Biggr).
\end{align}
We can simplify it into the following problem: 
\begin{align}\label{opt_k}
\Omega_{T}^{*}(\bm{\theta})=\min_{\bm{v}\in V}\sum_{i=1}^{ d }\ln\left(1+\exp\left(\theta_{i}-\frac{1}{k}v_{i}\right)\right),
\end{align}
where $V\coloneqq\{\bm{v}\in\mathbb{R}^{ d }:v_{i}\in[0,1],\|\bm{v}\|_{1}=k\}$. 
Then, the convolutional Fenchel--Young loss can be written as follows:
$$L_{\Omega_{T}}(\bm{\theta},y)=\min_{\bm{v}\in V}\sum_{i=1}^{ d }\ln\left(1+\exp\left(\theta_{i}-\frac{1}{k}v_{i}\right)\right)-\langle\bm{\rho}(y),\bm{\theta}\rangle.$$
To calculate the gradient of $L_{\Omega_T}(\cdot,y)$, we only need the solution $\bm{v}^*$ in the optimization problem~\eqref{opt_k},
which can be obtained efficiently.

\begin{algorithm}[t]
  \caption{Exact Solution of \eqref{opt_k} in $\mathcal{O}( d \ln  d )$}
  \label{algo2}
  \begin{algorithmic}[1]
    \STATE Set $f(\lambda)=\sum_{i=1}^{ d }\max\{0,\min\{\theta_{i}-\lambda,1\}\}$
    \STATE Sort $\tilde{\bm{\theta}}=[\bm{\theta};\bm{\theta}-\bm{1}]\in\mathbb{R}^{2 d }$ such that $\tilde{\theta}_{(1)}\geq\cdots\geq\tilde{\theta}_{(2 d )}$. 
    \STATE $n\gets\max\{n\in[2 d ]:~f(\tilde{\theta}_{(n)})<k\}$.
    \STATE $\lambda^{*}\gets\tilde{\theta}_{(n)}+\frac{k-f(\tilde{\theta}_{(n)})}{f(\tilde{\theta}_{(n+1)})-f(\tilde{\theta}_{(n)})}(\tilde{\theta}_{(n+1)}-\tilde{\theta}_{(n)})$.    
    \STATE $v^{*}_{i}\gets\max\{0,\min\{\theta_{i}-\lambda^{*},1\}\}$.
    \end{algorithmic}
\end{algorithm}

\begin{lemma}
For any $\bm{\theta}\in\mathbb{R}^{ d }$,
the minimization problem~\eqref{opt_k} has a unique minimizer $\bm{v}^{* }$,
which can be obtained in $\mathcal{O}( d \log  d )$ time with Algorithm~\ref{algo2}.
\end{lemma}
\begin{proof}
The uniqueness can be seen by noting that the objective function is strictly convex and its domain $V$ is compact and convex.

Let us analyze the Lagrangian of the minimization problem~\eqref{opt_k}, which can be written as follows:
\begin{align}
\label{lag}
\mathcal{F}(\bm{v},\bm{\alpha},\bm{\beta},\gamma)=\sum_{i=1}^{ d }\ln\left(1+\exp\left(\theta_{i}-\frac{1}{k}v_{i}\right)\right)-\bm{\alpha}^{\top}\bm{v}+\bm{\beta}^{\top}(\bm{v}-\bm{1})+\gamma\left(\sum_{i=1}^{ d }v_{i}-k\right),
\end{align}
where $\bm{\alpha}\in\mathbb{R}_{\ge0}^ d $\,, $\bm{\beta}\in\mathbb{R}_{\ge0}^ d $\,, 
and $\gamma\in\mathbb{R}$ are the Lagrangian multipliers. 
Then, we have the following KKT conditions:
\begin{align}
\label{stationary3}&\frac{\exp(\theta_{i}-v_{i}^{*})}{1+\exp(\theta_{i}-v_{i}^{*})}=\gamma^{*}+\beta^{*}_{i}-\alpha^{*}_{i}, && \text{for any $i=1,\dots, d $}\\
\label{feasibility2}&\bm{1}^{\top}\bm{v}^{*}=k, \quad \bm{0}\leq\bm{v}^{*}\leq\bm{1}, \quad \bm{\alpha}^{*}\geq\bm{0}, \quad \bm{\beta}^{*}\geq\bm{0}, \\
\label{slack2}&\alpha^{*}_{i}v_{i}^{*}=0, \quad \beta^{*}_{i}(1-v_{i}^{*})=0, && \text{for any $i=1,\dots, d $}
\end{align}
where the inequalities in \eqref{feasibility2} are element-wise.

First, we show that the KKT conditions are fulfilled by $v_i^*=\max\{0,\min\{\theta_i-\lambda^*,1\}\}$ for each $i\in[ d ]$, where $\lambda^*\coloneqq\ln(\gamma^*/(1-\gamma^*))$.
Fixing $i\in[ d ]$, we divide into cases.
If $v_i^*>\theta_i-\lambda^*$, we have
\[
  \frac{\exp(\theta_i-v_i^*)}{1+\exp(\theta_i-v_i^*)} < \gamma^*,
\]
which implies $\beta_i^*-\alpha_i^*<0$ by \eqref{stationary3}.
Since $\beta_i^*>0$ by \eqref{feasibility2}, we have $\alpha_i^*>0$, which implies $v_i^*=0$ due to \eqref{slack2}.
If $v_i^*<\theta_i-\lambda^*$, we can show $v_i^*=1$ similarly.
Thus, we have
\begin{itemize}[leftmargin=*]
    \item If $v_i^*>\theta_i-\lambda^*$, then $v_i^*=0$;
    \item If $v_i^*<\theta_i-\lambda^*$, then $v_i^*=1$.
\end{itemize}
Moreover, let us divide into cases on $\theta_i-\lambda^*$.
If $\theta_i-\lambda^*<0$, we must have $v_i^*>\theta_i-\lambda^*$ since $v_i\in[0,1]$ (due to the feasibility \eqref{feasibility2}), which yields $v_i^*=0$.
If $\theta_i-\lambda^*>1$, we have $v_i^*=1$ similarly.
If $0\le\theta_i-\lambda^*\le1$, we have $v_i^*=\theta_i-\lambda^*$ otherwise it ends up with contradiction---say, supposing $v_i^* > \theta_i - \lambda^*$, then we have $0=v_i^*> \theta_i-\lambda^*$, which contradicts the premise $\theta_i-\lambda^*\in[0,1]$.
Combining all above, we have verified that
\[
  v_i^*=\max\{0,\min\{\theta_i-\lambda^*,1\}\}
  \quad \text{for $i\in[ d ]$}
\]
fulfills the KKT conditions.

Next, we show that Algorithm~\ref{algo2} returns this $\bm v^*$.
By noting the constraint $\sum_{i=1}^ d  v_i^*=k$ in the feasibility conditions \eqref{feasibility2}, we have
\begin{align*}
(f(\lambda^*)=)
\sum_{i=1}^{ d }\max\{0,\min\{\theta_{i}-\lambda^{*},1\}\}=k.
\end{align*}
Thus, we need to solve $f(\lambda^*)=k$ to obtain $\bm v^*$.
Let us sort the elements of $\tilde{\bm\theta}:=[\bm\theta; \bm\theta-\bm 1]\in\mathbb{R}^{2 d }$ such that $\tilde\theta_{(1)} \ge \dots \ge \tilde\theta_{[2 d ]}$.
The solution to $f(\lambda^*)=k$ uniquely exists since we have $f(\tilde\theta_{(1)}) = 0 < k$, $f(\tilde\theta_{[2 d ]})= d >k$, and $f$ is continuous and strictly decreasing on $[\tilde\theta_{[2 d ]},\tilde\theta_{(1)}]$.
{From the strict decreasing nature, it can also be inferred that $n$ in Step 3 of Algorithm~\ref{algo2} is the largest possible $n\in[2 d ]$ that $f(\tilde{\theta}_{(n)})<k$ and $f(\tilde{\theta}_{(n+1)})\geq k$,
which indicates that the solution is in $(\tilde{\theta}_{n+1},\tilde{\theta}_{(n)}]$.
}
Furthermore, $f$ is linear on each segment $[\tilde\theta_{[i+1]},\tilde\theta_{(i)}]$.
{This indicates that for the point $\lambda^*\in[\tilde{\theta}_{(n+1)},\tilde{\theta}_{(n)}]$, we have
\begin{align*}
f(\tilde{\theta}_{(n+1)})+(\lambda^*-\tilde{\theta}_{(n)})\frac{f(\tilde{\theta}_{(n+1)})-f(\tilde{\theta}_{(n)})}{\tilde{\theta}_{(n+1)}-\tilde{\theta}_{(n)}}=f(\lambda^*)=k,
\end{align*}
}
which yields Step~4 in Algorithm~\ref{algo2}.


The time complexity of sorting $\tilde{\bm{\theta}}$ is $\mathcal{O}( d \log  d )$.
Note that the computation of $f(z)$ is in $\mathcal{O}( d )$ and $n$ can be found using binary search due to monotonicity of $f$ in $\mathcal{O}(\log  d )$ steps, we can conclude that Step 3 is in $\mathcal{O}( d \log  d )$.
\end{proof}

\section{A Special Case: Visualization on Binary Classification}
\label{binary example}
\begin{figure}[t]
    \begin{minipage}[t]{0.50\linewidth}
        \centering
        \includegraphics[width=\linewidth]{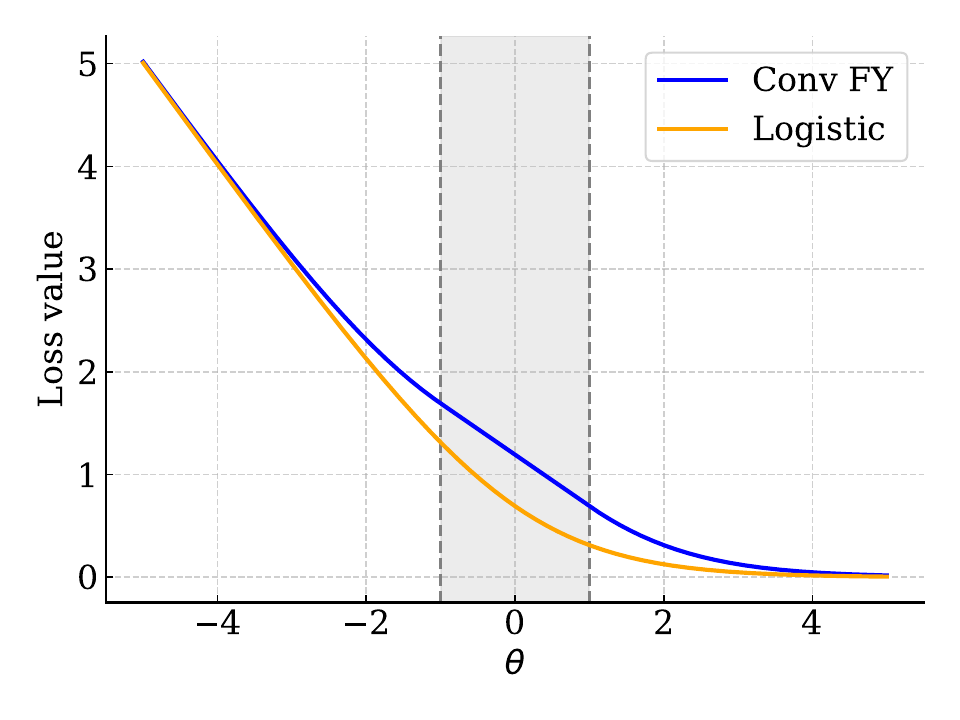}
        {(a) Loss curves for $y=+1$.}
    \end{minipage}
    \begin{minipage}[t]{0.49\linewidth}
        \centering
        \includegraphics[width=\linewidth]{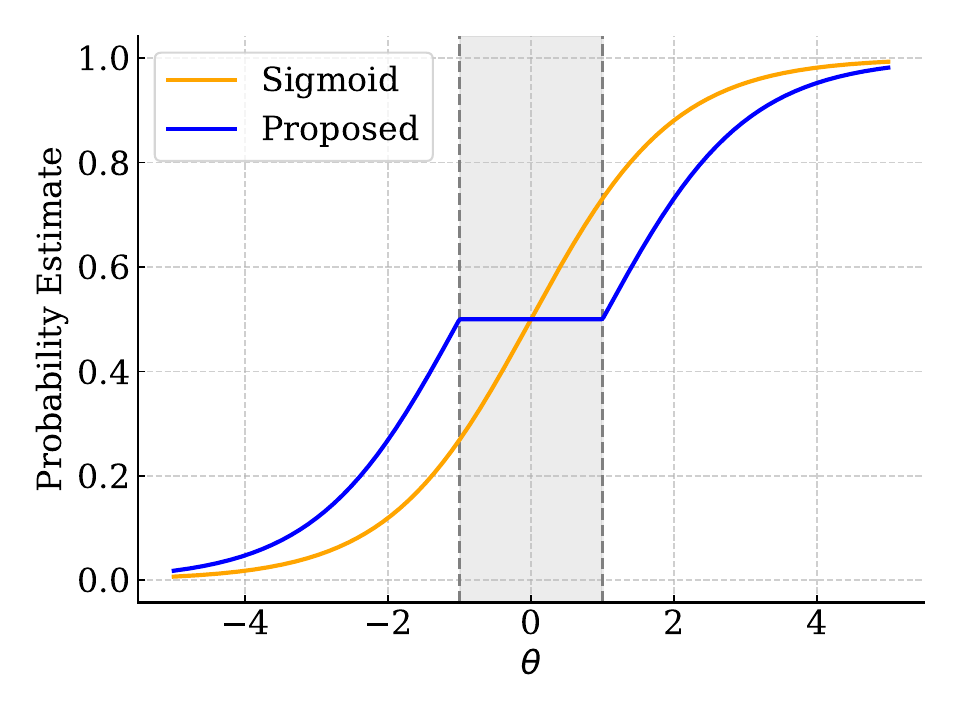}
        {(b) Probability estimators}
    \end{minipage}
    \caption{Visualization of the convolutional Fenchel--Young loss, logistic loss, and probability estimators.}
    \label{Loss_example}
\end{figure}
In this section, we visualize the graphs of convolutional Fenchel--Young losses and the corresponding probability estimator (provided in Theorem~\ref{main_prob}) for binary classification to get more intuition.
For binary classification, we adopt the class space $\mathcal{Y}=\widehat{\mathcal{Y}}=\{-1,+1\}$,
and the target loss $\ell_{01}(y,t)=[\![y\not=t]\!]$.
We use the following decomposition of the target loss: 
$$
\rho(+1)=\frac{1}{2},\; \rho(-1)=-\frac{1}{2},\;
\ell^{\rho}(+1)=-1,\; \ell^{\rho}(-1)=1,\;
\text{and~~} c(+1)=c(-1)=\frac{1}{2}.
$$
With this decomposition, surrogate losses can operate on a univariate prediction, which is convenient for the visualization purpose.

We compare convolutional Fenchel--Young losses with the standard Fenchel--Young losses
generated by the binary Shannon negentropy:
$$\Omega(p)=\left(\frac12+p\right)\ln\left(\frac12+p\right)+\left(\frac{1}{2}-p\right)\ln\left(\frac{1}{2}-p\right),$$
where $\frac{1}{2}+p \eqqcolon \eta_{+1}$ is the positive class probability
and $\frac{1}{2}-p \eqqcolon \eta_{-1}$ is the negative class probability.
The corresponding convolutional negentropy is as follows:
$$\Omega_{T}(p)=\left(\frac12+p\right)\ln\left(\frac12+p\right)+\left(\frac{1}{2}-p\right)\ln\left(\frac{1}{2}-p\right)+\max\{p,-p\},$$
where $\max\{p,-p\}=\frac{1}{2}-\min\{\eta_{+1},\eta_{-1}\}$ 
is the negative Bayes 0-1 risk of class probability $(\eta_{+1},\eta_{-1})=(\frac{1}{2}+p,\frac{1}{2}-p)$. 
Then, the conjugate of convolutional negentropy can be written as follows:
$$\Omega^{*}_{T}(\theta)=
\begin{cases}
\ln(1+\exp(\theta+1))-\frac{\theta+1}{2},&\text{if $\theta<-1$}\\
\ln(2),&\text{if $-1\leq\theta\leq1$} \\
\ln(1+\exp(\theta-1))-\frac{\theta-1}{2}.&\text{if $1<\theta$}
\end{cases}$$
Correspondingly, the convolutional Fenchel--Young loss is
$$L_{\Omega_{T}}(\theta,y)=
\begin{cases}
\ln(1+\exp(1+\theta))-\frac{\theta(1+y)+1}{2},&\text{if $\theta<-1$}\\
\ln(2)-\frac{\theta y}{2},&\text{if $-1\le\theta\le1$}\\
\ln(1+\exp(\theta-1))-\frac{\theta(1+y)-1}{2}.&\text{if $1<\theta$}\\
\end{cases}$$
We can explicitly write down the gradient of the conjugated entropy as follows: 
$$\nabla_{\theta}\Omega^{*}_{T}(\theta)=
\begin{cases}
\frac{\exp(1+\theta)}{1+\exp(1+\theta)}-\frac{1}{2},&\theta<-1\\
\frac{\exp(\theta-1)}{1+\exp(\theta-1)}-\frac{1}{2},&\theta>1\\
0,&\theta\in[-1,1]
\end{cases}$$
which is the estimator of $\eta_{+1}\rho(+1)+\eta_{-1}\rho(-1)=\frac{\eta_{+1}-\eta_{-1}}{2}=\frac{\eta_{+1}-1+\eta_{+1}}{2}=\eta_{+1}-\frac{1}{2}$ by Theorem~\ref{main_prob}.
Finally, we can use the link function $\nabla_{\theta}\Omega^{*}_{T}(\theta)+\frac{1}{2}$ as the estimator of $\eta_{+1}$.

We show the convolutional Fenchel--Young loss and logistic loss (which is the standard Fenchel--Young loss generated by the binary Shannon negentropy) in Figure~\ref{Loss_example}. 
It can be seen that the convolutional Fenchel--Young loss is linear in the shaded region $\theta\in[-1,1]$, 
while resembling the logistic loss outside of this region.
Compared with the sigmoid function used for probability estimation with logistic loss,
the link function induced by the convolutional Fenchel--Young loss also generates a valid probability estimate in $[0,1]$ for any $\theta\in\mathbb{R}$,
with $\theta\in[-1,1]$ generates constant value $0.5$ as the estimate.

\paragraph{Further discussion.}
In case of binary classification, Figure~\ref{Loss_example}~(a) nicely illustrates that the convolutional Fenchel--Young loss linearly ``extends'' the logistic loss at $\theta=0$, which is the boundary point for binary classification.
This illustration is possible because the above formulation for binary classification operates on the univariate score $\theta\in\mathbb{R}$.
The linear extension at the boundary point aligns with \citet{poly_regret}, which shows the square-root regret lower bound by assuming that a surrogate loss is locally strongly convex around the boundary points.
In general prediction tasks, it is not straightforward to overcome the square-root regret lower bound by such a linear extension because the boundary points for a high-dimensional prediction task can be infinitely many.
By contrast, convolutional Fenchel--Young losses provide a general recipe to get linear surrogate regret bound via infimal convolution.

\section{Additional Empirical Results}
\label{appendix_exp}
\subsection{Multiclass Classification}
To provide empirical validation of our proposed results, we use the loss introduced in Section~\ref{s5} as an example and evaluate its performance on the ImageNet-1k dataset~\citep{ImageNet} using the ResNet-50 architecture~\citep{resnet}. 

\paragraph{Experimental setup.} 
We followed the default configuration of the PyTorch \citep{Pytorch} ImageNet training script. 
Specifically, we used the stochastic gradient descent (SGD) with a momentum of $0.9$ and trained for $120$ epochs with a mini-batch size of $256$. 
The initial learning rate is set to $0.1$ and divided by $10$ every $30$ epochs. 
For comparison, we also report results obtained by using the standard cross-entropy loss under the same configuration.
All experiments were conducted on $8$ GeForce RTX\texttrademark{}~3090 GPUs
, and we report the average validation accuracy over $3$ independent runs.

\begin{table}[h]
\centering
\caption{Comparison between the cross-entropy loss and the proposed loss on ImageNet-1k by using ResNet-50.}
\vspace{0.5em}
\begin{tabular}{lcc}
\toprule
\textbf{Metric} & \textbf{Cross-Entropy Loss} & \textbf{Proposed Loss \eqref{log_loss}} \\
\midrule
Accuracy (\%) & 76.40 & \textbf{76.81} \\
Averaged Running Time / Epoch (s) & 647.22 & 653.63 \\
\bottomrule
\end{tabular}
\label{tab:imagenet_results}
\end{table}

\paragraph{Results and discussion.} 
As shown in Table~\ref{tab:imagenet_results}, our proposed loss slightly outperforms the standard cross-entropy loss in terms of validation accuracy under 5\% t-test. 
This improvement is achieved without any additional hyperparameter tuning, demonstrating the potential of our approach. 
Meanwhile, the computation time per epoch remains comparable between the two methods, which indicates the efficiency of the proposed loss. 
These results confirm the compatibility of our loss with both mini-batch and distributed optimization settings.

\subsection{Classification with Rejection}
We further evaluate the proposed loss \eqref{log_rej} under the classification with rejection (CwR) setting, 
where the rejection cost is fixed at $c = 0.05$. 
The experiments were conducted on the CIFAR-10 and CIFAR-100 datasets \citep{cifar}. 
We adopt the WideResNet-28 architecture~\citep{Wsn} with a widening factor of~4 and a hidden dimension of~50 as the backbone network for all experiments.

\textbf{Experimental setup.} 
We trained each model for 120 epochs on 8~GeForce RTX\texttrademark{}~3090 GPUs with a per-GPU batch size of 128. 
The optimizer is SGD with a momentum of 0.9, weight decay of $5.0\times10^{-4}$, and an initial learning rate of 0.1, which is scheduled by the cosine annealing \citep{cos}. 

For data augmentation, each image is first converted to a tensor, then padded by 4 pixels on each side by using reflection padding, 
followed by random cropping to $32\times32$ and random horizontal flipping. The images are finally normalized by using the dataset-specific mean and standard deviation. 
We report the average system accuracy (one minus averaged 0-1-c loss) and acceptance rate over 5 runs. 
Meanwhile, we also provide the result of ordinary classification using the similar loss \eqref{log_loss} we proposed.
\begin{table}[h]
\centering
\caption{CwR/Classification performance of the proposed losses on CIFAR-10/100 by using WideResNet-28.}
\vspace{0.5em}
\begin{tabular}{lcc}
\toprule
\textbf{Metric} & \textbf{CIFAR-10} & \textbf{CIFAR-100} \\
\midrule
\multicolumn{3}{c}{CwR with \eqref{log_rej}} \\
\midrule
System Accuracy: c=0.05 (\%) & 96.79 & 91.90 \\
Acceptance Rate (\%) & 92.84 & 65.94 \\
Averaged Running Time / Epoch (s) & 5.79 & 6.64 \\
\midrule
\multicolumn{3}{c}{Classification with \eqref{log_loss}} \\
\midrule
Accuracy (\%) & 93.87 & 74.36 \\
Averaged Running Time / Epoch (s) & 6.07 & 7.41 \\
\bottomrule
\end{tabular}
\label{tab:cifar_results}
\end{table}

\textbf{Results and discussion.} 
As shown in Table~\ref{tab:cifar_results}, 
both CIFAR-10 and CIFAR-100 experiments demonstrate that our proposed loss successfully performs classification with rejection
and achieves higher system accuracy while maintaining a reasonable acceptance rate. 
This indicates that the model learns to reject uncertain samples effectively without sacrificing overall performance.
In addition, the average computation time per epoch is even lower than that of ordinary classification, 
which we attribute to $\mathcal{O}(K)$ complexity of gradient computation for the proposed loss. 
This is cheaper than $\mathcal{O}(K\log K)$ cost in standard classification. 
These results confirm the efficiency and practicality of our loss for reliable decision-making under uncertainty.
\section{Surrogate Regret Bound for Class Probability Estimation}
While we focused on the surrogate regret bound for discrete prediction so far,
in this section, we provide similar results on class probability estimation (CPE) tasks for our proposed probability estimator in Theorem \ref{main_prob}. 

\begin{theorem}[Surrogate regret bound of CPE] Consider a target loss with $(\bm\rho,\bm\ell^{\bm\rho})$-decomposition.
For an negentropy function $\Omega:\mathbb{R}^\varrho\to\overline{\mathbb{R}}$, suppose Condition~\ref{c1} holds and $\Omega$ is $\mu$-strongly convex, i.e.:
$$\Omega(\bm{p}) \ge \Omega(\bm{p}') + \langle \bm{g}, \bm{p}-\bm{p}' \rangle + \frac{\mu}{2} \|\bm{p}-\bm{p'}\|^2,~~~\forall\bm{p},\bm{p}'\in\mathrm{dom}(\Omega),~\bm{g}\in\partial\Omega(\bm{p}').$$
Then $L_{\Omega_T}$ admits the following surrogate regret bound for CPE:
\begin{align} 
\frac{\mu}{2}\|\mathbb{E}_{y\sim\bm{\eta}}[\bm{\rho}(y)]-\nabla\Omega_{T}^{*}(\bm{\theta})\|^{2}\leq \mathtt{Regret}_{L_{\Omega_{T}}}\!(\bm{\theta},\bm{\eta}),~~~\text{for any~} (\bm\theta,\bm{\eta})\in\mathbb{R}^\varrho\times\Delta^{K}.
\end{align}
\end{theorem}
\begin{proof}In the proof, we rearrange the regret term based on (A) in Appendix~\ref{cpe_used} and then further decompose it into the difference of a strongly convex function $F(\bm{p})\coloneqq\Omega_{T}(\bm{p})-\bm{\theta}^{\top}\bm{p}$, which is also $\mu$-strongly convex and is minimized at $\nabla\Omega_{T}^{*}(\bm{\theta})$. 
For any $\bm{g}\in\partial F(\nabla\Omega_{T}^{*}(\bm{\theta}))$:
\begin{align*}
&\mathtt{Regret}_{L_{\Omega_{T}}}(\bm{\theta},\bm{\eta})=\Omega_{T}^{*}(\bm{\theta})+\Omega_{T}(\mathbb{E}_{y\sim\bm{\eta}}[\bm{\rho}(y)])-\Bigl\langle\bm{\theta},\mathbb{E}_{y\sim\bm{\eta}}[\bm{\rho}(y)]\Bigr\rangle\\
&=\underbrace{\bm{\theta}^\top\nabla\Omega_{T}^{*}(\bm{\theta})-\Omega_{T}(\nabla\Omega_{T}^{*}(\bm{\theta}))}_{\text{\rm the Fenchel--Moreau theorem}}+\Omega_{T}(\mathbb{E}_{y\sim\bm{\eta}}[\bm{\rho}(y)])-\Bigl\langle\bm{\theta},\mathbb{E}_{y\sim\bm{\eta}}[\bm{\rho}(y)]\Bigr\rangle\\
&=\left(\Omega_{T}(\mathbb{E}_{y\sim\bm{\eta}}[\bm{\rho}(y)])-\Bigl\langle\bm{\theta},\mathbb{E}_{y\sim\bm{\eta}}[\bm{\rho}(y)]\Bigr\rangle\right)-\left(\Omega_{T}(\nabla\Omega_{T}^{*}(\bm{\theta}))-\bm{\theta}^\top\nabla\Omega_{T}^{*}(\bm{\theta})\right)\\
&=F(\mathbb{E}_{y\sim\bm{\eta}}[\bm{\rho}(y)])-F(\nabla\Omega_{T}^{*}(\bm{\theta}))\\
&\geq \underbrace{\langle\bm{g},\mathbb{E}_{y\sim\bm{\eta}}[\bm{\rho}(y)]-\nabla\Omega_{T}^{*}(\bm{\theta})\rangle}_{\geq 0~ \text{according to first order optimality condition}}+\frac{\mu}{2}\|\mathbb{E}_{y\sim\bm{\eta}}[\bm{\rho}(y)]-\nabla\Omega_{T}^{*}(\bm{\theta})\|^{2}\\
&\geq\frac{\mu}{2}\|\mathbb{E}_{y\sim\bm{\eta}}[\bm{\rho}(y)]-\nabla\Omega_{T}^{*}(\bm{\theta})\|^{2}.
\end{align*}
\end{proof}
This result is a special case of \citet[Prop. 3.6]{energy_fy} for the generalized Fenchel--Young loss. 
By bypassing the Bregman divergence lower bound \citep[Prop. 3]{FY} 
in favor of a direct quadratic lower bound, 
the analysis relies solely on the strong convexity of $\Omega$ 
rather than Legendre-type conditions (i.e., essential smoothness and strict convexity). 
This consequently expands the applicability of the negentropy.

\end{document}